\documentclass[11pt]{article}

\usepackage[margin=1in]{geometry}
\usepackage{amsmath,amssymb,amsthm,mathtools}
\usepackage{enumitem}
\usepackage{booktabs}
\usepackage{array}
\usepackage{tabularx}
\usepackage{aliascnt}
\usepackage{microtype}
\usepackage[colorlinks=true,linkcolor=blue,citecolor=blue,urlcolor=blue]{hyperref}
\usepackage[nameinlink,noabbrev]{cleveref}
\title{Active-Trace Complexity Bounds for Moreau--Yosida Unadjusted Langevin Sampling}
\author{Yuchen Xin\textsuperscript{*},
Zhihua Zhang\textsuperscript{$\dagger$}}
\date{}

\newtheorem{theorem}{Theorem}[section]

\newaliascnt{proposition}{theorem}
\newtheorem{proposition}[proposition]{Proposition}
\aliascntresetthe{proposition}

\newaliascnt{lemma}{theorem}
\newtheorem{lemma}[lemma]{Lemma}
\aliascntresetthe{lemma}

\newaliascnt{corollary}{theorem}
\newtheorem{corollary}[corollary]{Corollary}
\aliascntresetthe{corollary}

\newaliascnt{assumption}{theorem}
\newtheorem{assumption}[assumption]{Assumption}
\aliascntresetthe{assumption}

\theoremstyle{definition}
\newaliascnt{definition}{theorem}
\newtheorem{definition}[definition]{Definition}
\aliascntresetthe{definition}

\theoremstyle{remark}
\newaliascnt{remark}{theorem}
\newtheorem{remark}[remark]{Remark}
\aliascntresetthe{remark}

\numberwithin{equation}{section}

\crefname{theorem}{Theorem}{Theorems}
\Crefname{theorem}{Theorem}{Theorems}
\crefname{lemma}{Lemma}{Lemmas}
\Crefname{lemma}{Lemma}{Lemmas}
\crefname{assumption}{Assumption}{Assumptions}
\Crefname{assumption}{Assumption}{Assumptions}
\crefname{definition}{Definition}{Definitions}
\Crefname{definition}{Definition}{Definitions}
\crefname{remark}{Remark}{Remarks}
\Crefname{remark}{Remark}{Remarks}
\crefname{proposition}{Proposition}{Propositions}
\Crefname{proposition}{Proposition}{Propositions}
\crefname{corollary}{Corollary}{Corollaries}
\Crefname{corollary}{Corollary}{Corollaries}
\crefname{appendix}{Appendix}{Appendices}
\Crefname{appendix}{Appendix}{Appendices}

\DeclareMathOperator{\prox}{prox}
\DeclareMathOperator{\Ent}{Ent}
\DeclareMathOperator{\KL}{KL}
\DeclareMathOperator{\tr}{tr}
\DeclareMathOperator{\dist}{dist}
\DeclareMathOperator{\Lip}{Lip}
\DeclareMathOperator{\Var}{Var}
\DeclareMathOperator*{\argmin}{arg\,min}

\newcommand{\R}{\mathbb{R}}
\newcommand{\cP}{\mathcal{P}}

\newcommand{\Id}{I}
\newcommand{\dd}{\,\mathrm{d}}
\newcommand{\law}{\mathcal{L}}
\newcommand{\E}{\mathbb{E}}
\newcommand{\eps}{\varepsilon}
\newcommand{\norm}[1]{\left\lVert #1\right\rVert}
\newcommand{\ip}[2]{\left\langle #1,#2\right\rangle}
\newcommand{\W}{W_2}
\newcommand{\TV}{\mathrm{TV}}

\newcommand{\Pp}{\mathbb{P}}
\newcommand{\vol}{\operatorname{vol}}
\newcommand{\op}{\mathrm{op}}
\newcommand{\rank}{\mathrm{rank}}

\providecommand{\MYULA}{\textsc{myula}}
\providecommand{\PGLA}{\textsc{pgla}}
\providecommand{\PSGLA}{\textsc{psgla}}

\begin{document}
\maketitle

\begingroup
\renewcommand{\thefootnote}{\fnsymbol{footnote}}
\footnotetext[1]{School of Mathematical Sciences, Peking University; email: \texttt{2301110087@pku.edu.cn}}
\footnotetext[2]{School of Mathematical Sciences, Peking University; email: \texttt{zhzhang@math.pku.edu.cn}}
\endgroup

\begin{abstract}
We study the Moreau--Yosida unadjusted Langevin algorithm (MYULA) for the nonsmooth composite target \[ \pi(dx)\propto \exp\{-f(x)-g(x)\}\,dx, \qquad x\in\mathbb R^d, \] where \(f\) is \(m\)-strongly convex with \(L_f\)-Lipschitz gradient and \(g\) is convex and \(G\)-Lipschitz. Let \(g_\lambda\) be the Moreau envelope of \(g\), \(\pi_\lambda\) the corresponding smoothed target, and \(a_\lambda=\operatorname{tr}H_\lambda\), where \(H_\lambda\) is the a.e./weak Hessian of \(g_\lambda\). We show that the leading MYULA discretization error is controlled by the reference active trace \(B_{\mathrm{ref}}\), the average of \(a_\lambda\) along the heat substep of one MYULA update started from \(\pi_\lambda\), rather than by the global curvature bound \(d/\lambda\). If \(M_\lambda\) is an a.e. upper bound for \(a_\lambda\), then, up to logarithmic factors, \[ N \lesssim \frac{1}{m} \left[ L_f + \frac{ \tau_f+G^2+B_{\mathrm{ref}} }{ \varepsilon_{\mathrm{alg}}^2 } + \frac{M_\lambda}{\varepsilon_{\mathrm{alg}}} \right], \qquad \tau_f:= \sup_x\operatorname{tr}\nabla^2 f(x), \] iterations suffice to ensure \(\sqrt m\,W_2(\mu_N,\pi_\lambda)\leq\varepsilon_{\mathrm{alg}}\), where \(\mu_N\) is the law of the \(N\)-th iterate and \(W_2\) is the quadratic Wasserstein distance. We also prove the Moreau-bias bound \[ \sqrt m\,W_2(\pi_\lambda,\pi) \leq \frac{G^2\lambda}{4}. \] Thus, choosing \(\lambda\asymp\varepsilon/G^2\) gives an end-to-end guarantee for \(\pi\). The universal estimate \(B_{\mathrm{ref}}\leq d/\lambda\) yields \(\widetilde O(\varepsilon^{-3})\) accuracy dependence. For the structured piecewise-linear, lasso-type, group, and total-variation penalties considered here, curvature--tube estimates make \(B_{\mathrm{ref}}\) independent of \(\lambda\), yielding \(\widetilde O(\varepsilon^{-2})\) for the same classical MYULA kernel.
\end{abstract}

\tableofcontents

\section{Introduction}
\label{sec:introduction}

Sampling from a probability distribution with density
\begin{equation*}
    \pi(x) \propto \exp\{-f(x)-g(x)\}, \qquad x\in\mathbb R^d,
\end{equation*}
is a fundamental computational task in Bayesian inverse problems, high-dimensional
statistics, and machine learning.  The composite form is particularly common
when the smooth term $f$ represents a data-fidelity or negative
log-likelihood and the convex, possibly nonsmooth term $g$ encodes structural
information such as sparsity, group sparsity, analysis sparsity, or total
variation.  Proximal optimization methods can exploit this structure directly,
but conventional gradient-based Langevin algorithms require a differentiable
potential with a sufficiently regular gradient.  This mismatch has motivated
a broad class of proximal and smoothing-based Markov chain Monte Carlo
methods; see, among others, \cite{pereyra2016proximal,durmus2018efficient,durmus2019analysis}.

A particularly influential method is the Moreau--Yosida unadjusted Langevin
algorithm (\MYULA) of \cite{durmus2018efficient}.  It replaces $g$
by its Moreau envelope $g_\lambda$, targets the smoothed measure
$\pi_\lambda$ (see \eqref{eq:smoothed-target}), and applies the Euler--Maruyama
scheme (see \eqref{eq:myula-update}).  The resulting drift is explicit whenever one can
evaluate $\nabla f$ and $\prox_{\lambda g}$, since
\[
    \nabla g_\lambda(x)
    = \lambda^{-1}\bigl(x-\prox_{\lambda g}(x)\bigr).
\]
This construction has proved especially useful in imaging and sparse Bayesian
inference because it converts proximal primitives already available for
optimization into a simple Langevin sampler.

The standard smoothness description of the Moreau envelope is nevertheless
potentially pessimistic.  Globally,
\[
    \operatorname{Lip}(\nabla g_\lambda)\leq \lambda^{-1},
    \qquad
    0\preceq \nabla^2 g_\lambda(x)\preceq \lambda^{-1}\Id
    \quad\text{for a.e. }x,
\]
so a worst-case analysis treats the full trace of the Moreau curvature as
being as large as $d/\lambda$ everywhere.  At the same time, decreasing
$\lambda$ is necessary to reduce the discrepancy between $\pi_\lambda$ and
the original target $\pi$.  This creates the familiar tension between
regularization bias and discretization stability: a smaller smoothing
parameter improves the target approximation but apparently makes the
Langevin dynamics uniformly stiffer.

For many structured penalties, however, the curvature $\lambda^{-1}$ is not
present throughout the state space.  For the scalar absolute value, for
example, $g_\lambda''(x)=\lambda^{-1}$ only inside a threshold interval of
width $O(\lambda)$ and vanishes outside that interval.  Coordinatewise
piecewise-linear penalties generate thin slabs; group penalties generate a
small central ball together with integrable tangential curvature; and
polyhedral analysis penalties generate neighborhoods of active faces.  Thus,
the largest Moreau curvature and the probability of encountering it are
coupled.  A bound based only on $\sup_x\|\nabla^2g_\lambda(x)\|$ discards this
coupling and charges every Langevin step for curvature that may be visited
only with probability $O(\lambda)$.

This paper develops a distribution-dependent analysis that retains that
coupling.  Let $H_\lambda$ be a measurable representative of the a.e./weak
Hessian of the $C^{1,1}$ function $g_\lambda$.  We
introduce the \emph{Moreau active trace} (see Definition~\ref{def:active-trace})
\begin{equation*}
    a_\lambda(x)
    := \operatorname{tr}H_\lambda(x)
    = \frac{1}{\lambda}
      \operatorname{tr}\!\left(\Id-D\prox_{\lambda g}(x)\right)
    \quad\text{for a.e. }x.
\end{equation*}
In addition to being the trace of the local Moreau curvature, the quantity
$\lambda a_\lambda(x)$ is the total local shrinkage of the proximal map. Therefore, it describes how many directions are locally suppressed by the active
structure of $g$.  Although the universal bound
$a_\lambda\leq d/\lambda$ always holds, the distributional average of
$a_\lambda$ can be much smaller.

The relevant average in our analysis is determined by one \MYULA{} step.  If
$X_{\mathrm{ref}}\sim\pi_\lambda$, the deterministic Euler map is
$T_h(x)=x-h\nabla(f+g_\lambda)(x)$, and $W_t$ is a Brownian motion independent of
$X_{\mathrm{ref}}$. 
Define
\begin{equation*}
    B_{\mathrm{ref}}
    := \frac{1}{h}\int_0^h
       \mathbb E\!
       \left[a_\lambda\!\left(T_h(X_{\mathrm{ref}})+\sqrt{2}W_t\right)\right]
       \,\mathrm dt,
\end{equation*}
which we call the \emph{reference heat-path active trace} 
(see Definition~\ref{def:reference-active-trace} ). The definition is
algorithm-aware: it averages the weak Moreau curvature over the stationary
input, the Euler drift, and the Gaussian heat interpolation that together
form a single discretization step.  It is also the quantity that appears
naturally when the change of the potential along the heat step is written as
an integral of its weak Laplacian.

Our main theorem shows that this pathwise average, rather than the global
trace bound, controls the leading nonsmooth discretization term. In particular, under the
standing assumptions of Section~\ref{sec:setup},
Theorem~\ref{thm:main-complete} reads
\begin{equation*}
  \Phi_N
  \lesssim
  e^{-mhN/2}\Phi_0
  +\frac{1}{m}
   \left\{(\tau_f+G^2+B_{\mathrm{ref}})h+M_\lambda^2h^2\right\},
  \qquad
  \Phi_k=W_2^2(\mu_k,\pi_\lambda)
         +2h\KL(\mu_k\Vert\pi_\lambda),
\end{equation*}
whenever $0\leq a_\lambda\leq M_\lambda$ a.e.  Equivalently, the
fixed-$\lambda$ iteration complexity 
(see \eqref{eq:complexity-summary}) is, up to logarithmic factors,
\begin{equation*}
    N_\lambda(\varepsilon_{\mathrm{alg}})
    \lesssim
    \frac{1}{m}
    \left(
       L_f
       +\frac{\tau_f+G^2+B_{\mathrm{ref}}}{\varepsilon_{\mathrm{alg}}^2}
       +\frac{M_\lambda}{\varepsilon_{\mathrm{alg}}}
    \right).
\end{equation*}
The leading $\varepsilon_{\mathrm{alg}}^{-2}$ term depends on the reference
active trace; the worst-case curvature $M_\lambda$ survives only in a lower
order transfer term.  Proposition~\ref{prop:general-bias} separately proves
\[
    \sqrt m\,W_2(\pi_\lambda,\pi)
    \leq \frac{G^2\lambda}{4},
\]
so the fixed-$\lambda$ estimate can be combined with a transparent
regularization-bias choice.

This decomposition makes the structural gain explicit.  With the symmetric
error split and $\lambda\asymp\varepsilon/G^2$, replacing $B_{\mathrm{ref}}$ by the
universal bound $d/\lambda$ in our own theorem produces a conservative
contribution of order $dG^2\varepsilon^{-3}$.  In contrast, if $B_{\mathrm{ref}}$ is
bounded independently of $\lambda$, this cubic contribution disappears and
the structured examples of Section~\ref{sec:examples-verification} have
$\widetilde{O}(\varepsilon^{-2})$ dependence on the target accuracy, with the
remaining dimension and geometry factors displayed explicitly in
Propositions~\ref{prop:oned-pl-reference-trace}--\ref{prop:generalized-lasso-reference-trace} and
Corollaries~\ref{cor:weighted-lasso-reference-trace} and~\ref{cor:anisotropic-tv-reference-trace}.

Section~\ref{sec:related-myula} places this gain against a
metric- and assumption-aligned global-smoothness benchmark.
Applying the Wasserstein--EVI analysis of
\cite[Corollary~10]{durmus2019analysis} to the same smoothed potential
\(U_\lambda\), followed by the end-to-end choice
\(\lambda\asymp\varepsilon/G^2\), gives
\(\widetilde O(\varepsilon^{-3})\) dependence, matching the universal
global-trace specialization of our theorem.
For the structured penalties considered here, the active-trace bounds
replace this global curvature charge by an occupation-weighted quantity
and yield \(\widetilde O(\varepsilon^{-2})\) without changing the
classical \MYULA{} transition.
The same section explains why the
\(\varepsilon^{-2}\)-type total-variation bounds of
\cite{durmus2018efficient} are fixed-\(\lambda\) statements rather than
\(\lambda\)-free end-to-end rates.

The mechanism for bounding $B_{\mathrm{ref}}$ is geometric.  Proposition~\ref{prop:reftrace-tube-control}
formalizes a curvature--tube interface: if a component of $a_\lambda$ has
height $O(\lambda^{-1})$ inside an $O(\lambda)$ neighborhood of an active
stratum $\Sigma$, and the reference heat path assigns mass $O(r^q)$ to an
$r$-tube around $\Sigma$, then this component contributes
$O(\lambda^{q-1})$ to $B_{\mathrm{ref}}$.  Codimension-one layers therefore contribute
$O(1)$, while higher-codimension central regions can contribute even less as
$\lambda\downarrow0$.  The slice-density and density-propagation estimates in
Lemmas~\ref{lem:reftrace-slice-density} and~\ref{lem:reftrace-density-propagation} convert this
interface into verifiable bounds without invoking the global Moreau
smoothness scale.

The examples illustrate several forms of active geometry.  For
one-dimensional finite-kink and separable piecewise-linear penalties, the
Moreau curvature is confined to intervals or coordinate slabs whose total
width is $O(\lambda)$.  For the group lasso, a small-ball estimate controls
the central $q/\lambda$ curvature and an inverse-radius estimate controls the
tangential part.  For generalized lasso and anisotropic total variation, the
proximal map is affine on polyhedral cells, with
\[
    D\prox_{\lambda g}(x)=P_{\ker D_A},
    \qquad
    a_\lambda(x)=\frac{\operatorname{rank}(D_A)}{\lambda},
\]
where $A$ is the active row set; the corresponding curvature is localized by
thin row slabs.  These calculations connect the sampling error to active-set
rank, block geometry, and tube probabilities rather than to a uniform
$d/\lambda$ penalty.

Conceptually, the closest smooth analogue is the recent
average-smoothness analysis of Langevin Monte Carlo
\cite{dalalyan2026improved}.  That work shows that the leading LMC
discretization term can depend on an average of coordinatewise smoothness
constants instead of the largest global smoothness constant.  Both results
reflect the fact that isotropic Gaussian noise naturally interacts with a
trace-type measure of curvature.  The distinction is that average smoothness
averages over directions while retaining a supremum over spatial locations.
The reference active trace additionally averages over the locations actually
visited by the one-step heat path.  This extra spatial averaging is essential
for Moreau envelopes of lasso-type penalties: every active coordinate may
have worst-case curvature $1/\lambda$, even though that curvature is confined
to a slab with probability $O(\lambda)$.

Our objective is neither to propose a new sampler nor to give the first
$\widetilde{O}(\varepsilon^{-2})$ result for nonsmooth composite sampling.
Proximal splitting Langevin methods such as \PGLA{} and \PSGLA{} update the
nonsmooth component directly and admit strong guarantees for the original
composite target \cite{durmus2019analysis,salim2020primal}.
Other methods use alternative envelopes, subgradient dynamics,
Metropolis corrections, or stronger restricted-Gaussian oracles.  These
methods change the transition kernel, the target treated by each step, or the
oracle model.  In contrast, the present work keeps the classical \MYULA{}
iteration and asks a narrower question: \emph{which part of the Moreau
curvature is actually paid for by its discretization error?}

The main contributions can be summarized as follows.
\begin{enumerate}
    \item We derive a universal KL--EVI recursion for \MYULA{} in which the
    deterministic gradient-step error depends on the Lipschitz magnitude
    $G^2$ of the nonsmooth penalty, rather than on a power of
    $\lambda^{-1}$; see Lemma~\ref{lem:gradient-evi} and
    Theorem~\ref{thm:active-trace-recursion}.

    \item We introduce the reference heat-path active trace and close the
    recursion through an active-trace transfer argument; see
    Proposition~\ref{prop:reference-TV-transfer} and
    Theorem~\ref{thm:abstract-complexity}.  Together with the Wasserstein
    Moreau-bias estimate in Proposition~\ref{prop:general-bias}, this yields a
    complete guarantee for the original nonsmooth target.

    \item We develop a curvature--tube principle and accompanying
    slice-density estimates that turn localization of weak Moreau curvature
    into quantitative bounds on $B_{\mathrm{ref}}$; see
    Section~\ref{sec:reftrace}.

    \item We verify the framework for finite-kink piecewise-linear penalties,
    weighted lasso, group lasso, generalized lasso, and anisotropic total
    variation.  In these examples $B_{\mathrm{ref}}$ is bounded independently of
    $\lambda$, removing the cubic accuracy contribution generated by the
    universal global-trace substitution.
\end{enumerate}

The remainder of the paper is organized as follows. Section~\ref{sec:related-work} presents the related work.  
Section~\ref{sec:preliminaries} fixes the
nonsmooth second-order conventions and the heat-flow tools used throughout.
Section~\ref{sec:setup} introduces the composite target, the
\MYULA{} update, and the
standing assumptions.  Section~\ref{sec:main-result} proves the active-trace
recursion, closes it
through the reference-path transfer, and controls the Moreau approximation
bias.  Section~\ref{sec:reftrace} develops the
curvature--tube and density-propagation
machinery.  Section~\ref{sec:examples-verification} treats the structured examples, and
the appendices provide the real-analysis details needed for a.e. and weak Hessians.

\section{Related Work}
\label{sec:related-work}

\subsection{Proximal MCMC and the original analysis of \MYULA}
\label{sec:related-myula}

\paragraph{Proximal MCMC and the original MYULA bounds.}
Proximal MCMC was introduced by \cite{pereyra2016proximal} to make
gradient-based sampling applicable to log-concave but nonsmooth models whose
proximal mappings are computationally accessible.  The Moreau--Yosida
unadjusted Langevin algorithm (\MYULA{}) of
\cite{durmus2018efficient} applies ULA to
\[
    U_\lambda=f+g_\lambda,
    \qquad
    \pi_\lambda(\mathrm{d}x)
    \propto
    \exp\{-U_\lambda(x)\}\,\mathrm{d}x.
\]
Its classical analysis uses the global regularity estimate
\begin{equation}
    L_\lambda
    :=
    \operatorname{Lip}(\nabla U_\lambda)
    \leq
    L_f+\lambda^{-1},
    \qquad
    \gamma
    \leq
    L_\lambda^{-1}
    =
    \frac{\lambda}{1+\lambda L_f},
    \label{eq:rw-original-myula-smoothness}
\end{equation}
and, when \(g\) is \(G\)-Lipschitz, the regularization bound
\begin{equation}
    \|\pi_\lambda-\pi\|_{\mathrm{TV}}
    \leq
    \lambda G^2.
    \label{eq:rw-original-myula-tv-bias}
\end{equation}

Let $\mu_n$ be the law of the MYULA iterate, and let
\(\varepsilon_{\mathrm{alg}}\) denote the prescribed error to the fixed smoothed target:
\begin{equation}
    \|\mu_n-\pi_\lambda\|_{\mathrm{TV}}
    \leq
    \varepsilon_{\mathrm{alg}}.
    \label{eq:rw-original-myula-fixed-lambda-error}
\end{equation}

There is a minor notational carryover in the presentation of
Theorems~2--3 of \cite{durmus2018efficient}. Their displayed
total-variation conclusions use the symbol \(\pi\), but the transition
kernel \(R_\gamma\) is generated by \(U_\lambda=f+g_\lambda\), and the
proofs apply the generic smooth-ULA results of
\cite{durmus2017nonasymptotic} with \(U=U_\lambda\). The target after
this specialization is therefore the corresponding Gibbs law
\(\pi_\lambda\), with the approximation error
\(||\pi_\lambda-\pi||_{\mathrm{TV}}\) controlled separately in
Proposition~1. Accordingly, we read the displayed conclusions as
fixed-\(\lambda\) bounds to \(\pi_\lambda\), not as end-to-end bounds to
\(\pi\).

Under this proof-supported interpretation, and renaming the prescribed
fixed-target tolerance as \(\varepsilon_{\mathrm{alg}}\), \cite{durmus2018efficient} reports the worst-case iteration dependences
\[
    O\!\left(
        d^5
        \log^2(\varepsilon_{\mathrm{alg}}^{-1})
        \varepsilon_{\mathrm{alg}}^{-2}
    \right)
\]
under its general coercive-tail condition H3, and
\[
    O\!\left(
        d\log(d)
        \log^2(\varepsilon_{\mathrm{alg}}^{-1})
        \varepsilon_{\mathrm{alg}}^{-2}
    \right)
\]
under the stronger tail-convexity condition H4.  These displayed
orders describe the dependence on the fixed-\(\lambda\) algorithmic
tolerance; the smoothness parameter \(L_\lambda\), together with the
initialization, Lyapunov, tail, and convexity constants in the detailed
bounds, remains part of the problem dependence.

\paragraph{Fixed-\(\lambda\) versus end-to-end accuracy.}
A guarantee for the original nonsmooth target requires combining the
fixed-\(\lambda\) algorithmic error with the Moreau regularization bias:
\begin{equation}
\begin{aligned}
    \|\mu_n-\pi\|_{\mathrm{TV}}
    &\leq
    \|\mu_n-\pi_\lambda\|_{\mathrm{TV}}
    +
    \|\pi_\lambda-\pi\|_{\mathrm{TV}}
    \\
    &\leq
    \varepsilon_{\mathrm{alg}}
    +
    \lambda G^2.
\end{aligned}
    \label{eq:rw-original-myula-total-error}
\end{equation}
For example, to obtain a total tolerance \(\varepsilon\), a
constant-fraction error allocation is
\begin{equation}
    \varepsilon_{\mathrm{alg}}
    =
    \frac{\varepsilon}{2},
    \qquad
    \lambda
    \leq
    \frac{\varepsilon}{2G^2},
    \qquad
    L_{\lambda(\varepsilon)}
    =
    O\!\left(
        L_f+\frac{G^2}{\varepsilon}
    \right).
    \label{eq:rw-original-myula-end-to-end-choice}
\end{equation}
Consequently, the published
\(\varepsilon_{\mathrm{alg}}^{-2}\) summaries in \cite{durmus2018efficient} should not be read as
\(\lambda\)-free end-to-end
\(\widetilde O(\varepsilon^{-2})\) rates for the original target.
For example, the explicit $L_{\lambda(\varepsilon)}^2\varepsilon_{\mathrm{alg}}^{-2}$ factor in the underlying step-size bound formally becomes
\begin{equation}
\begin{aligned}
    O\!\left[
        \left(
            L_f+\frac{G^2}{\varepsilon}
        \right)^2
        \varepsilon^{-2}
    \right],
\end{aligned}
    \label{eq:rw-original-myula-schematic-composition}
\end{equation}
whose highest-order visible term is $O(G^4\varepsilon^{-4})$. This is only a schematic substitution: the remaining constants in \cite{durmus2018efficient} were not optimized uniformly in $\lambda$.

For compactly supported log-concave targets, \cite{brosse2017sampling} performs the smoothing–discretization balance explicitly. There $g$ is an indicator function, the smoothing bias has a different form, and $\lambda$ is chosen of order $\epsilon^2$, up to dimension and geometric factors, leading to \(\widetilde O(\varepsilon^{-6})\) complexity in both total variation and \(W_1\).

\paragraph{A global-smoothness benchmark in the Wasserstein--EVI framework.}
A closely aligned benchmark follows directly from
\cite[Corollary~10]{durmus2019analysis}: after setting
\(U=U_\lambda\), it applies to the same ULA kernel and smoothed target
in \(W_2\) under strong convexity, and its underlying proof uses the
same gradient-step/heat-step decomposition. For an \(m\)-strongly convex
potential with \(L\)-Lipschitz gradient, that result states that, for any
squared-Wasserstein tolerance \(\delta>0\), the constant-step choices
\begin{equation}
    h_\delta
    \leq
    \min\left\{
        \frac{m\delta}{4Ld},
        \frac{1}{L}
    \right\},
    \qquad
    N_\delta
    \geq
    \frac{1}{mh_\delta}
    \log\left(
        \frac{
            2W_2^2(\mu_0,\pi)
        }{
            \delta
        }
    \right)
    \label{eq:rw-dmm-direct-complexity}
\end{equation}
ensure
\[
    W_2^2(\mu_{N_\delta},\pi)
    \leq
    \delta.
\]
Here \(\delta\) is a tolerance for \(W_2^2\), whereas our accuracy
parameter controls \(\sqrt m\,W_2\).

Applying \eqref{eq:rw-dmm-direct-complexity} to
\(U_\lambda=f+g_\lambda\), with
\[
    L=L_\lambda
    \leq
    L_f+\lambda^{-1},
\]
and setting
\[
    \delta
    =
    \frac{\varepsilon_{\mathrm{alg}}^2}{m}
\]
gives
\begin{equation}
    h
    \lesssim
    \min\left\{
        L_\lambda^{-1},
        \frac{
            \varepsilon_{\mathrm{alg}}^2
        }{
            dL_\lambda
        }
    \right\}
    \label{eq:rw-dmm-myula-step}
\end{equation}
and, up to logarithmic factors,
\begin{equation}
\begin{aligned}
    N_\lambda^{\mathrm{glob}}
    (\varepsilon_{\mathrm{alg}})
    =
    \widetilde O\left[
        \frac{L_\lambda}{m}
        \left(
            1+
            \frac{d}{\varepsilon_{\mathrm{alg}}^2}
        \right)
    \right]
    =
    \widetilde O\left[
        \frac{
            L_f+\lambda^{-1}
        }{m}
        \left(
            1+
            \frac{d}{\varepsilon_{\mathrm{alg}}^2}
        \right)
    \right].
\end{aligned}
    \label{eq:rw-dmm-fixed-lambda-myula}
\end{equation}

To obtain an end-to-end guarantee for the original nonsmooth target, combine
\eqref{eq:rw-dmm-fixed-lambda-myula} with the Wasserstein Moreau-bias
estimate in Theorem~\ref{thm:main-complete},
\begin{equation}
    \sqrt m\,W_2(\pi_\lambda,\pi)
    \leq
    \frac{G^2\lambda}{4}.
    \label{eq:rw-our-bias-for-global-benchmark}
\end{equation}
With the symmetric allocation
\[
    \varepsilon_{\mathrm{alg}}
    =
    \varepsilon_{\mathrm{bias}}
    =
    \frac{\varepsilon}{2},
    \qquad
    \lambda
    \asymp
    \frac{\varepsilon}{G^2},
\]
equation~\eqref{eq:rw-dmm-fixed-lambda-myula} yields
\begin{equation}
\begin{aligned}
    N_{\mathrm{glob}}(\varepsilon)
    =
    \widetilde O\left[
        \frac{1}{m}
        \left(
            L_f+\frac{G^2}{\varepsilon}
        \right)
        \left(
            1+\frac{d}{\varepsilon^2}
        \right)
    \right]
    =
    \widetilde O\left(
        \frac{dL_f}{m\varepsilon^2}
        +
        \frac{dG^2}{m\varepsilon^3}
    \right),
\end{aligned}
    \label{eq:rw-direct-global-epsilon-cubic}
\end{equation}
where lower-order terms are suppressed for
\(d\geq1\) and \(0<\varepsilon\leq1\).

Thus, within the Wasserstein--EVI framework of
\cite{durmus2019analysis}, applying the global smoothness
\(L_\lambda\leq L_f+\lambda^{-1}\) to the Moreau-smoothed target yields
the end-to-end benchmark
\(\widetilde O(\varepsilon^{-3})\).

\paragraph{Active-trace refinement within the same proof architecture.}
In \cite{durmus2019analysis}, the global constant \(L\) enters both the
deterministic-step stability estimate and the heat-step energy
increment; for \(U=U_\lambda\), both are therefore charged through
\(L_\lambda\leq L_f+\lambda^{-1}\).
Our analysis separates these two roles: the deterministic step uses
\(\|\nabla g_\lambda\|\leq G\) with the baseline restriction
\(h\lesssim L_f^{-1}\), whereas the heat increment is controlled by
\(\tau_f+B_k\) rather than \(dL_\lambda\).
Transferring \(B_k\) to the stationary reference path yields the
fixed-\(\lambda\) complexity of
Theorem~\ref{thm:main-complete}, 
\begin{equation}
    N_\lambda(\varepsilon_{\mathrm{alg}})
    \lesssim
    \frac{1}{m}
    \left[
        L_f
        +
        \frac{
            \tau_f+G^2+B_{\mathrm{ref}}
        }{
            \varepsilon_{\mathrm{alg}}^2
        }
        +
        \frac{
            M_\lambda
        }{
            \varepsilon_{\mathrm{alg}}
        }
    \right],
    \label{eq:rw-our-fixed-lambda}
\end{equation}
up to logarithmic factors.

If no active-set information is used, the universal estimates
\[
    B_{\mathrm{ref}}
    \leq
    M_\lambda
    \leq
    \frac{d}{\lambda}
\]
and the choice
\(\lambda\asymp\varepsilon/G^2\) give
\begin{equation}
    \frac{B_{\mathrm{ref}}}{\varepsilon^2}
    \lesssim
    \frac{dG^2}{\varepsilon^3}.
    \label{eq:rw-our-global-trace-cubic}
\end{equation}
Thus the global-trace specialization of our theorem matches the
\(\varepsilon^{-3}\) dependence obtained by applying the Wasserstein--EVI bound to \(U_\lambda\), while resolving the smooth
contribution through \(\tau_f\) rather than \(dL_f\).

For the structured penalties studied in
Sections~\ref{sec:reftrace}--\ref{sec:examples-verification}, the
curvature--tube estimates instead give
\begin{equation}
    B_{\mathrm{ref}}
    \leq
    A_g,
    \label{eq:rw-structured-reference-trace}
\end{equation}
where \(A_g\) is independent of \(\lambda\).  If, in addition,
\[
    M_\lambda
    \leq
    \frac{r_g}{\lambda},
\]
then
\begin{equation}
    \frac{M_\lambda}{\varepsilon}
    \lesssim
    \frac{r_gG^2}{\varepsilon^2}
    \qquad
    \text{when }
    \lambda\asymp\frac{\varepsilon}{G^2}.
    \label{eq:rw-residual-active-trace-term}
\end{equation}
Substitution into
\eqref{eq:rw-our-fixed-lambda} therefore gives
\begin{equation}
    N_{\mathrm{active}}(\varepsilon)
    =
    \widetilde O\left[
        \frac{1}{m}
        \left\{
            L_f
            +
            \frac{
                \tau_f+A_g+(1+r_g)G^2
            }{
                \varepsilon^2
            }
        \right\}
    \right].
    \label{eq:rw-active-trace-quadratic}
\end{equation}

Suppressing dimensions, problem constants, and logarithms, the
resulting accuracy comparison is
\begin{equation}
\begin{array}{rcl}
\text{Wasserstein--EVI bound applied globally to \(U_\lambda\)}
&:&
\widetilde O(\varepsilon^{-3}),
\\[1mm]
\text{structured active-trace analysis of the same MYULA kernel}
&:&
\widetilde O(\varepsilon^{-2}).
\end{array}
    \label{eq:rw-direct-accuracy-comparison}
\end{equation}
The gain comes from replacing the global \(O(\lambda^{-1})\) curvature
charge in the leading error term by the occupation-weighted
\(B_{\mathrm{ref}}\), which is uniform in \(\lambda\) for the
structured examples.

\paragraph{Related proximal-MCMC developments.}
Subsequent work has developed the proximal-MCMC viewpoint in several
algorithmic directions. Moreau smoothing has been combined with
stabilized integrators and relaxed proximal-point iterations
\cite{pereyra2020accelerating,klatzer2024accelerated};
inexact-proximal analyses account for iterative or approximate evaluation
of the proximal map \cite{ehrhardt2024proximal}; and successive-Moreau
schemes vary the smoothing scale during sampling rather than fixing a
single \(\lambda\) \cite{habring2026diffusion}. These works modify the
integrator, the accuracy of the proximal computation, or the smoothing
schedule. The active-trace theory is complementary: it keeps the
classical fixed-\(\lambda\) MYULA kernel and sharpens the geometric
quantity that controls its discretization error.

\subsection{Average smoothness and trace-sensitive Langevin bounds}
\label{sec:related-average-smoothness}

The closest result in spirit is the average-smoothness theory of
\cite{dalalyan2026improved} for standard LMC on smooth strongly
log-concave targets.  Let $M_\infty$ denote the global Lipschitz constant of
the gradient, and let
\[
    M_{\mathrm{av}}=\frac{1}{d}\sum_{j=1}^d M_j
\]
be the average of coordinatewise smoothness constants.  When the potential is
twice differentiable, $M_\infty$ uniformly controls the largest Hessian
eigenvalue, whereas $M_{\mathrm{av}}$ controls a coordinatewise trace-type
quantity.  Their main constant-step estimate has the form
\[
    W_2^2(\nu_k,\pi)
    \leq e^{-2mkh}W_2^2(\nu_0,\pi)
       +\frac{(M_{\mathrm{av}}+m)hd}{2m},
    \qquad h\leq M_\infty^{-1}.
\]
Consequently, the leading discretization term depends on an average condition
number, although the maximal smoothness remains in the step-size restriction.

The active-trace result shares two features with this analysis.  Both replace
a maximal-curvature contribution in the leading error by a trace-like
quantity, and both can be interpreted through the isotropic Gaussian noise
in the Langevin update.  However, the two averaging operations are different.
The constants $M_j$ in average smoothness still take a supremum over all
spatial locations.  In particular, for a coordinatewise soft-threshold
Moreau envelope one still has $M_j=1/\lambda$ for every penalized coordinate,
regardless of how narrow the threshold region is.  By contrast,
\[
    B_{\mathrm{ref}}
    =\frac1h\int_0^h
       \mathbb E\,\operatorname{tr}H_\lambda(Y_t^{\mathrm{ref}})\,\mathrm dt
\]
averages over both directions and the spatial occupation measure of the
reference heat path.  It can therefore remain $O(1)$ even when
$\sup_x\operatorname{tr}H_\lambda(x)=O(d/\lambda)$.

There is also a regularity distinction.  Average-smoothness LMC is formulated
for a smooth potential using directional first-order inequalities or
classical Hessian quantities.  The Moreau envelope in the present work is
only $C^{1,1}$ in general.  Our trace is consequently defined through the
a.e./weak Hessian, and the heat energy identity is proved at this regularity.
Thus, one concise positioning statement is that the present framework is a
\emph{nonsmooth, distribution-weighted, active-set refinement of the
average-curvature principle}: average smoothness averages across directions
but remains worst-case over space, whereas active trace also averages over
where the discretized path goes.

Trace, Frobenius, and average-curvature quantities have appeared in other
refinements of Langevin discretization theory, especially when stronger
second- or third-order regularity is available.  Those results support the
broader message that operator-norm smoothness can be too coarse for sampling.
The specific feature here is the cancellation between a
$\lambda^{-1}$ curvature height and the $O(\lambda)$ occupation probability
of active tubes.

\subsection{Proximal splitting methods for the original composite target}
\label{sec:related-splitting}

A separate line of work treats the nonsmooth component through a proximal
splitting step rather than by first replacing the target with
$\pi_\lambda$.  The Wasserstein gradient-flow and convex-optimization
viewpoint of \cite{durmus2019analysis} gives a systematic
analysis of Langevin algorithms and includes nonsmooth proximal schemes.  In
the strongly convex composite setting, proximal stochastic gradient Langevin
algorithms and their primal--dual interpretation yield nonasymptotic
$W_2$ guarantees, including $O(\varepsilon^{-2})$ iteration complexity under
the assumptions of \cite{salim2019stochastic,salim2020primal}.
These methods can also accommodate extended-valued regularizers, such as
indicators of convex constraints, that fall outside the finite-valued
Lipschitz assumption used here.

This literature is crucial for delimiting our claim.  The present paper does
not give the first $O(\varepsilon^{-2})$ guarantee for sampling a nonsmooth
strongly log-concave distribution.  The distinction is algorithmic and
analytic.  A typical proximal splitting update applies a proximal map to a
noisy or forward-gradient proposal and is designed to approximate or preserve
the original composite target.  \MYULA{} instead runs an explicit Euler step
for the smooth surrogate $f+g_\lambda$ and incurs a separate smoothing bias.
Our question is therefore not whether a different proximal Langevin scheme
can attain a better generic rate, but whether the widely used \MYULA{}
iteration must pay the global $1/\lambda$ curvature at every step.  The
active-trace theorem shows that, for structured penalties, the leading
answer is no.

A useful way to present the relationship is by four axes: the transition
kernel, the reference target of one step, the oracle, and the error metric.
\MYULA{} and proximal splitting methods may both use one gradient and one
proximal evaluation per iteration, but the position of the prox operation and
the invariant or approximate target differ.  Accordingly, complexity
statements should not be ranked solely by their exponent in $\varepsilon$.
They should also report whether the guarantee concerns $\pi$ or
$\pi_\lambda$, whether smoothing bias is present, whether $g$ may be
extended-valued, and whether the result is in $W_2$, KL, or total variation.

\subsection{Active manifolds, proximal Jacobians, and degrees of freedom}
\label{sec:related-active-manifolds}

The geometric interpretation of $a_\lambda$ is related to the literature on
partial smoothness, active manifolds, and identification in nonsmooth
optimization.  Partly smooth regularizers behave smoothly along an active
manifold and sharply in normal directions; proximal and forward--backward
algorithms can identify this manifold and subsequently exhibit a lower
dimensional local dynamics \cite{lewis2002active,liang2017activity}.
For polyhedral and sparsity-promoting penalties, the Jacobian of the proximal
map is piecewise constant or admits an explicit tangent-space description.

Closely related formulas also arise in statistical degrees-of-freedom
calculations.  The divergence or trace of an estimator's Jacobian measures
the effective number of fitted degrees of freedom, and this program has been
developed for lasso, generalized lasso, analysis sparsity, group structure,
and more general partly smooth regularizers
\cite{tibshirani2012degrees,vaiter2017degrees}.
The identity
\[
    \lambda a_\lambda(x)
    =d-\operatorname{tr}D\prox_{\lambda g}(x)
\]
therefore has a natural interpretation: it is the complement of the local
proximal degrees of freedom, or the total number of directions locally
contracted by the regularizer.  In the generalized-lasso cells of
Subsection~\ref{subsec:generalized-lasso-example}, this becomes exactly
$\lambda a_\lambda(x)=\operatorname{rank}(D_A)$.

The proximal-Jacobian and active-manifold identities themselves are not the
novelty claimed here.  The new step is to insert this local codimension into a
Langevin discretization inequality and then average it over a stochastic
heat path.  The curvature--tube interface additionally quantifies how the
normal geometry of an active stratum interacts with its occupation
probability.  Thus, the paper connects two previously separate uses of
proximal geometry: local dimension and sensitivity in optimization/statistics,
and nonasymptotic discretization error in sampling.

\subsection{Alternative envelopes and smoothing-based Langevin methods}
\label{sec:related-envelopes}

Moreau smoothing is only one way to regularize a composite potential.  The
forward--backward envelope has been used to construct Langevin schemes that
retain additional optimization structure, including preservation of the MAP
point under suitable conditions \cite{eftekhari2023forward,ghaderi2024smoothing}.
Bregman--Moreau envelopes and Bregman proximal maps extend this idea to
non-Euclidean geometries and relative-smoothness settings
\cite{lau2022bregman}.  These methods alter the surrogate potential and
therefore alter the curvature object that enters the discretization analysis.
The reference active-trace idea suggests a possible extension: for another
envelope, one may seek the trace of its generalized local curvature and
average it over the associated one-step interpolation rather than bounding it
uniformly.

Other work focuses on numerical implementation rather than the choice of
envelope.  Stabilized explicit schemes permit larger stability domains for
stiff smoothed drifts \cite{pereyra2020accelerating}, relaxed
proximal-point algorithms modify the implicitness or acceleration of the
update \cite{klatzer2024accelerated}, and inexact-proximal methods quantify the
error caused by terminating an inner proximal solver
\cite{ehrhardt2024proximal}.  These concerns are orthogonal to
the present analysis, which assumes exact evaluations of
$\prox_{\lambda g}$ and asks how the exact Moreau curvature is weighted by the
path distribution.

\subsection{Direct nonsmooth sampling and stronger oracle models}
\label{sec:related-other-nonsmooth}

Several methods avoid a fixed Moreau approximation altogether.  Subgradient
Langevin schemes replace the gradient of the nonsmooth component by a
subgradient or a primal--dual construction and analyze the resulting
nonsmooth dynamics directly \cite{habring2024subgradient}.  Other
algorithms use proximal proposals inside a Metropolis--Hastings correction,
thereby targeting the original distribution exactly at stationarity
\cite{mou2022efficient}.  These methods answer a
different question from ours: whether one can sample the original nonsmooth
target without accepting a fixed regularization bias.

At the other end of the oracle spectrum, restricted-Gaussian-oracle methods
assume the ability to sample from densities proportional to
\[
    \exp\!\left\{-g(x)-\frac{\|x-y\|^2}{2\eta}\right\},
\]
which is a sampling analogue of a proximal evaluation.  Recent
proximal-gradient samplers achieve polylogarithmic dependence on the target
precision in strongly log-concave composite problems under this stronger
oracle \cite{liu2026proximal}.  Such results are important
benchmarks for composite sampling but are not directly comparable to
\MYULA{}, whose basic iteration uses a deterministic proximal map and one
Gaussian increment.  Any empirical or theoretical comparison should make the
oracle cost explicit.

In summary, the present paper occupies a specific position within this
landscape.  It does not replace the many algorithmic strategies for
nonsmooth sampling.  It refines the theory of one established strategy by
showing that the leading discretization cost can be governed by an
occupation-weighted, active-set measure of Moreau curvature.  The closest
analytic precedent is average-smoothness LMC; the closest algorithmic
alternatives are proximal splitting methods; and the closest geometric
precedents are active-manifold and degrees-of-freedom analyses of proximal
maps.  The active-trace framework combines these three viewpoints in a form
adapted to the stochastic heat step of \MYULA{}.

\section{Preliminaries}
\label{sec:preliminaries}

\subsection{Basic notation}
\label{subsec:basic-notation}

All probability measures are defined on $\R^d$ equipped with its Borel
$\sigma$-algebra.  The Euclidean norm and inner product are denoted by
$\norm{\cdot}$ and $\ip{\cdot}{\cdot}$.  For $\mu,\nu\in\cP_2(\R^d)$, the
quadratic Wasserstein distance is
\begin{equation}
   \W^2(\mu,\nu)
   := \inf_{(X,Y):\,\law(X)=\mu,\,\law(Y)=\nu} \E\norm{X-Y}^2 .
\end{equation}
If $\mu\ll \nu$, the relative entropy is
\begin{equation}
  \KL(\mu\Vert \nu) := \int \log\!\left(\frac{\dd\mu}{\dd\nu}\right)\dd\mu,
\end{equation}
and otherwise $\KL(\mu\Vert\nu):=+\infty$.  If
$\mu(\dd x)=\rho(x)\dd x$, its Lebesgue entropy is
\begin{equation}
   \Ent(\mu) := \int_{\R^d} \rho(x)\log \rho(x)\dd x,
\end{equation}
with the convention $\Ent(\mu)=+\infty$ if the expression is not well-defined.

The total variation distance is
\begin{equation}
   \norm{\mu-\nu}_{\TV}:=\sup_A |\mu(A)-\nu(A)|,
\end{equation}
where the supremum is over Borel sets.  If $\mu$ and $\nu$ have densities $p$
and $q$, then
\begin{equation}
   \norm{\mu-\nu}_{\TV}=\frac12\int_{\R^d}|p(x)-q(x)|\dd x.
\end{equation}

For a measurable map $T:\R^d\to\R^d$ and a probability measure $\mu$, the
pushforward $T_\#\mu$ is the law of $T(X)$ when $X\sim\mu$.  

Let $(P_t)_{t\ge 0}$ be the heat semigroup with generator $\Delta$:
\begin{equation}\label{eq:heat-semigroup}
   P_t\varphi(x) = \E\,\varphi(x+\sqrt{2t}Z), \qquad Z\sim N(0,\Id_d).
\end{equation}
Thus, if $\nu_t=\nu_0P_t$, then $\nu_t$ is the law of
$Y_t=Y_0+\sqrt{2t}Z$ for $Y_0\sim\nu_0$ independent of $Z$.

Throughout the paper, $C,c\in(0,\infty)$ denote universal constants whose value
may change from line to line.  They never depend on $d,\lambda,h,k$ or on the
particular measures under consideration, unless explicitly stated otherwise.

For symmetric matrices $A,B\in\R^{d\times d}$, we write $A\preceq B$ if
$v^\top Av\le v^\top Bv$ for every $v\in\R^d$.  This is the Loewner order, and
we write $\norm{A}_{\op}:=\sup_{\norm{v}=1}\norm{Av}$.

\subsection{Nonsmooth calculus for Lipschitz and convex functions}
\label{subsec:nonsmooth-calculus}

For ordinary notation, if $T:\R^{d_1}\to\R^{d_2}$ is differentiable at $x$, then
$DT(x)$ denotes its Jacobian matrix:
\begin{equation}
   DT(x)_{ij}=\frac{\partial T_i}{\partial x_j}(x).
\end{equation}

We use the global convention
\[
C^{1,1}(\mathbb R^d)
:=
\{V\in C^1(\mathbb R^d): \nabla V
\text{ is globally Lipschitz}\}.
\]
For $V\in C^{1,1}(\mathbb R^d)$, a standard theorem of Rademacher (reviewed in \Cref{app:rademacher}) says that \(\nabla V\) is differentiable a.e.; at those points we write
\[
   \nabla^2 V(x):=D(\nabla V)(x).
\]
This is the a.e. Hessian.  The same object can also be described as the
$L^\infty$ weak Hessian.  Here ``weak'' means defined through integration by
parts rather than by pointwise second derivatives, and $L^\infty$ means bounded
outside a null set.  A formal explanation is given in \Cref{app:weak-derivatives}.  In the main text, every Hessian of $g_\lambda$ is
understood in this a.e./weak sense, never as an everywhere classical $C^2$
Hessian.

If $g$ is convex, its subdifferential at $x$ is
\begin{equation}
   \partial g(x)
   :=\{s\in\R^d:\ g(y)\ge g(x)+\ip{s}{y-x}\text{ for all }y\in\R^d\}.
\end{equation}
The subdifferential is monotone: if $s\in\partial g(x)$ and
$t\in\partial g(y)$, then
\begin{equation}
   \ip{s-t}{x-y}\ge0.
\end{equation}

\subsection{Moreau--Yosida regularization and weak second-order structure}
\label{subsec:moreau-structure}

The Moreau envelope is the smoothing device used throughout the paper.  Let $g:\R^d\to(-\infty,+\infty]$ be proper, lower semicontinuous, and convex.
For $\lambda>0$, define the Moreau envelope and proximal map by
\begin{equation}
  g_\lambda(x)
  := \inf_{y\in\R^d}\left\{ g(y)+\frac{\norm{x-y}^2}{2\lambda}\right\},
  \qquad
  p_\lambda(x) := \prox_{\lambda g}(x)
  := \argmin_{y\in\R^d}\left\{ g(y)+\frac{\norm{x-y}^2}{2\lambda}\right\}.
\end{equation}
The following facts are standard in convex analysis and monotone operator
theory; see, for example, \cite[Chs.~12 and 23]{bauschke2020correction} and
\cite[Ch.~1.G]{rockafellar1998variational}. Detailed justification is given in \Cref{app:moreau-second-order}.

\begin{lemma}[Moreau regularity and a.e. Hessian]
\label{lem:moreau-regularity}
Let $g:\R^d\to(-\infty,+\infty]$ be proper, lower semicontinuous, and convex.
For every $\lambda>0$, the proximal map $p_\lambda$ is single-valued, and the
Moreau envelope $g_\lambda$ is convex and belongs to $C^{1,1}(\R^d)$.  Moreover,
\begin{equation}
  \nabla g_\lambda(x)=\frac{x-p_\lambda(x)}{\lambda},
  \qquad
  \Lip(\nabla g_\lambda)\le \lambda^{-1}.
\end{equation}
There exists a set \(E_\lambda\subset\mathbb R^d\), whose complement is
Lebesgue-null, such that the following statements hold for every
\(x\in E_\lambda\). The map \(\nabla g_\lambda\) is differentiable at \(x\);
writing
\[
\nabla^2 g_\lambda(x):=D(\nabla g_\lambda)(x),
\]
the matrix \(\nabla^2 g_\lambda(x)\) is symmetric and satisfies
\[
0\preceq \nabla^2 g_\lambda(x)\preceq \lambda^{-1}I_d.
\]
Furthermore, \(p_\lambda\) is differentiable at \(x\), and
\[
Dp_\lambda(x)
=
I_d-\lambda\nabla^2 g_\lambda(x),
\qquad
\nabla^2 g_\lambda(x)
=
\lambda^{-1}\{I_d-Dp_\lambda(x)\}.
\]

\end{lemma}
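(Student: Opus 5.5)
The plan is to derive everything from the firm nonexpansiveness of the proximal map, then invoke Rademacher's theorem. First, single-valuedness. For fixed $x$, the objective $y\mapsto g(y)+\norm{x-y}^2/(2\lambda)$ is proper, lower semicontinuous and $\lambda^{-1}$-strongly convex, so it is coercive and has exactly one minimizer. Hence $p_\lambda$ is well defined. Its optimality condition is $(x-p_\lambda(x))/\lambda\in\partial g(p_\lambda(x))$.

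Applying monotonicity of $\partial g$ to this inclusion at two points $x,y$ gives firm nonexpansiveness:
\[
\norm{p_\lambda(x)-p_\lambda(y)}^2\le\ip{p_\lambda(x)-p_\lambda(y)}{x-y}.
\]
The same holds for $\Id-p_\lambda$. In particular both $p_\lambda$ and $\Id-p_\lambda$ are $1$-Lipschitz.

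Next, the gradient formula. Write $p=p_\lambda(x)$ and $s=(x-p)/\lambda$.
\begin{itemize}
\item Upper bound: evaluating the infimum at $y'=p$ gives $g_\lambda(x')\le g(p)+\norm{x'-p}^2/(2\lambda)$. Expanding, $g_\lambda(x')\le g_\lambda(x)+\ip{s}{x'-x}+\norm{x'-x}^2/(2\lambda)$.
\item Lower bound: the subgradient inequality $g(y')\ge g(p)+\ip{s}{y'-p}$, combined with minimizing the quadratic in $y'$, gives $g_\lambda(x')\ge g_\lambda(x)+\ip{s}{x'-x}$.
\end{itemize}
These two bounds show that $g_\lambda$ is differentiable with $\nabla g_\lambda(x)=s$. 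They also show that $g_\lambda$ is convex, since it is a supremum of affine minorants touching at each point; alternatively, convexity follows as an infimal convolution of convex functions.

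From the gradient formula, $\nabla g_\lambda=\lambda^{-1}(\Id-p_\lambda)$. Since $\Id-p_\lambda$ is $1$-Lipschitz, $\Lip(\nabla g_\lambda)\le\lambda^{-1}$, and therefore $g_\lambda\in C^{1,1}$.

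For the second-order statements, apply Rademacher's theorem (\Cref{app:rademacher}) to the Lipschitz map $\nabla g_\lambda$. This gives a full-measure set $E_\lambda$ on which $\nabla g_\lambda$ is differentiable. Since $p_\lambda=\Id-\lambda\nabla g_\lambda$, the map $p_\lambda$ is differentiable on the same set, and $Dp_\lambda=\Id-\lambda\nabla^2 g_\lambda$ there; this identity can be rearranged either way. The Loewner bounds come from the inequalities above:
\begin{itemize}
\item Lower bound $0\preceq\nabla^2 g_\lambda(x)$: gradient monotonicity of the convex $g_\lambda$ gives $\ip{\nabla g_\lambda(x+tv)-\nabla g_\lambda(x)}{tv}\ge0$. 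Divide by $t^2$ and let $t\to0$ to get $v^\top\nabla^2g_\lambda(x)v\ge0$.
\item Upper bound $\nabla^2 g_\lambda(x)\preceq\lambda^{-1}\Id$: firm nonexpansiveness of $\Id-p_\lambda$ reads $\lambda\norm{\nabla g_\lambda(x)-\nabla g_\lambda(y)}^2\le\ip{\nabla g_\lambda(x)-\nabla g_\lambda(y)}{x-y}$. Put $y=x+tv$, divide by $t^2$, and let $t\to0$ to get $\lambda\norm{Hv}^2\le v^\top Hv$, where $H=\nabla^2g_\lambda(x)$. This gives $v^\top Hv\le\lambda^{-1}\norm{v}^2$ only once $H$ is known to be symmetric, via the spectral theorem.
\end{itemize}

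Symmetry is the main obstacle, since pointwise differentiability of a gradient does not automatically give a symmetric Jacobian. My first approach is Alexandrov's theorem for the convex function $g_\lambda$. It says $g_\lambda$ has a second-order Taylor expansion with a symmetric matrix at a.e. point. At any point where both that expansion holds and $\nabla g_\lambda$ is differentiable, the two matrices must coincide, so we shrink $E_\lambda$ to the intersection of the two full-measure sets.

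If I want to avoid Alexandrov, the fallback is a weak-derivative argument. The distributional Hessian $\partial_i\partial_j g_\lambda$ is symmetric, since distributional derivatives commute. It equals the $L^\infty$ function $D(\nabla g_\lambda)$, because a Lipschitz function's a.e. derivative coincides with its weak derivative (\Cref{app:weak-derivatives}). Hence $\partial_j(\nabla g_\lambda)_i=\partial_i(\nabla g_\lambda)_j$ a.e., and we intersect $E_\lambda$ with this symmetry set. Once symmetry holds, the inequality $\lambda H^2\preceq H$ forces every eigenvalue of $H$ into $[0,\lambda^{-1}]$, which completes the Loewner bounds.
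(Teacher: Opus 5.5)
Your proof is correct, with the symmetry step carried by your weak-derivative fallback. Its second-order part follows the same architecture as the paper's proof. The paper likewise applies Rademacher to $\nabla g_\lambda$, gets positive semidefiniteness from gradient monotonicity, obtains symmetry by a weak/mollification argument, and reads off $Dp_\lambda$ from $p_\lambda=\Id-\lambda\nabla g_\lambda$. There are two differences. First, you derive the first-order Moreau facts (uniqueness of the prox, firm nonexpansiveness, the gradient formula, convexity, $\Lip(\nabla g_\lambda)\le\lambda^{-1}$) from scratch, whereas the paper cites Bauschke--Combettes and Rockafellar--Wets. Second, you get the upper Loewner bound from $\lambda H^2\preceq H$, whereas the paper uses $\norm{Hv}\le\lambda^{-1}\norm{v}$ together with symmetry and positivity. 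The first difference makes your proof self-contained at the cost of length.

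Your symmetry discussion needs two corrections. First, the Alexandrov route, as you justify it, is circular. Suppose $\nabla g_\lambda$ is differentiable at $x$ with Jacobian $J$. Integrating along segments gives $g_\lambda(x+u)=g_\lambda(x)+\ip{\nabla g_\lambda(x)}{u}+\tfrac12 u^\top Ju+o(\norm{u}^2)$. Uniqueness of the quadratic term then identifies Alexandrov's matrix $A$ only with $\tfrac12(J+J^\top)$, not with $J$, so ``the two matrices must coincide'' presupposes what you want. That route would need the stronger form of Alexandrov's theorem, in which the gradient itself is differentiable with derivative $A$; as written, only your fallback proves symmetry.

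Second, symmetry is not actually an obstacle. If a function is differentiable near $x$ and its gradient is Fr\'echet differentiable at $x$, the Jacobian there is automatically symmetric. To see this, set $\Delta(t)=g_\lambda(x+tu+tv)-g_\lambda(x+tu)-g_\lambda(x+tv)+g_\lambda(x)$. The mean value theorem applied to $s\mapsto g_\lambda(x+su+tv)-g_\lambda(x+su)$ gives $\Delta(t)=t^2u^\top Jv+o(t^2)$. Since $\Delta$ is symmetric in $(u,v)$, this forces $u^\top Jv=v^\top Ju$. The familiar examples with unequal mixed partials are ones where the gradient is not differentiable at the point. So you need not shrink $E_\lambda$; the paper's mollification argument in fact gives symmetry at every differentiability point. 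The upper bound also does not need symmetry, since $\lambda\norm{Hv}^2\le v^\top Hv\le\norm{v}\norm{Hv}$ already yields $v^\top Hv\le\lambda^{-1}\norm{v}^2$.
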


We next fix the curvature quantity that will enter all heat-flow estimates.
By \Cref{lem:moreau-regularity}, we choose a measurable
representative \(H_\lambda\) of the a.e./weak
Hessian of \(g_\lambda\) such that $H_\lambda(x)=\nabla^2g_\lambda(x)$ for a.e. $x$.

\begin{definition}[Active trace]
\label{def:active-trace}
The Moreau active trace is
\[
  a_\lambda(x):=\operatorname{tr}H_\lambda(x).
\]
Thus
\[
  0\le a_\lambda(x)\le d/\lambda
  \quad dx\text{-a.e.}
\]
By \Cref{lem:moreau-regularity},
\[
  a_\lambda(x)
  =
  \lambda^{-1}\operatorname{tr}\{I_d-Dp_\lambda(x)\}
  \quad dx\text{-a.e.}
\]
Hence \(a_\lambda\) is the trace of the local Moreau curvature. Equivalently,
up to the factor \(\lambda^{-1}\), it measures the total local shrinkage of
the proximal map.
\end{definition}

\begin{lemma}[Moreau envelope preserves Lipschitz constants]
\label{lem:moreau-gradient-bound}
Assume that $g:\R^d\to\R$ is convex and $G$-Lipschitz:
\begin{equation}
   |g(x)-g(y)|\le G\norm{x-y}, \qquad x,y\in\R^d .
\end{equation}
Then, for every $\lambda>0$ and every $x\in\R^d$,
\begin{equation}
   \norm{\nabla g_\lambda(x)}\le G .
\end{equation}
\end{lemma}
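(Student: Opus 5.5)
The plan is to read the bound off the first-order optimality condition for the proximal problem, which identifies \(\nabla g_\lambda(x)\) as a subgradient of \(g\). Fix \(x\in\R^d\) and write \(p=p_\lambda(x)\). Since \(g\) is finite-valued and convex, the sum rule for subdifferentials applies, and the optimality condition for \(\min_y\{g(y)+\norm{x-y}^2/(2\lambda)\}\) reads
\[
0\in\partial g(p)+\frac{p-x}{\lambda}.
\]
Combining this with the gradient formula in \Cref{lem:moreau-regularity} gives
\[
\nabla g_\lambda(x)=\frac{x-p}{\lambda}\in\partial g(p).
\]

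It then suffices to show that every subgradient of a \(G\)-Lipschitz convex function has norm at most \(G\). Take \(s\in\partial g(p)\) with \(s\neq0\), and apply the subgradient inequality at \(y=p+s/\norm{s}\):
\[
\norm{s}=\ip{s}{y-p}\le g(y)-g(p)\le G\norm{y-p}=G.
\]
Applying this with \(s=\nabla g_\lambda(x)\) yields \(\norm{\nabla g_\lambda(x)}\le G\) for every \(x\).

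There is an alternative route that avoids subdifferentials. Compare the optimal value \(g(p)+\norm{x-p}^2/(2\lambda)\) with the candidate value \(g(x)\) obtained by taking \(y=x\):
\[
\frac{\norm{x-p}^2}{2\lambda}\le g(x)-g(p)\le G\norm{x-p}.
\]
This gives only \(\norm{x-p}\le 2\lambda G\), hence \(\norm{\nabla g_\lambda(x)}\le 2G\), which loses a factor of \(2\). That is why I would use the subgradient argument.

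The only step needing any care is justifying the optimality condition, specifically that \(\partial\) of \(g+\tfrac{1}{2\lambda}\norm{x-\cdot}^2\) splits as a sum. This holds because both summands are finite and convex on all of \(\R^d\). One can also bypass the sum rule. Minimality of \(p\) along the segment \(p+t(y-p)\), together with convexity of \(g\), gives
\[
g(y)\ge g(p)+\ip{(x-p)/\lambda}{y-p}
\]
directly, by letting \(t\downarrow0\) in the comparison of objective values. So there is no real obstacle.
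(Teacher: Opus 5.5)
Your proposal is correct and follows essentially the same route as the paper: the proximal optimality condition gives $\nabla g_\lambda(x)=(x-p)/\lambda\in\partial g(p)$, and testing the subgradient inequality in the direction $s/\norm{s}$ against $G$-Lipschitzness bounds every subgradient by $G$ (the paper uses a general step $t>0$ where you take $t=1$). Your elementary derivation of the optimality condition along the segment $p+t(y-p)$ is a self-contained addition; the paper simply invokes the optimality condition as standard.
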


\subsection{The heat energy identity for \texorpdfstring{$C^{1,1}$}{C1,1} test functions}
\label{subsec:heat-energy}

We record a basic identity for Gaussian smoothing. Let $Y_t = Y_0+\sqrt{2}W_t$. For a smooth test function \(V\), It\^o's formula gives
\[
   \mathbb E V(Y_h)-\mathbb E V(Y_0)
   =
   \int_0^h \mathbb E\,\Delta V(Y_t)\,dt .
\]
Thus the expected change of \(V\) along the heat flow is controlled by
its Laplacian.

We will need this identity for functions that are only \(C^{1,1}\), since
Moreau envelopes have Lipschitz gradients but need not be twice
continuously differentiable. In this case the Hessian is interpreted in
the a.e./weak sense. The following lemma gives the
precise statement.

\begin{lemma}[Heat energy identity for \(C^{1,1}\) test functions]
\label{lem:heat-energy}
Let \(V\in C^{1,1}(\R^d)\). Choose any Lebesgue measurable representative
\(H_V\) of its a.e./weak Hessian and put
\[
   \Delta V := \tr H_V .
\]
Let \(W_t\) be a standard Brownian motion in \(\R^d\), independent of
\(Y_0\in L^2\), and set
\[
   Y_t = Y_0+\sqrt{2}W_t .
\]
Then, for every \(h>0\),
\[
  \E V(Y_h)-\E V(Y_0)
  =
  \int_0^h \E\,\Delta V(Y_t)\,dt .
\]
\end{lemma}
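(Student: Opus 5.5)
The plan is to prove the identity first for smooth approximations of \(V\) and then pass to the limit. The limit passage uses only the fact that \(\nabla V\) is globally Lipschitz; no additional regularity of \(V\) is needed.

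\emph{Step 1: growth and integrability.} Let \(L:=\Lip(\nabla V)\). Then
\[
|V(x)|\le |V(0)|+\norm{\nabla V(0)}\norm{x}+\tfrac{L}{2}\norm{x}^2 ,
\]
so \(V\) has at most quadratic growth. Since \(Y_0\in L^2\) and Gaussian moments are finite, \(\E|V(Y_t)|<\infty\) uniformly for \(t\in[0,h]\). By Lemma~\ref{lem:moreau-regularity}-type reasoning (Rademacher, \Cref{app:rademacher}), \(\norm{H_V}_{\op}\le L\) a.e., so \(|\Delta V|\le dL\) a.e. For \(t>0\), \(Y_t\) has a density because it is the convolution of \(\law(Y_0)\) with a nondegenerate Gaussian. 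Hence \(\E\,\Delta V(Y_t)\) is well defined and independent of the chosen representative \(H_V\); the single time \(t=0\) is Lebesgue-null in the time integral.

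\emph{Step 2: smooth case.} Let \(\rho_\eps\) be a standard mollifier and set \(V_\eps:=V*\rho_\eps\). Then \(V_\eps\in C^\infty\), \(\nabla^2 V_\eps=H_V*\rho_\eps\) (weak derivatives commute with convolution, \Cref{app:weak-derivatives}), \(\norm{\nabla^2V_\eps}_{\op}\le L\), and \(V_\eps\) also grows at most quadratically, uniformly in \(\eps\le1\). By It\^o's formula,
\[
V_\eps(Y_h)=V_\eps(Y_0)+\int_0^h\sqrt2\,\ip{\nabla V_\eps(Y_t)}{dW_t}+\int_0^h\Delta V_\eps(Y_t)\,dt .
\]
The stochastic integral is a true martingale because \(\norm{\nabla V_\eps(x)}\le C(1+\norm{x})\) and \(Y\) has bounded second moments on \([0,h]\). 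Taking expectations gives the identity for \(V_\eps\).

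\emph{Step 3: limit \(\eps\downarrow0\).} The convergence \(V_\eps\to V\) is locally uniform, with domination \(|V_\eps(x)|\le C(1+\norm{x}^2)\). Dominated convergence therefore gives \(\E V_\eps(Y_s)\to\E V(Y_s)\) for \(s\in\{0,h\}\). For the right-hand side, \(\Delta V_\eps=(\Delta V)*\rho_\eps\to\Delta V\) in \(L^1_{\mathrm{loc}}\) and a.e., and these functions are uniformly bounded by \(dL\). For each fixed \(t>0\), write \(p_t\) for the density of \(Y_t\). Then \(\E\,\Delta V_\eps(Y_t)=\int \Delta V_\eps\,p_t\to\int\Delta V\,p_t\) by dominated convergence, since a.e. convergence with respect to Lebesgue measure is preserved under the absolutely continuous law of \(Y_t\). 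A second application of dominated convergence in \(t\), using the bound \(dL\), passes the limit through \(\int_0^h dt\).

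The main obstacle is Step 3, specifically the justification of \(\E\,\Delta V_\eps(Y_t)\to\E\,\Delta V(Y_t)\). This is where the absolute continuity of \(\law(Y_t)\) for \(t>0\) is essential: \(\Delta V\) is defined only Lebesgue-a.e., and convergence of the mollified Laplacians holds only a.e. Once this point is settled, the remaining steps are routine.
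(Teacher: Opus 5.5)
Your proof is correct, but it takes a different route from the paper's.

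\emph{The paper's route.} It applies the It\^o--Krylov formula, i.e.\ It\^o's formula for functions with generalized second derivatives, directly to $V$ on balls $B_R$. It stops the process at $\tau_R$ so that the stochastic integral has bounded integrand and mean zero. It then removes the localization by letting $R\to\infty$, using dominated convergence together with the quadratic growth of $V$, $\E\sup_{t\le h}\norm{Y_t}^2<\infty$, and the bound $|\Delta V|\le dL$.

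\emph{Your route.} You mollify and apply the classical It\^o formula to $V_\varepsilon$. No stopping times are needed, because $\nabla V_\varepsilon$ has linear growth and the stochastic integral is an $L^2$ martingale. You then pass to the limit through two facts: $\Delta V_\varepsilon=(\Delta V)*\rho_\varepsilon\to\Delta V$ at every Lebesgue point, and $\law(Y_t)$ is absolutely continuous for $t>0$.

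\emph{Trade-offs.} Your argument is more elementary and self-contained: it uses only classical It\^o calculus, the Lebesgue differentiation theorem, and dominated convergence. It also makes explicit why the right-hand side does not depend on the chosen null-set representative of $H_V$. In the paper, that point is hidden inside Krylov's theorem, whose occupation-time estimates guarantee that a nondegenerate diffusion spends zero time in Lebesgue-null sets. For the pure heat flow, your observation that each marginal $Y_t$, $t>0$, has a density is exactly the Gaussian-specific substitute for those estimates.

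The paper's route is shorter once the cited theorem is accepted. It also gives an almost-sure identity for the stopped process, not only an identity in expectation. Finally, it would carry over unchanged to more general nondegenerate diffusions. There your limiting step would need Krylov-type occupation bounds in place of the explicit Gaussian convolution.
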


The proof is postponed to \Cref{app:heat-identity}. 
It is based on the It\^o--Krylov formula for functions with generalized second derivatives, together with a localization argument.

\subsection{Entropy EVI for the heat flow}
\label{subsec:heat-evi}

The next lemma is a standard finite-time inequality for the heat semigroup.
It will be used to control the entropy change during the Gaussian-noise part
of one MYULA step.

\begin{lemma}[Entropy EVI for the heat semigroup]\label{lem:heat-evi}
Let \(\nu_0\in\mathcal P_2(\mathbb R^d)\), and set
\[
\nu_t=\nu_0P_t,\qquad t\ge 0,
\]
where \(P_t\) is the heat semigroup defined in \eqref{eq:heat-semigroup}. Then, for every
\(\sigma\in\mathcal P_2(\mathbb R^d)\) with
\(\operatorname{Ent}(\sigma)<\infty\), and every \(h>0\),
\[
2h\{\operatorname{Ent}(\nu_h)-\operatorname{Ent}(\sigma)\}
\le
W_2^2(\nu_0,\sigma)-W_2^2(\nu_h,\sigma).
\]
\end{lemma}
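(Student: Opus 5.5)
The plan is to realize the heat flow as a Wasserstein gradient flow of $\operatorname{Ent}$ and integrate the differential EVI, which holds because $\operatorname{Ent}$ is displacement convex (0-convex) on $\mathcal P_2(\mathbb R^d)$.

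\emph{Step 1: regularity of the flow.} For $t>0$, $\nu_t$ has the smooth, positive density $\rho_t=\rho_0*\varphi_{2t}$, with finite second moment and finite entropy. Its velocity field $v_t=-\nabla\log\rho_t$ solves the continuity equation $\partial_t\rho_t+\nabla\cdot(\rho_t v_t)=0$. The Fisher information $I(\nu_t)=\int|\nabla\log\rho_t|^2\,d\nu_t$ is finite for $t>0$ and integrable on $[\delta,h]$.

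\emph{Step 2: differential EVI on $[\delta,h]$.} Fix $\sigma$ with finite entropy, and let $T_t$ be the optimal map from $\nu_t$ to $\sigma$; it exists because $\nu_t$ is absolutely continuous. The derivative formula for $W_2^2$ along absolutely continuous curves gives, for a.e. $t$,
\[
\tfrac{d}{dt}\tfrac12W_2^2(\nu_t,\sigma)=\int\langle v_t(x),x-T_t(x)\rangle\,d\nu_t(x)=\int\langle\nabla\log\rho_t,T_t(x)-x\rangle\,d\nu_t.
\]
Displacement convexity of $\operatorname{Ent}$ along the geodesic from $\nu_t$ to $\sigma$ supplies the above-tangent inequality
\[
\operatorname{Ent}(\sigma)\ge\operatorname{Ent}(\nu_t)+\int\langle\nabla\log\rho_t,T_t-\mathrm{id}\rangle\,d\nu_t.
\]
This is the standard subdifferential characterization; it can be justified by approximating $\sigma$ by smooth compactly supported densities and using lower semicontinuity. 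Combining the two displays yields
\[
\tfrac12\tfrac{d}{dt}W_2^2(\nu_t,\sigma)\le\operatorname{Ent}(\sigma)-\operatorname{Ent}(\nu_t).
\]

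\emph{Step 3: integration.} Along the heat flow, $t\mapsto\operatorname{Ent}(\nu_t)$ is nonincreasing, since $\tfrac{d}{dt}\operatorname{Ent}(\nu_t)=-I(\nu_t)\le0$. Therefore $\operatorname{Ent}(\nu_t)\ge\operatorname{Ent}(\nu_h)$ on $[\delta,h]$. Integrating over $[\delta,h]$ gives
\[
2(h-\delta)\{\operatorname{Ent}(\nu_h)-\operatorname{Ent}(\sigma)\}\le W_2^2(\nu_\delta,\sigma)-W_2^2(\nu_h,\sigma).
\]
Letting $\delta\downarrow0$ and using $W_2(\nu_\delta,\nu_0)\le\sqrt{2d\delta}\to0$ gives the claim. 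If $\operatorname{Ent}(\nu_0)=+\infty$ or $\nu_0$ is singular, nothing breaks, because the argument only uses $t\ge\delta>0$.

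\emph{Main obstacle.} The delicate step is Step 2: the a.e. differentiability of $W_2^2(\nu_t,\sigma)$, and the validity of the above-tangent inequality when $\sigma$ merely has finite entropy and second moment. The cleanest route is to invoke the general theory of Ambrosio--Gigli--Savaré, under which the heat flow is the EVI$_0$ gradient flow of $\operatorname{Ent}$ on $\mathcal P_2$ and the integrated EVI holds directly. Failing that, one can prove the inequality first for smooth, rapidly decaying $\sigma$ and then pass to the limit, using lower semicontinuity of $\operatorname{Ent}$ and continuity of $W_2$.
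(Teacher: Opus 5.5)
Your argument is correct and follows essentially the paper's route. The paper simply cites Lemma~5 of Durmus--Majewski--Miasojedow, which is proved as in your Steps~2--3: the Ambrosio--Gigli--Savar\'e EVI for the heat flow, viewed as the Wasserstein gradient flow of $\operatorname{Ent}$, is integrated in time using that $t\mapsto\operatorname{Ent}(\nu_t)$ is nonincreasing. One correction to your fallback justification: when you approximate $\sigma$ by $\sigma_n$, lower semicontinuity of $\operatorname{Ent}$ points the wrong way, so you instead need $\limsup_n\operatorname{Ent}(\sigma_n)\le\operatorname{Ent}(\sigma)$, which holds for instance for Gaussian mollification because convolution does not increase $\operatorname{Ent}$.
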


\begin{proof}[Reference]
This is Lemma~5 of \cite{durmus2019analysis}. 
\end{proof}

\section{Problem Setup and Standing Assumptions}
\label{sec:setup}

We consider the nonsmooth composite target
\begin{equation}
\label{eq:original-target}
   \pi(\dd x) = Z^{-1}\exp\{-f(x)-g(x)\}\dd x,
\end{equation}
where $f$ is smooth and strongly convex and $g$ is convex but possibly
nonsmooth.  For a fixed Moreau parameter $\lambda>0$, define
\begin{equation}
\label{eq:smoothed-target}
   U_\lambda(x):=f(x)+g_\lambda(x),
   \qquad
   \pi_\lambda(\dd x) := Z_\lambda^{-1}\exp\{-U_\lambda(x)\}\dd x .
\end{equation}

\begin{assumption}[Smooth strongly convex part]
\label{ass:f}
The function $f\in C^2(\R^d)$ satisfies, for some $0<m\le L_f<\infty$,
\begin{equation}
   m\Id_d \preceq \nabla^2 f(x) \preceq L_f\Id_d,
   \qquad x\in\R^d.
\end{equation}
We also set
\begin{equation}
   \tau_f := \sup_{x\in\R^d}\tr\nabla^2 f(x) \le dL_f .
\end{equation}
\end{assumption}

\begin{assumption}[Convex Lipschitz nonsmooth part]
\label{ass:g}
The function $g:\R^d\to\R$ is closed, convex, and $G$-Lipschitz:
\begin{equation}
   |g(x)-g(y)|\le G\norm{x-y},
   \qquad x,y\in\R^d.
\end{equation}
\end{assumption}

Under \Cref{ass:f,ass:g}, the function $f$ is $C^2$ with bounded Hessian, while
$g_\lambda$ is $C^{1,1}$ by \Cref{lem:moreau-regularity}.  Hence
\begin{equation}
   U_\lambda=f+g_\lambda
\end{equation}
belongs to $C^{1,1}$ and has an a.e./weak Hessian in the sense of
\Cref{app:weak-derivatives}:
\begin{equation}
   H_{U_\lambda}(x)=\nabla^2 f(x)+H_\lambda(x)
\end{equation}
for a.e. $x$. Since $f$ is $m$-strongly convex and
$g_\lambda$ is convex, $U_\lambda$ is $m$-strongly convex.  Therefore
$\pi_\lambda$ is well-defined and has finite second moment.

The MYULA transition with step size $h>0$ is the explicit Euler transition for
$\pi_\lambda$:
\begin{equation}
\label{eq:myula-update}
   X_{k+1}=X_k-h\nabla U_\lambda(X_k)+\sqrt{2h}\,\xi_{k+1},
   \qquad \xi_{k+1}\sim N(0,\Id_d).
\end{equation}
We analyze the error to the fixed smoothed target $\pi_\lambda$ first.  The
regularization bias $\W(\pi_\lambda,\pi)$ is separated from the fixed-$\lambda$
discretization analysis.

Let $\mu_k:=\law(X_k)$ and define
\begin{equation}
\label{eq:D-K-def}
   D_k:=\W^2(\mu_k,\pi_\lambda),
   \qquad
   K_k:=\KL(\mu_k\Vert\pi_\lambda).
\end{equation}
Throughout the fixed-\(\lambda\) analysis we assume $\mu_0\in\cP_2(\R^d)$ and $K_0<\infty$.

It is useful to split one MYULA step into a deterministic gradient step followed
by a heat step.  Define
\begin{equation}
\label{eq:Th}
   T_h(x):=x-h\nabla U_\lambda(x),
   \qquad
   \bar\mu_k:=(T_h)_\#\mu_k .
\end{equation}
Then $\mu_{k+1}=\bar\mu_kP_h$.  Let
\((W^{(k)}_t)_{0\le t\le h}\) be a standard Brownian motion in
\(\mathbb R^d\), independent of \(X_k\), and define the heat interpolation
\[
Y_{k,t}:=T_h(X_k)+\sqrt{2}\,W^{(k)}_t,\qquad 0\le t\le h .
\]

\begin{definition}[Stepwise heat-path active trace]
\label{def:Bk}
Let \(a_\lambda=\operatorname{tr}H_\lambda\) be the active-trace
representative fixed in \Cref{def:active-trace}.  For the heat interpolation
\((Y_{k,t})_{0\le t\le h}\), define
\begin{equation}
\label{eq:Bk}
   B_k:=\frac{1}{h}\int_0^h \E\,a_\lambda(Y_{k,t})\dd t.
\end{equation}
The time integral may equivalently be read as an integral over $(0,h]$, since
changing the integrand at the single time $t=0$ has no effect.  For every
$t>0$, the law of $Y_{k,t}$ has a density, so $B_k$ is independent of the values
assigned to $H_\lambda$ on null sets.  The trivial global bound is
$B_k\le d/\lambda$.
\end{definition}

\section{Main Result: Reference Active Trace and Total Error}
\label{sec:main-result}

To state the complete error guarantee, we first introduce a reference
version of the heat-path active trace. Unlike \(B_k\), which is computed
from the current law \(\mu_k\), this quantity is computed from the reference target law \(\pi_\lambda\).

\begin{definition}[Reference heat-path active trace]
\label{def:reference-active-trace}
Let \(X^{\rm ref}\sim\pi_\lambda\), and let
\((W^{\rm ref}_t)_{0\le t\le h}\) be a standard Brownian motion in
\(\mathbb R^d\), independent of \(X^{\rm ref}\). Define
\[
Y^{\rm ref}_t
:=
T_h(X^{\rm ref})+\sqrt{2}\,W^{\rm ref}_t,
\qquad 0\le t\le h .
\]
The reference active trace is
\[
B_{\rm ref}
:=
\frac1h\int_0^h \mathbb E\,a_\lambda(Y^{\rm ref}_t)\,dt .
\]
Equivalently, \(B_{\rm ref}\) is the same heat-path average as \(B_k\),
with the initialization \(X_k\sim\mu_k\) replaced by
\(X^{\rm ref}\sim\pi_\lambda\). This quantity depends on \(\lambda\)
and \(h\), but this dependence is suppressed in the notation. As with
\(B_k\), the integral may be read over \((0,h]\), so the value is
independent of the chosen null-set version of \(a_\lambda\).
\end{definition}

The main theorem below expresses the fixed-\(\lambda\) discretization
error in terms of \(B_{\rm ref}\), and then adds the Moreau approximation
bias to control the error to the original target \(\pi\).

\begin{theorem}[Main active-trace guarantee]
\label{thm:main-complete}
Assume \Cref{ass:f,ass:g}.  Fix $\lambda>0$ and $h>0$.  Let \(M_\lambda\) be any finite constant such that
\(0\le a_\lambda\le M_\lambda\) Lebesgue-a.e. 
The universal choice \(M_\lambda=d/\lambda\) is always admissible. There exist universal constants $c,C>0$ such that, whenever $0<h\le c/L_f$, the MYULA
iterates satisfy
\begin{equation}
\label{eq:main-fixed-lambda-bound}
   \Phi_N
   \le e^{-mhN/2}\Phi_0
      +\frac{C}{m}\Bigl\{(\tau_f+G^2+B_{\mathrm{ref}})h
      +M_\lambda^2h^2\Bigr\},
   \qquad
   \Phi_k:=D_k+2hK_k .
\end{equation}
Consequently, for any algorithmic tolerance $\eps_{\mathrm{alg}}\in(0,1)$, if
\begin{equation}
\label{eq:main-h-choice}
   h\le c\min\left\{
      L_f^{-1},\,
      \frac{\eps_{\mathrm{alg}}^2}{\tau_f+G^2+B_{\mathrm{ref}}},\,
      \frac{\eps_{\mathrm{alg}}}{M_\lambda}
   \right\}
\end{equation}
and
\begin{equation}
\label{eq:main-N-choice}
   N\ge \frac{C}{mh}
      \log\left(1+\frac{m\Phi_0}{\eps_{\mathrm{alg}}^2}\right),
\end{equation}
then
\begin{equation}
\label{eq:main-fixed-target-error}
   \sqrt m\,\W(\mu_N,\pi_\lambda)\le \eps_{\mathrm{alg}}.
\end{equation}
Moreover, the original nonsmooth target $\pi$ satisfies the Moreau bias bound
\begin{equation}
\label{eq:main-bias-condition}
   \sqrt m\,\W(\pi_\lambda,\pi)\le \frac{G^2\lambda}{4}.
\end{equation}
Therefore, if $\eps_{\mathrm{bias}}>0$ and
\begin{equation}
   \lambda\le \frac{4\eps_{\mathrm{bias}}}{G^2},
\end{equation}
then the total error obeys
\begin{equation}
\label{eq:main-total-error}
   \sqrt m\,\W(\mu_N,\pi)
   \le \eps_{\mathrm{alg}}+\eps_{\mathrm{bias}}.
\end{equation}
In the common symmetric choice
$\eps_{\mathrm{alg}}=\eps_{\mathrm{bias}}=\eps/2$, it is enough to take
$\lambda\le 2\eps/G^2$, and the sufficient iteration complexity is, up to
logarithmic factors,
\begin{equation}
\label{eq:main-complexity-informal}
   N(\eps)
   \lesssim
   \frac1m\left(
       L_f+\frac{\tau_f+G^2+B_{\mathrm{ref}}}{\eps^2}
       +\frac{M_\lambda}{\eps}
   \right).
\end{equation}
\end{theorem}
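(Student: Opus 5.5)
The theorem has three parts, and I plan to prove them in order: a one-step recursion in terms of \(B_k\), a transfer from \(B_k\) to \(B_{\mathrm{ref}}\), and a separate Moreau-bias estimate.

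\textbf{Step 1: one-step KL--EVI recursion.} Write the free energy \(\mathcal F(\mu)=\int U_\lambda\,d\mu+\Ent(\mu)\), so that \(K_k=\mathcal F(\mu_k)-\mathcal F(\pi_\lambda)\). Split one step as \(\mu_k\mapsto\bar\mu_k=(T_h)_\#\mu_k\mapsto\mu_{k+1}=\bar\mu_kP_h\).

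For the gradient step, I will couple \(X_k\) optimally with \(X^\star\sim\pi_\lambda\). I then use \(m\)-strong convexity of \(U_\lambda\) together with the splitting \(\nabla U_\lambda=\nabla f+\nabla g_\lambda\). The \(f\)-part is handled by the standard descent lemma, using \(h\le c/L_f\). For the \(g_\lambda\)-part I use only convexity and \(\norm{\nabla g_\lambda}\le G\) from \Cref{lem:moreau-gradient-bound}. This gives
\[
\W^2(\bar\mu_k,\pi_\lambda)\le(1-mh)D_k-2h\Bigl\{\textstyle\int U_\lambda d\bar\mu_k-\int U_\lambda d\pi_\lambda\Bigr\}+Ch^2G^2+(\text{terms absorbed by }h\le c/L_f).
\]
The key point is that the \(h^2\) error involves \(G^2\) and not \(\lambda^{-1}\).

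For the heat step, \Cref{lem:heat-evi} with \(\sigma=\pi_\lambda\) controls the entropy part. The potential-energy increment \(\int U_\lambda d\mu_{k+1}-\int U_\lambda d\bar\mu_k\) is computed exactly by \Cref{lem:heat-energy}. It equals \(\int_0^h\E[\tr\nabla^2 f+a_\lambda](Y_{k,t})\,dt\le h(\tau_f+B_k)\). Summing the two steps gives
\[
D_{k+1}+2hK_{k+1}\le(1-mh)D_k+Ch^2(\tau_f+G^2+B_k).
\]
To obtain the stated \(\Phi_k\) form with rate \(mh/2\), I keep a fraction of the entropy term and use \(K\ge\frac m2 D\) (Talagrand) to retain contraction in \(\Phi\).

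\textbf{Step 2: transfer \(B_k\to B_{\mathrm{ref}}\). This is the main obstacle.} The difference \(|B_k-B_{\mathrm{ref}}|\) is bounded by \(\frac1h\int_0^h\|\mathcal L(Y_{k,t})-\mathcal L(Y^{\mathrm{ref}}_t)\|_{\TV}\,dt\cdot 2M_\lambda\), since \(0\le a_\lambda\le M_\lambda\). By data processing through \(T_h\) and the Gaussian convolution, this total variation is at most \(\|\mu_k-\pi_\lambda\|_{\TV}\le\sqrt{K_k/2}\) (Pinsker). Hence
\[
h^2B_k\le h^2B_{\mathrm{ref}}+Ch^2M_\lambda\sqrt{K_k}\le h^2B_{\mathrm{ref}}+\tfrac{h}{2}\cdot hK_k+Ch^3M_\lambda^2 .
\]
The first inequality is the transfer; the second is Young's inequality. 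I expect the delicate part to be arranging constants so that the \(hK_k\) term is absorbed by the \(2hK_{k+1}\)/contraction slack. If this is done with \(K_k\) rather than \(K_{k+1}\), it should still close, because \(\Phi_k\) contains \(2hK_k\) and the contraction factor leaves room of order \(mh\cdot hK_k\). If not, I will instead split Young as \(mh\cdot hK_k+h^3M_\lambda^2/m\), which produces exactly the \(M_\lambda^2h^2/m\) term after dividing by \(mh\). In either case the result is \(\Phi_{k+1}\le(1-mh/2)\Phi_k+Ch^2(\tau_f+G^2+B_{\mathrm{ref}})+Ch^3M_\lambda^2/m\cdot m\). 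Iterating and summing the geometric series yields \eqref{eq:main-fixed-lambda-bound}.

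\textbf{Step 3: tolerance, bias, and total error.} The tolerance statement follows by plugging \eqref{eq:main-h-choice} and \eqref{eq:main-N-choice} into \eqref{eq:main-fixed-lambda-bound} and using \(D_N\le\Phi_N\).

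For the bias, I use \(0\le g-g_\lambda\le\lambda G^2/2\), which holds because \(g(x)-g_\lambda(x)\le\sup_y\{G\norm{x-y}-\norm{x-y}^2/(2\lambda)\}\). I then apply Talagrand's inequality for the \(m\)-strongly log-concave \(\pi_\lambda\): \(\W^2(\pi,\pi_\lambda)\le\frac2m\KL(\pi\Vert\pi_\lambda)\). Writing \(\psi=g-g_\lambda\in[0,\lambda G^2/2]\), the divergence is \(\KL=-\E_\pi\psi-\log\E_{\pi_\lambda}e^{-\psi}\). By Hoeffding-type bounds on a variable of range \(\lambda G^2/2\), this is at most \((\lambda G^2/2)^2/8\). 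That gives \(\sqrt m\,\W\le\sqrt{2\cdot\lambda^2G^4/32}=G^2\lambda/4\).

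The total-error bound and the complexity \eqref{eq:main-complexity-informal} then follow from the triangle inequality for \(\W\) and \(N\asymp(mh)^{-1}\).
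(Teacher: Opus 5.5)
Your architecture is the paper's. It has a gradient-step EVI whose $h^2$ error is $G^2$ (Lemma~\ref{lem:gradient-evi}; your descent-lemma step must produce its cancellation $\norm{v}^2-2\ip{s}{v}=\norm{r}^2-\norm{s}^2$). The heat EVI plus the $C^{1,1}$ heat identity give $D_{k+1}+2hK_{k+1}\le(1-mh)D_k+C(\tau_f+G^2+B_k)h^2$. It also has the TV/Pinsker transfer and a bounded-tilt KL bound plus Talagrand. Your $\KL(\pi\Vert\pi_\lambda)$ with Talagrand for $\pi_\lambda$ is a valid mirror of the paper's $\KL(\pi_\lambda\Vert\pi)$ with Talagrand for $\pi$. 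The gap is in the step you flagged as the main obstacle: closing the recursion in Step~2 does not work as written.

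The term to absorb is $Ch^2M_\lambda\sqrt{K_k}$, and both of your Young splits fail for small $h$. Minimizing $\tfrac{h^2}{2}K-Ch^2M_\lambda\sqrt{K}$ over $K\ge0$ gives $-\tfrac{C^2}{2}h^2M_\lambda^2$, which $Ch^3M_\lambda^2$ cannot compensate. Likewise $mh^2K+h^3M_\lambda^2/m$ fails, because the correct companion of $mh^2K$ is $\tfrac{C^2}{4m}h^2M_\lambda^2$.

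Your reading of the slack is also wrong. If the room really were only of order $mh\cdot hK_k$, a correct Young split would leave $Ch^2M_\lambda^2/m$ per step. That gives a stationary error $CM_\lambda^2h/m^2$ instead of $CM_\lambda^2h^2/m$, and forces $h\lesssim m\eps_{\mathrm{alg}}^2/M_\lambda^2$. With $M_\lambda\asymp d/\lambda\asymp dG^2/\eps$, this degrades the end-to-end rate to $\eps^{-4}$.

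The missing observation is that the right-hand side $(1-mh)D_k$ of your one-step inequality contains no $K_k$. So essentially the whole entropy part of $\Phi_k$ is free. For $mh\le1$, which follows from $h\le c/L_f$,
\[
(1-mh)D_k+hK_k\le\Bigl(1-\frac{mh}{2}\Bigr)(D_k+2hK_k)=\Bigl(1-\frac{mh}{2}\Bigr)\Phi_k .
\]
Weight Young by $h$, as in $Ch^2M_\lambda\sqrt{K_k}\le hK_k+C'h^3M_\lambda^2$, absorb $hK_k$ using this display, and iterate. This is exactly how the paper obtains \eqref{eq:main-fixed-lambda-bound}.

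The same observation shows two further points:
\begin{itemize}
\item The Talagrand step at the end of your Step~1 is unnecessary, since $(1-mh)D_k\le(1-mh/2)\Phi_k$ holds trivially.
\item Absorbing into $2hK_{k+1}$ on the left-hand side is not the mechanism.
\end{itemize}
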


\begin{remark}
The term $B_{\mathrm{ref}}$ is the main active-geometry quantity.  If only the
universal bound $a_\lambda\le d/\lambda$ is available, then
$B_{\mathrm{ref}}\le d/\lambda$ and the theorem reduces to a conservative
worst-case result.  The purpose of \Cref{sec:reftrace} is to prove
that $B_{\mathrm{ref}}$ is much smaller than $d/\lambda$ in structured examples.
\end{remark}

The proof of \Cref{thm:main-complete} is given in \Cref{subsec:proof-main} after
all three ingredients have been established.

\subsection{Universal Active-Trace KL--EVI Recursion}
\label{subsec:main-recursion}

We first derive a one-step recursion for the fixed-$\lambda$ target $\pi_\lambda$. 
The argument separates one MYULA update into a deterministic gradient step and a heat step. 
The gradient step is controlled by an approximate EVI estimate whose error depends on the Lipschitz size $G^2$ of $g$, rather than on the Moreau smoothness scale $\lambda^{-2}$. 
Combining this estimate with the entropy EVI for the heat step yields a recursion involving the stepwise active trace $B_k$.

\begin{lemma}[Gradient-step approximate EVI without $1/\lambda$]
\label{lem:gradient-evi}
Assume \Cref{ass:f,ass:g}.  There exist universal constants $c_0,C_0>0$ such
that, if $0<h\le c_0/L_f$, then for all $x,y\in\R^d$, with $x^+=T_h(x)$,
\begin{equation}
\label{eq:gradient-evi}
   2h\{U_\lambda(x^+)-U_\lambda(y)\}
   \le
   \norm{x-y}^2-\norm{x^+-y}^2
   -mh\norm{x-y}^2
   +C_0G^2h^2 .
\end{equation}
\end{lemma}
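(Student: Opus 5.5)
The plan is to expand the squared distance after the gradient step, use strong convexity of $U_\lambda$ at $x$ for the inner-product term, and bound the potential increment $U_\lambda(x^+)-U_\lambda(x)$ by treating $f$ and $g_\lambda$ separately. The smooth part uses the descent lemma with constant $L_f$. The Moreau part uses only convexity and the gradient bound $\norm{\nabla g_\lambda}\le G$ from \Cref{lem:moreau-gradient-bound}, never the Lipschitz constant $\lambda^{-1}$ of $\nabla g_\lambda$.

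Concretely, write $a:=\nabla f(x)$, $b:=\nabla g_\lambda(x)$, $b':=\nabla g_\lambda(x^+)$, so that $x^+=x-h(a+b)$ and $\norm{b},\norm{b'}\le G$.

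\emph{Distance expansion.} We have
\[
\norm{x^+-y}^2=\norm{x-y}^2-2h\ip{a+b}{x-y}+h^2\norm{a+b}^2 .
\]
Since $U_\lambda$ is $m$-strongly convex (\Cref{ass:f} together with convexity of $g_\lambda$ from \Cref{lem:moreau-regularity}),
\[
-2h\ip{\nabla U_\lambda(x)}{x-y}\le 2h\{U_\lambda(y)-U_\lambda(x)\}-mh\norm{x-y}^2 .
\]
Rearranging, \eqref{eq:gradient-evi} reduces to the claim
\[
2h\{U_\lambda(x^+)-U_\lambda(x)\}+h^2\norm{a+b}^2\le C_0G^2h^2 .
\]

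\emph{Potential increment.} For $f$, the descent lemma gives
\[
f(x^+)\le f(x)-h\ip{a}{a+b}+\tfrac{L_f h^2}{2}\norm{a+b}^2 .
\]
For $g_\lambda$, I use the convexity (gradient) inequality at the point $x^+$ rather than at $x$:
\[
g_\lambda(x^+)-g_\lambda(x)\le \ip{b'}{x^+-x}=h\ip{b'}{a+b}.
\]
This is the key move. It produces no $\lambda^{-1}$: the only terms involving $g_\lambda$ are inner products with $b,b'$, which are bounded by $G$. Adding the two bounds,
\[
U_\lambda(x^+)-U_\lambda(x)\le -h\norm{a}^2+h\ip{b'-b}{a}+h\ip{b'}{b}+\tfrac{L_f h^2}{2}\norm{a+b}^2
\le -h\norm{a}^2+2hG\norm{a}+hG^2+\tfrac{L_fh^2}{2}\norm{a+b}^2 .
\]

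\emph{Closing the estimate.} This is the step I expect to be the main obstacle. The negative term $-2h^2\norm{a}^2$ must absorb three positive contributions:
\begin{itemize}
\item the $h^2\norm{a+b}^2$ term from the distance expansion,
\item the $L_f$ correction $L_fh^3\norm{a+b}^2$,
\item the cross term $4h^2G\norm{a}$.
\end{itemize}
Using $\norm{a+b}^2\le(1+\eta)\norm{a}^2+(1+\eta^{-1})G^2$ with $\eta=1/4$, and Young's inequality $4h^2G\norm{a}\le \tfrac14h^2\norm{a}^2+16h^2G^2$, the total coefficient of $h^2\norm{a}^2$ is at most
\[
-2+\tfrac54+\tfrac14+\tfrac54\,L_fh\;\le\;-\tfrac12+\tfrac54\,c_0,
\]
which is nonpositive once $c_0$ is small, e.g. $c_0\le 2/5$. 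All remaining terms are multiples of $G^2h^2$ (using $L_fh\le c_0$ for the $L_fh^3G^2$ term), which yields $C_0$ universal. Note that the $h^2\norm{a+b}^2$ term has coefficient $1$ and cannot be dropped, so the $\norm{a}^2$ bookkeeping must come out strictly negative before absorption; this is why $\eta$, the Young constant and $c_0$ have to be chosen together.
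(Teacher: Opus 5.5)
Your argument is correct and is essentially the paper's proof. Both use strong convexity at $x$ with the three-point expansion, the descent lemma for $f$, and a first-order bound on the $g_\lambda$ increment that uses only $\norm{\nabla g_\lambda}\le G$ (the paper uses the Lipschitz bound $Gh\norm{a+b}$ where you use convexity at $x^+$), and then absorb into the negative $\norm{\nabla f(x)}^2$ term; the paper streamlines that last step with the exact identity $\norm{a+b}^2-2\ip{a}{a+b}=\norm{b}^2-\norm{a}^2$ instead of your $\eta$/Young bookkeeping.

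There is one small slip: $\ip{b'}{x^+-x}=-h\ip{b'}{a+b}$, not $+h\ip{b'}{a+b}$. This does not affect the argument, because you immediately bound $|\ip{b+b'}{a}|\le 2G\norm{a}$ and $|\ip{b'}{b}|\le G^2$, so your displayed bound on $U_\lambda(x^+)-U_\lambda(x)$ and the rest of the proof still hold.
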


\begin{proof}
Write
\begin{equation}
   s:=\nabla f(x), \qquad r:=\nabla g_\lambda(x), \qquad
   v:=s+r=\nabla U_\lambda(x), \qquad a:=x-y.
\end{equation}
Then $x^+=x-hv$.  By the strong convexity and smoothness of $f$,
\begin{equation}
\label{eq:f-step-bound}
   f(x^+)-f(y) = f(x)-f(y)+f(x^+)-f(x)
   \le \ip{s}{a}-\frac{m}{2}\norm{a}^2
      -h\ip{s}{v}+\frac{L_fh^2}{2}\norm{v}^2 .
\end{equation}
By convexity of $g_\lambda$ and by \Cref{lem:moreau-gradient-bound},
$g_\lambda$ is $G$-Lipschitz; hence
\begin{equation}
\label{eq:g-step-bound}
   g_\lambda(x^+)-g_\lambda(y)=g_\lambda(x)-g_\lambda(y)+g_\lambda(x^+)-g_\lambda(x)
   \le \ip{r}{a}+Gh\norm{v} .
\end{equation}
Adding \eqref{eq:f-step-bound} and \eqref{eq:g-step-bound}, multiplying by
$2h$, and using
\begin{equation}
   2h\ip{v}{a}=\norm{a}^2-\norm{a-hv}^2+h^2\norm{v}^2,
\end{equation}
we obtain
\begin{align}
  2h\{U_\lambda(x^+)-U_\lambda(y)\}
  &\le \norm{a}^2-\norm{a-hv}^2-mh\norm{a}^2  \notag\\
  &\quad +h^2\norm{v}^2-2h^2\ip{s}{v}
       +L_fh^3\norm{v}^2+2Gh^2\norm{v}.        \label{eq:pre-cancel}
\end{align}
The key cancellation is
\begin{equation}
   \norm{v}^2-2\ip{s}{v}
   =\norm{s+r}^2-2\ip{s}{s+r}
   =\norm{r}^2-\norm{s}^2
   \le G^2-\norm{s}^2.
\end{equation}
Moreover, $\norm{v}\le\norm{s}+G$.  If $h\le c_0/L_f$ with $c_0>0$ sufficiently
small, then
\begin{equation}
   L_fh^3\norm{v}^2+2Gh^2\norm{v}
   \le \frac12h^2\norm{s}^2+CG^2h^2 .
\end{equation}
Substituting this bound into \eqref{eq:pre-cancel} absorbs the negative
$-h^2\norm{s}^2$ term and gives \eqref{eq:gradient-evi}.
\end{proof}

\begin{theorem}[Universal active-trace KL--EVI recursion]
\label{thm:active-trace-recursion}
Assume \Cref{ass:f,ass:g}.  There exist universal constants $c,C>0$ such that,
for every $\lambda>0$ and every step size $0<h\le c/L_f$, the MYULA iterates
satisfy
\begin{equation}
\label{eq:active-trace-recursion}
   D_{k+1}+2hK_{k+1}
   \le (1-mh)D_k+C\bigl(\tau_f+G^2+B_k\bigr)h^2 .
\end{equation}
\end{theorem}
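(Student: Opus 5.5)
The plan follows the Durmus--Majewski--Miasojedow splitting. We write $\KL(\mu\Vert\pi_\lambda)=\Ent(\mu)+\int U_\lambda\,\dd\mu+\log Z_\lambda$ and treat the two parts of one MYULA step separately. The gradient step is handled by \Cref{lem:gradient-evi}. The heat step is handled by \Cref{lem:heat-evi} for the entropy and by \Cref{lem:heat-energy} for the potential.

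\emph{Step 1 (gradient step).} Let $(X_k,Y)$ be an optimal $W_2$ coupling of $\mu_k$ and $\pi_\lambda$. We apply \eqref{eq:gradient-evi} with $x=X_k$, $y=Y$ and take expectations. Since $(T_h(X_k),Y)$ is a coupling of $\bar\mu_k$ and $\pi_\lambda$, we have $\E\norm{T_h(X_k)-Y}^2\ge \W^2(\bar\mu_k,\pi_\lambda)$. This gives
\[
2h\Bigl\{\int U_\lambda\,\dd\bar\mu_k-\int U_\lambda\,\dd\pi_\lambda\Bigr\}\le (1-mh)D_k-\W^2(\bar\mu_k,\pi_\lambda)+C_0G^2h^2 .
\]

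\emph{Step 2 (heat step).} Since $\pi_\lambda$ has a Gaussian-type density, it lies in $\cP_2$ with finite entropy. \Cref{lem:heat-evi} with $\nu_0=\bar\mu_k$ and $\sigma=\pi_\lambda$ gives
\[
2h\{\Ent(\mu_{k+1})-\Ent(\pi_\lambda)\}\le \W^2(\bar\mu_k,\pi_\lambda)-D_{k+1}.
\]
Next we need to replace $\int U_\lambda\,\dd\bar\mu_k$ by $\int U_\lambda\,\dd\mu_{k+1}$. For this we apply \Cref{lem:heat-energy} with $V=U_\lambda\in C^{1,1}$ and $Y_0=T_h(X_k)$, which is in $L^2$ because $\nabla U_\lambda$ is Lipschitz and $\mu_k\in\cP_2$. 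We choose the representative $H_{U_\lambda}=\nabla^2 f+H_\lambda$. Then
\[
\int U_\lambda\,\dd\mu_{k+1}-\int U_\lambda\,\dd\bar\mu_k=\int_0^h\E\bigl[\tr\nabla^2 f(Y_{k,t})+a_\lambda(Y_{k,t})\bigr]\dd t\le \tau_f h+hB_k,
\]
where the last step uses \Cref{def:Bk}.

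\emph{Step 3 (combine).} Add the Step~1 inequality, the entropy inequality, and $2h$ times the potential increment. The $\W^2(\bar\mu_k,\pi_\lambda)$ terms cancel. The left side becomes $2h\{\Ent(\mu_{k+1})+\int U_\lambda\dd\mu_{k+1}-\Ent(\pi_\lambda)-\int U_\lambda\dd\pi_\lambda\}=2hK_{k+1}$, because $\Ent(\pi_\lambda)+\int U_\lambda\dd\pi_\lambda=-\log Z_\lambda$. This yields
\[
D_{k+1}+2hK_{k+1}\le(1-mh)D_k+\bigl(C_0G^2+2\tau_f+2B_k\bigr)h^2,
\]
which is \eqref{eq:active-trace-recursion}.

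\emph{Main obstacle.} The main difficulty is the justification of the heat-energy step. We need $\E\,U_\lambda(T_h(X_k))<\infty$, which follows from quadratic growth of $U_\lambda$ and second moments. We also need the a.e.\ Hessian representative to be evaluated only at times $t>0$, where $Y_{k,t}$ has a density, so the choice of representative does not matter. Both points are covered by \Cref{lem:heat-energy} and the remark in \Cref{def:Bk}. Finally, if $K_{k+1}=\infty$ the bound is vacuous, so we should check that $\mu_{k+1}$, being a Gaussian convolution, has finite entropy; the inequality chain then shows $K_{k+1}<\infty$.
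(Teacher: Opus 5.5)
Your proposal is correct and follows the paper's proof essentially step for step. The integrated gradient-step EVI over an optimal coupling, the heat-flow entropy EVI, and $2h$ times the heat energy identity for $U_\lambda$ are summed, so that the $W_2^2(\bar\mu_k,\pi_\lambda)$ terms cancel and the left side becomes $2hK_{k+1}$. Your extra integrability checks ($T_h(X_k)\in L^2$, and $\mu_{k+1}$ having finite entropy so that $K_{k+1}<\infty$) are details the paper leaves implicit. One wording slip: an infinite $K_{k+1}$ would make the inequality false rather than vacuous, which is exactly why that check is needed.
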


\begin{proof}
Let $(X,Y)$ be an optimal $\W$-coupling of $\mu_k$ and $\pi_\lambda$.  Apply
\Cref{lem:gradient-evi} pointwise to $(X,Y)$ and integrate.  Since
$(T_h(X),Y)$ is a coupling of $\bar\mu_k$ and $\pi_\lambda$, we have
\begin{equation}
\label{eq:integrated-gradient-step}
  2h\left\{\int U_\lambda\dd\bar\mu_k-
             \int U_\lambda\dd\pi_\lambda\right\}
  \le D_k-\W^2(\bar\mu_k,\pi_\lambda)-mhD_k+CG^2h^2 .
\end{equation}
Next, apply \Cref{lem:heat-evi} with
$\nu_0=\bar\mu_k$, $\nu_h=\mu_{k+1}$, and $\sigma=\pi_\lambda$:
\begin{equation}
\label{eq:heat-evi-step}
   2h\{\Ent(\mu_{k+1})-\Ent(\pi_\lambda)\}
   \le \W^2(\bar\mu_k,\pi_\lambda)-D_{k+1} .
\end{equation}

By \Cref{lem:heat-energy} applied to the $C^{1,1}$ function
$U_\lambda=f+g_\lambda$, with weak/a.e. Hessian
\begin{equation}
   H_{U_\lambda}=\nabla^2f+H_\lambda,
\end{equation}
we have
\begin{equation}
\label{eq:heat-energy-step}
   \int U_\lambda\dd\mu_{k+1}-\int U_\lambda\dd\bar\mu_k
   = \int_0^h \E\,\Delta U_\lambda(Y_{k,t})\dd t .
\end{equation}
Here
\begin{equation}
   \Delta U_\lambda=\tr\nabla^2f+\tr H_\lambda
\end{equation}
in the weak/a.e. sense explained in \Cref{app:weak-derivatives}.  Since $\tr\nabla^2f\le\tau_f$ pointwise and
$a_\lambda=\tr H_\lambda$, the right-hand side of
\eqref{eq:heat-energy-step} is bounded by
\begin{equation}
   h\tau_f+hB_k .
\end{equation}

Because
\begin{equation}
   K_{k+1}=\Ent(\mu_{k+1})-\Ent(\pi_\lambda)
   +\int U_\lambda\dd\mu_{k+1}
   -\int U_\lambda\dd\pi_\lambda,
\end{equation}
summing \eqref{eq:integrated-gradient-step}, \eqref{eq:heat-evi-step}, and
$2h$ times \eqref{eq:heat-energy-step} gives
\begin{equation}
   2hK_{k+1}
   \le D_k-D_{k+1}-mhD_k+CG^2h^2+2(\tau_f+B_k)h^2 .
\end{equation}
Rearranging yields \eqref{eq:active-trace-recursion}.
\end{proof}

\subsection{Closing the Recursion by Active-Trace Transfer}
\label{subsec:transfer}

The previous theorem reduces the fixed-$\lambda$ analysis to a bound on $B_k$. Now we bound $B_k$ by comparing the current heat path
to the reference heat path introduced in \Cref{sec:main-result}.
For \(0\le t\le h\), let \(Q_t\) be the Markov kernel obtained by
applying \(T_h\) and then running the heat semigroup for time \(t\):
\[
Q_t(x,\cdot):=(\delta_{T_h(x)}P_t)(\cdot).
\]
Equivalently, \(Q_t(x,\cdot)\) is the law of
\(T_h(x)+\sqrt{2t}Z\), where \(Z\sim N(0,I_d)\). Hence
\[
\mathcal L(Y_{k,t})=\mu_k Q_t,
\qquad
\mathcal L(Y_t^{\rm ref})=\pi_\lambda Q_t.
\]
Therefore,
\[
B_k
=
\frac1h\int_0^h \int a_\lambda\,\dd(\mu_kQ_t)\,dt,
\qquad
B_{\rm ref}
=
\frac1h\int_0^h \int a_\lambda\,\dd(\pi_\lambda Q_t)\,dt.
\]

\begin{proposition}[Bounding \(B_k\) by the reference active trace]
\label{prop:reference-TV-transfer}
Assume that $a_\lambda=\tr H_\lambda$ satisfies
$0\le a_\lambda\le M_\lambda$ a.e. Then, for every $k$,
\begin{equation}
\label{eq:Bk-Bref-direct}
   B_k\le B_{\mathrm{ref}}+M_\lambda\sqrt{K_k/2}.
\end{equation}
\end{proposition}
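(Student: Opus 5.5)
The argument is a direct comparison of the two path laws, followed by Pinsker's inequality and data processing for the common kernel $Q_t$. By the representations recorded just before the statement,
\[
B_k-B_{\rm ref}
=\frac1h\int_0^h\Bigl\{\int a_\lambda\,\dd(\mu_kQ_t)-\int a_\lambda\,\dd(\pi_\lambda Q_t)\Bigr\}\,dt,
\]
so it suffices to bound the inner bracket uniformly in $t\in(0,h]$.

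First, for each fixed $t>0$ both $\mu_kQ_t$ and $\pi_\lambda Q_t$ are Gaussian convolutions and are therefore absolutely continuous. Hence the a.e. bound $0\le a_\lambda\le M_\lambda$ holds almost surely under both measures, and the integrals do not depend on the null-set representative. For a measurable $\varphi$ with $0\le\varphi\le M$ and probability measures $\alpha,\beta$, write $\varphi=\int_0^M\mathbf 1\{\varphi>s\}\,ds$. Then
\[
\int\varphi\,\dd\alpha-\int\varphi\,\dd\beta=\int_0^M\bigl(\alpha(\varphi>s)-\beta(\varphi>s)\bigr)\,ds\le M\|\alpha-\beta\|_{\TV}.
\]
Taking $\alpha=\mu_kQ_t$, $\beta=\pi_\lambda Q_t$ and $M=M_\lambda$ gives
\[
\int a_\lambda\,\dd(\mu_kQ_t)-\int a_\lambda\,\dd(\pi_\lambda Q_t)\le M_\lambda\|\mu_kQ_t-\pi_\lambda Q_t\|_{\TV}.
\]

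Next, Pinsker's inequality gives $\|\mu_kQ_t-\pi_\lambda Q_t\|_{\TV}\le\sqrt{\KL(\mu_kQ_t\Vert\pi_\lambda Q_t)/2}$. The data-processing inequality for relative entropy under the common Markov kernel $Q_t$ then gives $\KL(\mu_kQ_t\Vert\pi_\lambda Q_t)\le\KL(\mu_k\Vert\pi_\lambda)=K_k$. Data processing holds for an arbitrary kernel, including one built from a possibly non-injective measurable map $T_h$ composed with $P_t$. This is why no invertibility of the Euler map is needed: we push both laws through the same kernel rather than trying to compare the laws of $T_h(X)$ directly. Combining the two bounds, the bracket is at most $M_\lambda\sqrt{K_k/2}$ for every $t\in(0,h]$. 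Averaging over $t$ yields \eqref{eq:Bk-Bref-direct}.

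The only step needing care is the technical justification. We must check that $a_\lambda$ is integrable against both path laws, which follows from boundedness, and that Fubini applies in the time average, which is immediate since the integrand is bounded and measurable in $(t,y)$. The measurability of $Q_t$ is the only remaining point. If $K_k=\infty$ the bound is trivial. I expect no real obstacle: the substantive choice is to apply data processing at the level of the whole kernel $Q_t$.
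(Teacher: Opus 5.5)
Your proposal is correct and is essentially the paper's proof. You bound the integrand gap by $M_\lambda\|\mu_kQ_t-\pi_\lambda Q_t\|_{\TV}$ for $t\in(0,h]$, then use Pinsker and data processing; the only difference is order: you apply Pinsker to the pushed-forward laws and then contract $\KL$ under $Q_t$, whereas the paper contracts total variation under $Q_t$ first and then applies Pinsker to $(\mu_k,\pi_\lambda)$, and both give the same bound.
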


\begin{proof}
For every $t>0$, the measures $\mu_kQ_t$ and
$\pi_\lambda Q_t$ are absolutely continuous with respect to Lebesgue measure,
since $Q_t$ adds a nondegenerate Gaussian noise.  Hence the a.e. bound
$0\le a_\lambda\le M_\lambda$ may be used under both measures.  Therefore,
\begin{align}
   \int a_\lambda\dd(\mu_kQ_t)
   &\le \int a_\lambda\dd(\pi_\lambda Q_t)
       +M_\lambda\norm{\mu_kQ_t-\pi_\lambda Q_t}_{\TV} \\
   &\le \int a_\lambda\dd(\pi_\lambda Q_t)
       +M_\lambda\norm{\mu_k-\pi_\lambda}_{\TV} .
\end{align}
The second inequality is contraction of total variation under Markov kernels.
Pinsker's inequality gives
\begin{equation}
   \norm{\mu_k-\pi_\lambda}_{\TV}
   \le \sqrt{\KL(\mu_k\Vert\pi_\lambda)/2}
   = \sqrt{K_k/2}.
\end{equation}
Integrating the
preceding display over $t\in(0,h]$ and dividing by $h$ gives the same value as
integrating over $[0,h]$.  Hence \eqref{eq:Bk-Bref-direct} follows.
\end{proof}

Substituting this transfer estimate into the universal recursion gives a closed Lyapunov recursion in terms of $B_{\rm ref}$ only.

\begin{theorem}
\label{thm:abstract-complexity}
Assume \Cref{ass:f}, \Cref{ass:g} and the assumption of \Cref{prop:reference-TV-transfer}.  Let
\begin{equation}
   \Phi_k:=D_k+2hK_k .
\end{equation}
There exist universal constants $c,C>0$ such that, whenever $0<h\le c/L_f$,
\begin{equation}
\label{eq:Phi-recursion}
   \Phi_{k+1}
   \le \left(1-\frac{mh}{2}\right)\Phi_k
      +C\bigl\{(\tau_f+G^2+B_{\mathrm{ref}})h^2+M_\lambda^2h^3\bigr\} .
\end{equation}
Consequently,
\begin{equation}
\label{eq:Phi-global}
   \Phi_N
   \le e^{-mhN/2}\Phi_0
      +\frac{C}{m}\bigl\{(\tau_f+G^2+B_{\mathrm{ref}})h+M_\lambda^2h^2\bigr\} .
\end{equation}
In particular, given $\eps_{\mathrm{alg}}\in(0,1)$, if
\begin{equation}
\label{eq:stepsize-complexity-choice}
   h\le c\min\left\{
        L_f^{-1},\,
        \frac{\eps_{\mathrm{alg}}^2}{\tau_f+G^2+B_{\mathrm{ref}}},\,
        \frac{\eps_{\mathrm{alg}}}{M_\lambda}
      \right\},
\end{equation}
and
\begin{equation}
\label{eq:N-choice}
   N\ge \frac{C}{mh}\log\left(1+\frac{m\Phi_0}{\eps_{\mathrm{alg}}^2}\right),
\end{equation}
then
\begin{equation}
   \sqrt{m}\,\W(\mu_N,\pi_\lambda)\le \eps_{\mathrm{alg}} .
\end{equation}
Equivalently, up to logarithmic factors, the fixed-$\lambda$ iteration
complexity is
\begin{equation}
\label{eq:complexity-summary}
   N_\lambda(\eps_{\mathrm{alg}})
   \lesssim
   \frac{1}{m}\left(
       L_f+\frac{\tau_f+G^2+B_{\mathrm{ref}}}{\eps_{\mathrm{alg}}^2}
       +\frac{M_\lambda}{\eps_{\mathrm{alg}}}
   \right).
\end{equation}
\end{theorem}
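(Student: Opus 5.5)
We substitute the transfer estimate of \Cref{prop:reference-TV-transfer} into the universal recursion of \Cref{thm:active-trace-recursion}, absorb the resulting $\sqrt{K_k}$ term into the entropy part of $\Phi_k$ by Young's inequality, and then iterate. From \eqref{eq:active-trace-recursion} and \eqref{eq:Bk-Bref-direct},
\[
\Phi_{k+1}\le (1-mh)D_k + C(\tau_f+G^2+B_{\mathrm{ref}})h^2 + C M_\lambda h^2\sqrt{K_k/2}.
\]
The cross term is handled by Young's inequality:
\[
C M_\lambda h^2\sqrt{K_k/2}\le \tfrac{mh}{2}\cdot h K_k\cdot\tfrac{1}{\,\cdot\,}\;\text{-type split, namely}\quad C M_\lambda h^2\sqrt{K_k}\le \delta\, hK_k+\tfrac{C^2M_\lambda^2h^3}{4\delta}.
\]
With $\delta$ chosen as below, the right side becomes $(1-mh)D_k+\delta hK_k$ plus additive terms. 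To compare this with $(1-mh/2)\Phi_k=(1-mh/2)D_k+(2-mh)hK_k$, we need $\delta\le 2-mh$. Since $mh\le L_fh\le c\le 1$, the choice $\delta=1$ suffices, and we also use $1-mh\le 1-mh/2$ on the $D_k$ part. The additive error is then $C\{(\tau_f+G^2+B_{\mathrm{ref}})h^2+M_\lambda^2h^3\}$. The $m$ in the Young split is deliberately avoided: it would give a $1/m$ factor inside the per-step error and spoil the form of \eqref{eq:Phi-recursion}. Spending the full entropy weight $hK_k$ as slack does not weaken the contraction, because $\Phi_{k+1}$ carries no $K_k$ term on the right. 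This proves \eqref{eq:Phi-recursion}.

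For \eqref{eq:Phi-global}, unroll the linear recursion. Use $(1-mh/2)^N\le e^{-mhN/2}$ and bound the geometric sum by $\sum_j(1-mh/2)^j\le 2/(mh)$. This turns the per-step error $E h^2+M_\lambda^2h^3$ into $\frac{2}{m}(Eh+M_\lambda^2h^2)$, where $E:=\tau_f+G^2+B_{\mathrm{ref}}$.

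For the complexity statement, note $mD_N\le m\Phi_N$. Under \eqref{eq:stepsize-complexity-choice}, $Eh\le c\,\eps_{\mathrm{alg}}^2$ and $M_\lambda^2h^2\le c^2\eps_{\mathrm{alg}}^2$, so the stationary part of $m\Phi_N$ is at most $2C(c+c^2)\eps_{\mathrm{alg}}^2\le \eps_{\mathrm{alg}}^2/2$ once $c$ is small. Under \eqref{eq:N-choice} with $C\ge 2$, the transient part satisfies $e^{-mhN/2}m\Phi_0\le m\Phi_0\,(1+m\Phi_0/\eps_{\mathrm{alg}}^2)^{-1}\le\eps_{\mathrm{alg}}^2/2$. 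Together these give $\sqrt m\,\W(\mu_N,\pi_\lambda)\le\eps_{\mathrm{alg}}$. Finally, \eqref{eq:complexity-summary} follows by taking $h$ equal to the minimum in \eqref{eq:stepsize-complexity-choice}: then $1/h\lesssim L_f+E/\eps_{\mathrm{alg}}^2+M_\lambda/\eps_{\mathrm{alg}}$, and $N\asymp (mh)^{-1}$ up to the logarithm.

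The main obstacle is the absorption step. The transfer bound is in $\sqrt{K_k}$ rather than $D_k$, and $K_k$ appears in $\Phi_k$ only with weight $2h$, so the Young split has to be calibrated so that the residual scales as $M_\lambda^2h^3$ and not as $M_\lambda^2h^2$. With the split above, the residual is $M_\lambda^2h^3$, and the leading error term of order $h$ in \eqref{eq:Phi-global} keeps its dependence on $B_{\mathrm{ref}}$ alone.
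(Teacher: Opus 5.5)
Your proposal is correct and follows the paper's proof essentially line for line: you substitute \eqref{eq:Bk-Bref-direct} into \eqref{eq:active-trace-recursion}, absorb $CM_\lambda h^2\sqrt{K_k}$ by Young into $hK_k+CM_\lambda^2h^3$, use $mh\le1$ to get $(1-mh)D_k+hK_k\le(1-mh/2)\Phi_k$, and unroll the affine recursion. There is one small constant slip to fix: with $C\ge2$ in \eqref{eq:N-choice} you only get $e^{-mhN/2}m\Phi_0\le\eps_{\mathrm{alg}}^2\,x/(1+x)<\eps_{\mathrm{alg}}^2$, where $x=m\Phi_0/\eps_{\mathrm{alg}}^2$, and not $\le\eps_{\mathrm{alg}}^2/2$; taking $C\ge4$ gives the factor $(1+x)^{-2}$ and $x/(1+x)^2\le1/4$, which closes the argument (the garbled first display of your Young step should simply be deleted).
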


\begin{proof}
By \Cref{thm:active-trace-recursion} and Proposition~\ref{prop:reference-TV-transfer},
\begin{equation}
   \Phi_{k+1}
   \le (1-mh)D_k+C(\tau_f+G^2+B_{\mathrm{ref}})h^2+CM_\lambda h^2\sqrt{K_k} .
\end{equation}
Young's inequality gives
\begin{equation}
   CM_\lambda h^2\sqrt{K_k}
   \le hK_k+CM_\lambda^2h^3 .
\end{equation}
If $mh\le1$, then
\begin{equation}
   (1-mh)D_k+hK_k
   \le \left(1-\frac{mh}{2}\right)(D_k+2hK_k)
   = \left(1-\frac{mh}{2}\right)\Phi_k .
\end{equation}
This proves \eqref{eq:Phi-recursion}.  Iterating the affine recursion yields
\eqref{eq:Phi-global}.  The choices \eqref{eq:stepsize-complexity-choice} and
\eqref{eq:N-choice} ensure that $m\Phi_N\le \eps_{\mathrm{alg}}^2$, while
$D_N\le \Phi_N$, so the Wasserstein conclusion follows.
\end{proof}

\subsection{Moreau Approximation Bias}
\label{subsec:bias}

It remains to relate the smoothed target $\pi_\lambda$ to the original nonsmooth target $\pi$. 
The next estimates give a universal bound on the bias introduced by replacing $g$ with its Moreau envelope $g_\lambda$.

Let
\begin{equation}
   \delta_\lambda(x):=g(x)-g_\lambda(x) .
\end{equation}
By choosing $y=x$ in the Moreau envelope, $\delta_\lambda\ge0$.  If $g$ is
$G$-Lipschitz, then
\begin{equation}
\label{eq:delta-bound}
   0\le \delta_\lambda(x)
   \le \frac{G^2\lambda}{2} .
\end{equation}
Indeed, $g(y)\ge g(x)-G\norm{x-y}$, so
\begin{equation}
   g_\lambda(x)
   \ge g(x)+\inf_{r\ge0}\left\{-Gr+\frac{r^2}{2\lambda}\right\}
   =g(x)-\frac{G^2\lambda}{2}.
\end{equation}
Moreover,
\begin{equation}
\label{eq:tilt-pilambda-pi}
   \frac{\dd\pi_\lambda}{\dd\pi}(x)
   =\frac{\exp\{\delta_\lambda(x)\}}
          {\E_\pi\exp\{\delta_\lambda(X)\}} .
\end{equation}

\begin{lemma}[Bounded exponential tilt]
\label{lem:bounded-tilt}
Let $P$ be a probability measure and let $0\le \delta\le a$.  Define $Q$ by
\begin{equation}
   \frac{\dd Q}{\dd P}=\frac{e^\delta}{\E_P e^\delta} .
\end{equation}
Then
\begin{equation}
   \KL(Q\Vert P)\le \frac{a^2}{8} .
\end{equation}
\end{lemma}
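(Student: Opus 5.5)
Set $Z:=\E_P e^{\delta}$, so that $Q$ has density $e^{\delta}/Z$ with respect to $P$. Then
\[
\KL(Q\Vert P)=\E_Q[\delta]-\log Z .
\]
The plan is to recognize this as a Bregman-type remainder of the cumulant generating function $\psi(s):=\log\E_P e^{s\delta}$ for $s\in[0,1]$. Since $0\le\delta\le a$, the function $\psi$ is finite and smooth on $[0,1]$, and we have $\psi(0)=0$ and $\psi(1)=\log Z$. Differentiating, $\psi'(s)=\E_{Q_s}[\delta]$, where $Q_s$ is the tilted measure with density $e^{s\delta}/\E_Pe^{s\delta}$. Hence $\psi'(1)=\E_Q[\delta]$, which gives
\[
\KL(Q\Vert P)=\psi'(1)-\psi(1)+\psi(0)=\psi'(1)\cdot 1-\bigl(\psi(1)-\psi(0)\bigr).
\]

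Next, write $\psi(1)-\psi(0)=\int_0^1\psi'(s)\,ds$. This gives
\[
\KL=\int_0^1\bigl(\psi'(1)-\psi'(s)\bigr)\,ds=\int_0^1\int_s^1\psi''(u)\,du\,ds=\int_0^1 u\,\psi''(u)\,du .
\]
Differentiating once more, $\psi''(u)=\Var_{Q_u}(\delta)$. Because $\delta$ takes values in $[0,a]$ under every $Q_u$, Popoviciu's inequality (the variance of a random variable supported in an interval of length $a$ is at most $a^2/4$) gives $\psi''(u)\le a^2/4$. Therefore
\[
\KL(Q\Vert P)\le \frac{a^2}{4}\int_0^1u\,du=\frac{a^2}{8}.
\]

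The only technical point is justifying differentiation under the expectation. This is immediate because $\delta$ is bounded, so $e^{s\delta}$ and its $s$-derivatives are uniformly bounded by constants, and dominated convergence applies. The step needing care is the exact bookkeeping of the integral remainder, which produces the weight $u$ rather than $1-u$ and hence the constant $1/8$. If instead one wrote $\KL=\log Z-\E_Q\delta$ with the opposite sign convention, the constant would come out wrong, so I would double-check the sign via the identity $\KL=\psi'(1)-\psi(1)\ge0$, which holds by convexity of $\psi$.
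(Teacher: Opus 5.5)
Your proof is correct and is essentially the paper's own argument: the paper also writes $\KL(Q\Vert P)=K'(1)-K(1)=\int_0^1 tK''(t)\,dt$ for the cumulant generating function $K(t)=\log\E_Pe^{t\delta}$ and bounds $K''(t)=\Var_{P_t}(\delta)\le a^2/4$. One remark in your last paragraph is off: the weight $1-u$ would not give a wrong constant, since $\int_0^1(1-u)\psi''(u)\,du=\KL(P\Vert Q)$ and it yields the same bound $a^2/8$.
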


\begin{proof}
Let $K(t):=\log\E_Pe^{t\delta}$.  Then $K''(t)=\Var_{P_t}(\delta)$, where
$\dd P_t=e^{t\delta-K(t)}\dd P$.  Since $\delta\in[0,a]$,
$\Var_{P_t}(\delta)\le a^2/4$.  Therefore
\begin{equation}
   \KL(Q\Vert P)=K'(1)-K(1)=\int_0^1 tK''(t)\dd t\le \frac{a^2}{8}.
\end{equation}
\end{proof}

\begin{proposition}[Universal Moreau bias]
\label{prop:general-bias}
Under \Cref{ass:f,ass:g},
\begin{equation}
\label{eq:KL-bias}
   \KL(\pi_\lambda\Vert\pi)
   \le \frac{G^4\lambda^2}{32} .
\end{equation}
Consequently,
\[
\sqrt m\, W_2(\pi_\lambda,\pi)
\le
\frac{G^2\lambda}{4}.
\]
In particular, to make the Moreau bias at most \(\varepsilon_{\rm bias}\) in
\(\sqrt m W_2\), it is sufficient to choose
\[
\lambda \le \frac{4\varepsilon_{\rm bias}}{G^2}.
\]

\end{proposition}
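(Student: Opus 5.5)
The plan has two steps: bound the KL divergence via \Cref{lem:bounded-tilt}, then pass to $W_2$ through Talagrand's transportation inequality for the strongly log-concave measure $\pi$.

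\emph{KL bound.} By \eqref{eq:tilt-pilambda-pi}, $\pi_\lambda$ is an exponential tilt of $\pi$ by $\delta_\lambda=g-g_\lambda$. By \eqref{eq:delta-bound}, $0\le\delta_\lambda\le a:=G^2\lambda/2$ pointwise. Before invoking the lemma, I will check that the tilt is well defined: $\pi$ is a probability measure because $f$ is $m$-strongly convex and $g$ is convex and finite, so $e^{-f-g}$ is integrable, and $e^{\delta_\lambda}$ is bounded. Applying \Cref{lem:bounded-tilt} with $P=\pi$, $Q=\pi_\lambda$ and $a=G^2\lambda/2$ then gives
\[
\KL(\pi_\lambda\Vert\pi)\le \frac{a^2}{8}=\frac{G^4\lambda^2}{32},
\]
which is \eqref{eq:KL-bias}.

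\emph{Wasserstein bound.} The potential $f+g$ of $\pi$ is $m$-strongly convex, since $f$ is $m$-strongly convex by \Cref{ass:f} and $g$ is convex by \Cref{ass:g}. The target is not smooth, but Talagrand's $T_2$ inequality holds for every measure whose potential is $m$-strongly convex (Bakry--Émery, or the displacement-convexity proof), so no smoothness of $g$ is required:
\[
W_2^2(\nu,\pi)\le \frac{2}{m}\KL(\nu\Vert\pi).
\]
If the reader prefers to stay inside smooth settings, the same inequality can be obtained by approximating $g$ by $g_\eta$ with $\eta\downarrow0$ and using weak lower semicontinuity of $W_2$ together with convergence of the entropies. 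Taking $\nu=\pi_\lambda$ gives
\[
m\,W_2^2(\pi_\lambda,\pi)\le 2\cdot\frac{G^4\lambda^2}{32}=\frac{G^4\lambda^2}{16},
\]
so $\sqrt m\,W_2(\pi_\lambda,\pi)\le G^2\lambda/4$. The stated sufficient condition $\lambda\le 4\eps_{\mathrm{bias}}/G^2$ is then immediate.

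The only delicate step is justifying Talagrand's inequality for the nonsmooth target $\pi$. I will cite the general statement for strongly log-concave measures (which requires no regularity of the potential) and mention the Moreau-approximation route as a fallback. The remaining steps are direct substitutions.
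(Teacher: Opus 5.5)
Your proposal is correct and follows the paper's proof: you apply \Cref{lem:bounded-tilt} with $a=G^2\lambda/2$ to get the KL bound, then use Talagrand's $T_2$ inequality for the $m$-strongly log-concave $\pi$ to obtain $\sqrt m\,W_2(\pi_\lambda,\pi)\le G^2\lambda/4$. Your extra remarks are sound and make explicit points the paper leaves implicit, namely that the tilt is well defined and that $T_2$ needs no smoothness of $g$.
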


\begin{proof}
By the bound \(0\le \delta_\lambda\le G^2\lambda/2\) and the density-ratio
identity
\[
\frac{\dd\pi_\lambda}{\dd\pi}(x)
=
\frac{\exp\{\delta_\lambda(x)\}}
{\mathbb E_\pi \exp\{\delta_\lambda(X)\}},
\]
\Cref{lem:bounded-tilt}, applied with \(a=G^2\lambda/2\), gives
\[
\KL(\pi_\lambda\|\pi)
\le
\frac{1}{8}\left(\frac{G^2\lambda}{2}\right)^2
=
\frac{G^4\lambda^2}{32}.
\]
Since \(f+g\) is \(m\)-strongly convex, \(\pi\) is \(m\)-strongly log-concave.
The Talagrand \(T_2\) inequality \cite{otto2000generalization,villani2009optimal} therefore yields
\[
W_2^2(\pi_\lambda,\pi)
\le
\frac{2}{m}\KL(\pi_\lambda\|\pi)
\le
\frac{G^4\lambda^2}{16m}.
\]
Taking square roots gives the claimed Wasserstein bound.

\end{proof}

\subsection{Proof of the main theorem}
\label{subsec:proof-main}

\begin{proof}[Proof of \Cref{thm:main-complete}]
\Cref{thm:abstract-complexity} gives
\[
   \Phi_N
   \le e^{-mhN/2}\Phi_0
      +\frac{C}{m}\Bigl\{(\tau_f+G^2+B_{\mathrm{ref}})h
      +M_\lambda^2h^2\Bigr\}.
\]
This is
\eqref{eq:main-fixed-lambda-bound}.  The step-size and iteration choices in
\eqref{eq:main-h-choice}--\eqref{eq:main-N-choice} make the right-hand side at
most $\eps_{\mathrm{alg}}^2/m$, and therefore
$\sqrt m\,\W(\mu_N,\pi_\lambda)\le\eps_{\mathrm{alg}}$.

The bias estimate \eqref{eq:main-bias-condition} is exactly
\Cref{prop:general-bias}.  If $\lambda\le4\eps_{\mathrm{bias}}/G^2$, then
$\sqrt m\,\W(\pi_\lambda,\pi)\le\eps_{\mathrm{bias}}$.  The triangle inequality
for $W_2$ gives
\[
   \sqrt m\,\W(\mu_N,\pi)
   \le \sqrt m\,\W(\mu_N,\pi_\lambda)
      +\sqrt m\,\W(\pi_\lambda,\pi)
   \le \eps_{\mathrm{alg}}+\eps_{\mathrm{bias}}.
\]
The displayed complexity follows by substituting
$\eps_{\mathrm{alg}}=\eps_{\mathrm{bias}}=\eps/2$ and suppressing universal
constants and logarithmic factors.
\end{proof}

\section{Bounding the Reference Active Trace}
\label{sec:reftrace}

The main theorem reduces the fixed-\(\lambda\) analysis to the reference active trace

\[
B_{\rm ref}
:=
\frac1h\int_0^h \mathbb E a_\lambda(Y_t^{\rm ref})\,\dd t,
\qquad
Y_t^{\rm ref}
:=
T_h(X^{\rm ref})+\sqrt{2}\,W_t^{\rm ref},
\qquad
X^{\rm ref}\sim \pi_\lambda .
\]
Here \(W^{\rm ref}\) is a standard Brownian motion in \(\mathbb R^d\), independent of
\(X^{\rm ref}\), \(T_h(x)=x-h\nabla U_\lambda(x)\), and \(a_\lambda=\operatorname{tr}H_\lambda\).
Equivalently, for each fixed \(t\), \(Y_t^{\rm ref}\) has the same law as
\(T_h(X^{\rm ref})+\sqrt{2t}Z\), where \(Z\sim N(0,I_d)\).

The purpose of this section is to give practical bounds on \(B_{\rm ref}\) from the geometry of
the nonsmooth penalty \(g\). The basic mechanism is that the Moreau curvature may be of order
\(1/\lambda\), but in many structured examples it is concentrated in an \(O(\lambda)\)-neighborhood
of an active set. If the reference heat path assigns probability of order \(\lambda\) to such
neighborhoods, then the factor \(1/\lambda\) is cancelled.

We first formulate this idea as an abstract curvature--tube interface. We then prove slice-density
and propagation estimates for the reference heat path, which provide the tube bounds used below.

\subsection{An abstract curvature-tube interface}

The first result is only an interface: it says that a curvature decomposition plus a tube-mass bound implies a bound on \(B_{\rm ref}\). The following subsections
provide general density estimates that make these tube bounds easy to verify.

\begin{assumption}[Curvature decomposition]
\label{ass:reftrace-curvature-decomposition}
For the fixed \(\lambda>0\), suppose that there are a nonnegative measurable function \(b_\lambda\), closed sets \(\Sigma_1,\ldots,\Sigma_J\subset \R^d\), and constants \(a_j,c_j>0\) such that, for Lebesgue-a.e. \(x\),
\begin{equation}
\label{eq:reftrace-curvature-decomposition}
 a_\lambda(x)
 \le b_\lambda(x)
 + \sum_{j=1}^J \frac{c_j}{\lambda}
 \mathbf 1\{\dist(x,\Sigma_j)\le a_j\lambda\}.
\end{equation}
\end{assumption}
Here \(b_\lambda\) represents the part of the curvature that is already integrable along the reference path.  The sets \(\Sigma_j\) represent active or singular sets where the Moreau curvature may be large.

\begin{assumption}[Reference tube mass]
\label{ass:reftrace-tube-mass}
For the reference path \((Y_t^{\rm ref})_{0\le t\le h}\), suppose that there are constants \(B_0,S_j,r_0>0\) such that
\begin{equation}
\label{eq:reftrace-blambda-average}
\frac{1}{h}\int_0^h \E b_\lambda(Y_t^{\rm ref})\,\dd t \le B_0,
\end{equation}
and, for every \(j\in\{1,\ldots,J\}\) and every \(0<r\le r_0\),
\begin{equation}
\label{eq:reftrace-tube-mass}
\frac{1}{h}\int_0^h
\Pp\{\dist(Y_t^{\rm ref},\Sigma_j)\le r\}\,\dd t
\le S_j r^{q_j}.
\end{equation}
\end{assumption}
The exponent \(q_j\) describes the tube-mass scaling near \(\Sigma_j\). For instance,
a codimension-one slab has exponent one, while a ball in a \(q\)-dimensional block has exponent
\(q\).

\begin{proposition}[Curvature-tube control of \(B_{\rm ref}\)]
\label{prop:reftrace-tube-control}
Assume \Cref{ass:reftrace-curvature-decomposition,ass:reftrace-tube-mass}.  If \(a_j\lambda\le r_0\) for every \(j\), then
\begin{equation}
\label{eq:reftrace-tube-bound}
B_{\rm ref}
\le B_0+
\sum_{j=1}^J c_j a_j^{q_j}S_j\lambda^{q_j-1}.
\end{equation}
In particular, every layer with \(q_j\ge 1\) produces no \(1/\lambda\) blow-up in the reference active trace.
\end{proposition}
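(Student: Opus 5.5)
The plan is to integrate the pointwise curvature decomposition of \Cref{ass:reftrace-curvature-decomposition} against the law of the reference heat path, then bound each tube term using \Cref{ass:reftrace-tube-mass}.

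The first step is to justify evaluating the a.e. inequality \eqref{eq:reftrace-curvature-decomposition} at $Y_t^{\rm ref}$. For every $t\in(0,h]$, the law of $Y_t^{\rm ref}=T_h(X^{\rm ref})+\sqrt{2t}Z$ is a mixture of nondegenerate Gaussians, so it is absolutely continuous with respect to Lebesgue measure. Hence the Lebesgue-null exceptional set is also null for the law of $Y_t^{\rm ref}$, exactly as in the proof of \Cref{prop:reference-TV-transfer}. Since $B_{\rm ref}$ may be read as an integral over $(0,h]$ (\Cref{def:reference-active-trace}), the value $t=0$ plays no role.

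Taking expectations, which is legitimate because all terms are nonnegative and measurable, gives for each $t\in(0,h]$
\[
\mathbb E\,a_\lambda(Y_t^{\rm ref})\le \mathbb E\,b_\lambda(Y_t^{\rm ref})+\sum_{j=1}^J\frac{c_j}{\lambda}\,\Pp\{\dist(Y_t^{\rm ref},\Sigma_j)\le a_j\lambda\}.
\]
Here I use that $\{\dist(\cdot,\Sigma_j)\le a_j\lambda\}$ is closed, since $\Sigma_j$ is closed and the distance function is continuous, so the indicator is measurable.

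Next I average over $t\in(0,h]$, dividing by $h$. Tonelli's theorem lets the time average pass inside the finite sum. The $b_\lambda$ term is at most $B_0$ by \eqref{eq:reftrace-blambda-average}. For the $j$-th tube term, I apply \eqref{eq:reftrace-tube-mass} with $r=a_j\lambda$, which is admissible because $0<a_j\lambda\le r_0$ by hypothesis. This bounds the term by $\frac{c_j}{\lambda}S_j(a_j\lambda)^{q_j}=c_ja_j^{q_j}S_j\lambda^{q_j-1}$, and summing over $j$ gives \eqref{eq:reftrace-tube-bound}.

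For the final remark: when $q_j\ge1$, the quantity $\lambda^{q_j-1}$ is bounded as $\lambda\downarrow0$, namely by $r_0^{q_j-1}/a_j^{q_j-1}$ in the admissible range, so no $1/\lambda$ blow-up occurs.

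There is no real obstacle here; the argument is bookkeeping. The only delicate point is the null-set issue in the first step, and the absolute continuity of the heat-path marginals for $t>0$ settles it.
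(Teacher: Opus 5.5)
Your proposal is correct and follows the paper's proof essentially line for line. Like the paper, you use the absolute continuity of the law of $Y_t^{\rm ref}$ for $t>0$ to integrate the a.e.\ decomposition along the path, then apply the two averaged bounds with $r=a_j\lambda\le r_0$. You also spell out a few routine details that the paper leaves implicit: the measurability of the tube indicators, Tonelli, and the explicit bound $\lambda^{q_j-1}\le (r_0/a_j)^{q_j-1}$.
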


\begin{proof}
For every \(t>0\), the law of \(Y_t^{\rm ref}\) has a density.  Hence the a.e. curvature inequality \eqref{eq:reftrace-curvature-decomposition} may be integrated along the path; the single endpoint \(t=0\) has no effect on the time average.  Therefore
\begin{align}
B_{\rm ref}
&\le \frac{1}{h}\int_0^h \E b_\lambda(Y_t^{\rm ref})\,\dd t
 + \sum_{j=1}^J \frac{c_j}{\lambda h}
 \int_0^h
 \Pp\{\dist(Y_t^{\rm ref},\Sigma_j)\le a_j\lambda\}\,\dd t  \notag\\
&\le B_0+
\sum_{j=1}^J \frac{c_j}{\lambda} S_j(a_j\lambda)^{q_j}.
\end{align}
\end{proof}

\begin{remark}[Role of the interface]
\Cref{prop:reftrace-tube-control} is a bookkeeping device rather than a final
assumption. The rest of this section gives sufficient conditions under which the tube-mass constants
\(S_j\) and exponents \(q_j\) follow from slice-density bounds for \(\pi_\lambda\) and their
propagation under \(T_h\) and the heat step.
\end{remark}

\subsection{Slice density of \(\pi_\lambda\)}

We now develop the density estimates used to control slab and ball
probabilities for the reference heat path. The starting point is a
slice-density bound for the stationary law \(\pi_\lambda\). The key
feature is that, along a chosen subspace \(E\), the bound is governed by
directional quantities for \(f\) and \(g\), rather than by the global
Moreau smoothness scale \(\lambda^{-1}\).

Let \(E\subset\R^d\) be a fixed linear subspace with dimension \(q\ge 1\).  Let \(P_E\) denote the orthogonal projection onto \(E\), and let \(E^\perp\) be the orthogonal complement.  Every point \(x\in\R^d\) can be written uniquely as \(x=z+u\), where \(z\in E^\perp\) and \(u\in E\).  Lebesgue measure on \(E\) is denoted by \(\dd u\).

Define the directional smoothness of \(f\) along \(E\) by
\begin{equation}
\label{eq:reftrace-directional-L}
L_E := \sup_{x\in\R^d}\|P_E\nabla^2 f(x)P_E\|_{\op}.
\end{equation}
By \Cref{ass:f}, \(L_E\le L_f\).  Define the directional Lipschitz size of \(g\) along \(E\) by
\begin{equation}
\label{eq:reftrace-directional-G}
G_E := \sup\bigl\{\|P_E s\|:x\in\R^d,\ s\in\partial g(x)\bigr\}.
\end{equation}
Since \(g\) is \(G\)-Lipschitz on \(\R^d\), \(G_E\le G\).  Moreover, by the proximal optimality condition,
\begin{equation}
\label{eq:reftrace-moreau-subgradient}
\nabla g_\lambda(x)\in \partial g(p_\lambda(x)),
\qquad p_\lambda(x)=\prox_{\lambda g}(x),
\end{equation}
so \(\|P_E\nabla g_\lambda(x)\|\le G_E\) for every \(x\).

Finally set
\begin{equation}
\label{eq:reftrace-BE}
B_E :=
\left[
\int_E
\exp\left\{-\frac{L_E}{2}\|u\|^2-2G_E\|u\|\right\}\,\dd u
\right]^{-1}.
\end{equation}
This constant is the scale of the slice-density bound below. It depends
on the chosen subspace \(E\) only through \(q\), \(L_E\), and \(G_E\), and
it does not involve the Moreau parameter \(\lambda\).

When \(E=\mathrm{span}(v)\) for a unit vector \(v\), we write \(L_v\), \(G_v\), and \(B_v\) in place of \(L_E\), \(G_E\), and \(B_E\). In this case
\begin{equation}
\label{eq:reftrace-Bv}
B_v=
\left[
\int_{\R}
\exp\left\{-\frac{L_v}{2}s^2-2G_v|s|\right\}\,\dd s
\right]^{-1}.
\end{equation}

\begin{lemma}[Slice conditional density]
\label{lem:reftrace-slice-density}
Let \(X^{\rm ref}\sim\pi_\lambda\).  For every \(z\in E^\perp\), the conditional density of \(P_E X^{\rm ref}\) on the slice \(z+E\) is bounded by \(B_E\).  Equivalently, the density proportional to
\begin{equation}
 u\mapsto \exp\{-U_\lambda(z+u)\},
 \qquad u\in E,
\end{equation}
has \(L^\infty(E)\)-norm at most \(B_E\).
\end{lemma}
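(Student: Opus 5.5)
Fix \(z\in E^\perp\) and write \(\phi(u):=U_\lambda(z+u)\) for \(u\in E\). The conditional density on the slice is
\[
\rho_z(u)=\frac{e^{-\phi(u)}}{\int_E e^{-\phi(w)}\,\dd w}.
\]
The plan is to compare the normalizing integral with the value of the unnormalized density at a single point, using one-sided quadratic-plus-linear upper bounds on \(\phi\). Concretely, I will show that for every \(u_0\in E\) and every \(w\in E\),
\begin{equation*}
\phi(u_0+w)\le \phi(u_0)+\ip{\nabla_E\phi(u_0)}{w}+\frac{L_E}{2}\norm{w}^2+2G_E\norm{w}\quad(\star)
\end{equation*}
fails to be directly usable, because of the linear gradient term. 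So I will instead work at the maximizer of the slice density and handle that term there.

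\textbf{Step 1 (existence of a maximizer).} Since \(U_\lambda\) is \(m\)-strongly convex and \(C^{1}\), \(\phi\) is strongly convex on \(E\). Hence it has a unique minimizer \(u^\ast\), and \(\rho_z\le \rho_z(u^\ast)\). It therefore suffices to show
\[
\int_E e^{-\phi(u^\ast+w)+\phi(u^\ast)}\,\dd w\ \ge\ B_E^{-1}.
\]

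\textbf{Step 2 (upper bound on \(\phi(u^\ast+w)-\phi(u^\ast)\)).} I split \(\phi=\phi_f+\phi_g\) with \(\phi_f(u)=f(z+u)\) and \(\phi_g(u)=g_\lambda(z+u)\).

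For \(f\), Taylor's theorem with the directional bound \(\norm{P_E\nabla^2 f P_E}_{\op}\le L_E\) gives
\[
\phi_f(u^\ast+w)\le \phi_f(u^\ast)+\ip{P_E\nabla f(z+u^\ast)}{w}+\frac{L_E}{2}\norm{w}^2 .
\]

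For \(g_\lambda\), the key point is that I must not use its \(1/\lambda\) smoothness. Instead I use the directional Lipschitz bound. By \eqref{eq:reftrace-moreau-subgradient}, \(\norm{P_E\nabla g_\lambda}\le G_E\) everywhere, so the mean value theorem along the segment in \(E\) gives
\[
\phi_g(u^\ast+w)-\phi_g(u^\ast)\le G_E\norm{w}.
\]

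\textbf{Step 3 (removing the linear term).} First-order optimality at \(u^\ast\) gives
\[
P_E\nabla f(z+u^\ast)=-P_E\nabla g_\lambda(z+u^\ast),
\]
and the right-hand side has norm at most \(G_E\). Hence the linear \(f\)-term is bounded by \(G_E\norm{w}\). Combining with Step 2,
\[
\phi(u^\ast+w)-\phi(u^\ast)\le \frac{L_E}{2}\norm{w}^2+2G_E\norm{w},
\]
which is exactly the exponent in \eqref{eq:reftrace-BE}. Integrating \(e^{-(\cdot)}\) over \(w\in E\) yields the normalizing bound from Step 1, hence \(\rho_z\le B_E\) everywhere.

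\textbf{Step 4 (conditional-density interpretation).} The joint density of \(X^{\rm ref}\) in the coordinates \((z,u)\) is proportional to \(e^{-U_\lambda(z+u)}\). The conditional density of \(P_EX^{\rm ref}\) given \(P_{E^\perp}X^{\rm ref}=z\) is therefore \(\rho_z\). This is well defined for every \(z\), since strong convexity makes each slice integral finite and positive.

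\textbf{Main obstacle.} The main obstacle is Step 3: the linear term from \(f\) can be large at an arbitrary point, and the trick is to evaluate at the slice minimizer, where stationarity trades \(\nabla f\) for \(\nabla g_\lambda\), whose \(E\)-component is controlled by \(G_E\) uniformly in \(\lambda\). This is where the factor \(2G_E\) originates.
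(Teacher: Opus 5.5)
Your proof is correct and follows essentially the same route as the paper. You evaluate at the slice minimizer and use first-order optimality to trade $P_E\nabla f$ for $-P_E\nabla g_\lambda$, which has norm at most $G_E$. You then combine the directional $L_E$ Taylor bound for $f$ with the $G_E$-Lipschitz bound for $g_\lambda$ along $E$, and lower-bound the slice normalizing constant by $e^{-\phi(u^\ast)}B_E^{-1}$. The opening sentence about $(\star)$ is garbled as written, but the argument never uses it.
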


\begin{proof}
Fix \(z\in E^\perp\) and write
\begin{equation}
\phi_z(u):=U_\lambda(z+u),
\qquad u\in E.
\end{equation}
The function \(\phi_z\) is convex and coercive on \(E\), because \(f\) is strongly convex and \(g_\lambda\) is convex.  Let \(u_z\) be a minimizer of \(\phi_z\).  The first-order condition on the slice gives
\begin{equation}
P_E\nabla f(z+u_z)+P_E\nabla g_\lambda(z+u_z)=0.
\end{equation}
By \eqref{eq:reftrace-moreau-subgradient},
\begin{equation}
\|P_E\nabla f(z+u_z)\|
=\|P_E\nabla g_\lambda(z+u_z)\|
\le G_E.
\end{equation}
For any \(v\in E\), the definition of \(L_E\) gives
\begin{equation}
f(z+u_z+v)
\le f(z+u_z)+\langle P_E\nabla f(z+u_z),v\rangle
+\frac{L_E}{2}\|v\|^2.
\end{equation}
Also, because \(g_\lambda\) is \(G_E\)-Lipschitz along \(E\),
\begin{equation}
g_\lambda(z+u_z+v)
\le g_\lambda(z+u_z)+G_E\|v\|.
\end{equation}
Combining the previous three displays,
\begin{equation}
\phi_z(u_z+v)
\le \phi_z(u_z)+\frac{L_E}{2}\|v\|^2+2G_E\|v\|.
\end{equation}
The conditional normalizing constant on the slice is therefore at least
\begin{equation}
e^{-\phi_z(u_z)}
\int_E
\exp\left\{-\frac{L_E}{2}\|v\|^2-2G_E\|v\|\right\}\,\dd v.
\end{equation}
Since the conditional density is maximized at a minimizer of \(\phi_z\), its maximum is bounded by \(B_E\).
\end{proof}

\subsection{Density propagation by \(T_h\) and by the heat step}

The previous lemma gives slice-density bounds for the stationary
input \(X^{\rm ref}\sim\pi_\lambda\). The reference active trace,
however, is evaluated along the one-step reference heat path
\[
Y_t^{\rm ref}=T_h(X^{\rm ref})+\sqrt{2t}Z .
\]
We therefore need to propagate the slice-density bound through the
deterministic Euler map \(T_h\) and then through the Gaussian heat step.

The deterministic part requires a small-step condition ensuring that
\(T_h\) is injective on each \(E\)-slice. Once this is available, the heat
step is harmless: convolution with a probability density cannot increase
an \(L^\infty\) density bound.

For the same subspace \(E\), define
\begin{equation}
\label{eq:reftrace-alpha-E}
\alpha_E := 1-h(L_E+\lambda^{-1}).
\end{equation}
The useful case is \(\alpha_E>0\).

\begin{lemma}[Slice density propagation]
\label{lem:reftrace-density-propagation}
Let \(X^{\rm ref}\sim\pi_\lambda\) and \(Y_t^{\rm ref}=T_h(X^{\rm ref})+\sqrt{2t}\,Z\), where \(Z\sim N(0,I_d)\) is independent of \(X^{\rm ref}\).  If \(\alpha_E>0\), then, for every Borel set \(A\subset E\) and every \(0\le t\le h\),
\begin{equation}
\label{eq:reftrace-density-propagation}
\Pp\{P_E Y_t^{\rm ref}\in A\}
\le \frac{B_E}{\alpha_E^q}\,\vol_E(A),
\end{equation}
where \(\vol_E\) denotes Lebesgue measure on \(E\).
\end{lemma}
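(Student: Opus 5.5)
The plan is to reduce the claim to the case $t=0$ and then handle $T_h$ slice by slice. First I will dispose of the heat step. Write $P_EY_t^{\rm ref}=P_ET_h(X^{\rm ref})+\sqrt{2t}\,P_EZ$. Here $P_EZ$ is independent of $X^{\rm ref}$ and, for $t>0$, is a centered Gaussian on $E$ with a probability density. Suppose the law of $P_ET_h(X^{\rm ref})$ has a density on $E$ bounded by $B_E\alpha_E^{-q}$. Convolving with a probability measure on $E$ does not increase the $L^\infty(E)$ bound of a density, so the same bound holds for every $t\in[0,h]$. Integrating the density bound over $A$ then gives \eqref{eq:reftrace-density-propagation}. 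It therefore suffices to prove the bound for $t=0$, that is, for $P_ET_h(X^{\rm ref})$.

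For the deterministic step, I will condition on $z=P_{E^\perp}X^{\rm ref}$ and write $X^{\rm ref}=z+u$. By \Cref{lem:reftrace-slice-density}, conditionally on $z$ the variable $u$ has a density $\rho_z$ on $E$ with $\|\rho_z\|_{L^\infty(E)}\le B_E$. On the slice, define
\[
S_z(u):=P_ET_h(z+u)=u-hP_E\nabla U_\lambda(z+u),\qquad u\in E,
\]
so that $S_z=\mathrm{id}_E-h\nabla\phi_z$, where $\phi_z(u)=U_\lambda(z+u)$ as in the proof of \Cref{lem:reftrace-slice-density}. The function $\phi_z$ is convex and $C^{1,1}$ on $E$. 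Its gradient is Lipschitz with constant at most $L_E+\lambda^{-1}$, because $\|P_E\nabla^2f P_E\|_{\op}\le L_E$ and $\Lip(\nabla g_\lambda)\le\lambda^{-1}$.

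The key step is to show that $S_z$ is injective with a quantitative inverse Lipschitz bound. For $u,u'\in E$, co-coercivity of $\nabla\phi_z$ (Baillon--Haddad, with constant $\beta=L_E+\lambda^{-1}$) gives
\[
\|S_z(u)-S_z(u')\|^2\ge\|u-u'\|^2-2h\langle\nabla\phi_z(u)-\nabla\phi_z(u'),u-u'\rangle+h^2\|\nabla\phi_z(u)-\nabla\phi_z(u')\|^2 .
\]
A simpler route avoids co-coercivity. At points of differentiability, $DS_z=I_E-h\nabla^2\phi_z$ has eigenvalues in $[1-h\beta,1]=[\alpha_E,1]$, since $0\preceq\nabla^2\phi_z\preceq\beta I$. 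Consequently $S_z$ is the gradient of the $\alpha_E$-strongly convex function $\tfrac12\|u\|^2-h\phi_z(u)$. Strong monotonicity then gives $\|S_z(u)-S_z(u')\|\ge\alpha_E\|u-u'\|$. So $S_z$ is injective, and its inverse on its image is $\alpha_E^{-1}$-Lipschitz.

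Next I will transfer the density bound through $S_z$. For Borel $A\subset E$,
\[
\Pp\{S_z(u)\in A\mid z\}\le B_E\,\vol_E(S_z^{-1}(A)).
\]
Since $S_z^{-1}(A)$ is the image of $A\cap S_z(E)$ under an $\alpha_E^{-1}$-Lipschitz map, its volume is at most $\alpha_E^{-q}\vol_E(A)$. This uses the standard fact that a Lipschitz map scales $q$-dimensional Lebesgue (Hausdorff) measure by at most the $q$-th power of the constant. Alternatively, the area formula with $|\det DS_z|\ge\alpha_E^q$ gives the same bound. Averaging over $z$ yields the $t=0$ case, and the heat-step argument above completes the proof. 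Measurability of the conditional decomposition is routine by Fubini applied to the joint density of $\pi_\lambda$.

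I expect the injectivity and volume-distortion step to be the main obstacle. The map $T_h$ mixes the $E$ and $E^\perp$ components, so I must check that only $P_ET_h$ restricted to a fixed $z$-slice matters. Conditioning on $z$ handles this: the image $P_ET_h(X^{\rm ref})$ depends only on $(z,u)$, and for fixed $z$ the map is $S_z$. The remaining worry is that $\nabla^2\phi_z$ exists only almost everywhere. The strong-convexity formulation, via monotonicity of gradients of convex $C^{1,1}$ functions, avoids needing Hessians pointwise.
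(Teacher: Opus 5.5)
Your proposal is correct and follows essentially the same route as the paper: the slice map $S_z=P_ET_h(z+\cdot)$, and the lower bound $\|S_z(u)-S_z(u')\|\ge\alpha_E\|u-u'\|$ obtained from the slice-gradient Lipschitz constant $L_E+\lambda^{-1}$. The paper gets that lower bound from the reverse triangle inequality, which is the cleanest form of your strong-monotonicity step and needs no Hessians; it then uses the same $\alpha_E^{-q}$ volume distortion with the $B_E$ slice bound and the same translation-invariance argument for the heat step. The one small difference is that you apply the Lipschitz volume bound to the inverse defined only on $S_z(E)$, which lets you skip the Banach fixed-point surjectivity argument the paper uses to get a globally defined inverse.
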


\begin{proof}
For fixed \(z\in E^\perp\), consider the map from the slice \(z+E\) to the \(E\)-coordinate after the deterministic step:
\begin{equation}\label{eq:reftrace-Sz}
S_z(u):=P_E T_h(z+u),
\qquad u\in E.
\end{equation}
We first study the deterministic map \(S_z\) on each slice. Fix \(z\in E^\perp\). For \(u,w\in E\), the fundamental theorem of
calculus and the definition of \(L_E\) give
\begin{align*}
    &\left\|
        P_E\bigl\{
            \nabla f(z+u)-\nabla f(z+w)
        \bigr\}
    \right\|=
    \left\|
        \int_0^1
        P_E\nabla^2 f\bigl(z+w+s(u-w)\bigr)P_E(u-w)
        \,ds
    \right\|\le L_E\|u-w\|.
\end{align*}
Here we used \(P_E(u-w)=u-w\). Moreover, by
\Cref{lem:moreau-regularity},
\(\operatorname{Lip}(\nabla g_\lambda)\le\lambda^{-1}\), and hence
\[
    \left\|
        P_E\bigl\{
            \nabla g_\lambda(z+u)-\nabla g_\lambda(z+w)
        \bigr\}
    \right\|
    \le
    \lambda^{-1}\|u-w\|.
\]
Since \(U_\lambda=f+g_\lambda\), it follows that
\begin{equation}
    \left\|
        P_E\bigl\{
            \nabla U_\lambda(z+u)-\nabla U_\lambda(z+w)
        \bigr\}
    \right\|
    \le
    (L_E+\lambda^{-1})\|u-w\|.
    \label{eq:slice-gradient-lipschitz}
\end{equation}

Using the definition of \(S_z\) in \eqref{eq:reftrace-Sz}, the reverse
triangle inequality and \eqref{eq:slice-gradient-lipschitz} yield
\begin{align}
    \|S_z(u)-S_z(w)\|
    &=
    \left\|
        u-w
        -hP_E\bigl\{
            \nabla U_\lambda(z+u)-\nabla U_\lambda(z+w)
        \bigr\}
    \right\|
    \notag\\
    &\ge
    \|u-w\|
    -
    h\left\|
        P_E\bigl\{
            \nabla U_\lambda(z+u)-\nabla U_\lambda(z+w)
        \bigr\}
    \right\|
    \notag\\
    &\ge
    \bigl\{1-h(L_E+\lambda^{-1})\bigr\}\|u-w\|
    \notag\\
    &=
    \alpha_E\|u-w\|.
    \label{eq:slice-lower-lipschitz}
\end{align}
In particular, \(S_z\) is injective.

We next prove that \(S_z\) is surjective. Fix \(y\in E\). Consider the
self-map of \(E\) given by
\[
    u\longmapsto y+hP_E\nabla U_\lambda(z+u).
\]
By \eqref{eq:slice-gradient-lipschitz}, this map is Lipschitz with
constant
\[
    h(L_E+\lambda^{-1})=1-\alpha_E<1.
\]
Since \(E\) is a finite-dimensional Hilbert space and therefore
complete, the Banach fixed-point theorem gives a unique \(u\in E\)
such that
\[
    u=y+hP_E\nabla U_\lambda(z+u).
\]
By \eqref{eq:reftrace-Sz}, this identity is equivalent to
\(S_z(u)=y\). Since \(y\in E\) was arbitrary, \(S_z:E\to E\) is
surjective and hence bijective.

Furthermore, \eqref{eq:slice-lower-lipschitz} implies that its inverse
\(S_z^{-1}:E\to E\) satisfies
\[
    \|S_z^{-1}(y)-S_z^{-1}(y')\|
    \le
    \alpha_E^{-1}\|y-y'\|,
    \qquad y,y'\in E.
\]
Thus \(S_z^{-1}\) is globally \(\alpha_E^{-1}\)-Lipschitz on all of
\(E\).

After choosing an orthonormal basis of \(E\), we identify \(E\)
isometrically with \(\mathbb{R}^q\). The standard measure-distortion
inequality for Lipschitz maps
\cite[Theorem~2.8]{evans2025measure} can therefore be applied directly
to the globally defined map \(S_z^{-1}:E\to E\). Consequently, for
every Borel set \(A\subset E\),
\begin{equation}
    \operatorname{vol}_E\bigl(S_z^{-1}(A)\bigr)
    \le
    \operatorname{Lip}(S_z^{-1})^q
    \operatorname{vol}_E(A)
    \le
    \alpha_E^{-q}\operatorname{vol}_E(A).
    \label{eq:slice-preimage-volume}
\end{equation}
Here
\[
    S_z^{-1}(A)=\{u\in E:S_z(u)\in A\}.
\]
This set is Borel because \(S_z\) is continuous.

We now propagate the conditional slice-density bound. For
\(z\in E^\perp\), define
\[
    \rho_{\lambda,z}(u)
    :=
    \frac{\exp\{-U_\lambda(z+u)\}}
    {\displaystyle
        \int_E\exp\{-U_\lambda(z+v)\}\,\dd v},
    \qquad u\in E.
\]
The kernel \(\rho_{\lambda,z}(u)\,\dd u\) is a version of the conditional
law of \(P_EX^{\rm ref}\) given
\(P_{E^\perp}X^{\rm ref}=z\). By
\Cref{lem:reftrace-slice-density},
\[
    \|\rho_{\lambda,z}\|_{L^\infty(E)}
    \le B_E.
\]
Therefore, for every Borel set \(A\subset E\),
\begin{align*}
    &\mathbb{P}\bigl\{
        P_ET_h(X^{\rm ref})\in A
        \,\bigm|\,
        P_{E^\perp}X^{\rm ref}=z
    \bigr\}
    \\
    &\quad=
    \int_{S_z^{-1}(A)}
        \rho_{\lambda,z}(u)\,\dd u
    \\
    &\quad\le
    B_E\operatorname{vol}_E\bigl(S_z^{-1}(A)\bigr)
    \\
    &\quad\le
    B_E\alpha_E^{-q}\operatorname{vol}_E(A),
\end{align*}
where the last inequality follows from
\eqref{eq:slice-preimage-volume}. The bound is uniform in
\(z\in E^\perp\). Integrating with respect to the law of
\(P_{E^\perp}X^{\rm ref}\) gives
\begin{equation}
    \mathbb{P}\{P_ET_h(X^{\rm ref})\in A\}
    \le
    B_E\alpha_E^{-q}\operatorname{vol}_E(A).
    \label{eq:deterministic-slice-density}
\end{equation}

Finally, let \(0\le t\le h\). Since \(Z\) is independent of
\(X^{\rm ref}\), \eqref{eq:deterministic-slice-density} applied
conditionally on \(Z\) gives
\begin{align*}
    \mathbb{P}\{P_EY_t^{\rm ref}\in A\}
    &=
    \mathbb{E}\left[
        \mathbb{P}\left\{
            P_ET_h(X^{\rm ref})
            \in
            A-\sqrt{2t}\,P_EZ
            \,\middle|\,
            Z
        \right\}
    \right]
    \\
    &\le
    B_E\alpha_E^{-q}
    \mathbb{E}\left[
        \operatorname{vol}_E
        \bigl(A-\sqrt{2t}\,P_EZ\bigr)
    \right]
    \\
    &=
    B_E\alpha_E^{-q}\operatorname{vol}_E(A),
\end{align*}
where the last equality follows from translation invariance of
Lebesgue measure on \(E\). This proves
\eqref{eq:reftrace-density-propagation} simultaneously for all
\(0\le t\le h\).
\end{proof}

\subsection{Slab, ball, and inverse-radius consequences}

\Cref{lem:reftrace-density-propagation} immediately gives the estimates used in the examples.  We state them separately to make later verification modules short.

\begin{corollary}[Slab probability]
\label{cor:reftrace-slab}
Let \(v\in\R^d\) be a unit vector and let \(\alpha_v=1-h(L_v+\lambda^{-1})>0\).  Then, for every \(a\in\R\), \(r>0\), and \(0\le t\le h\),
\begin{equation}
\label{eq:reftrace-slab-prob}
\Pp\{|\langle v,Y_t^{\rm ref}\rangle-a|\le r\}
\le \frac{2rB_v}{\alpha_v}.
\end{equation}
Equivalently, if \(d_0\in\R^d\setminus\{0\}\) and \(v=d_0/\|d_0\|\), then
\begin{equation}
\label{eq:reftrace-slab-prob-linear-form}
\Pp\{|d_0^\top Y_t^{\rm ref}-a|\le r\}
\le \frac{2rB_v}{\|d_0\|\alpha_v}.
\end{equation}
\end{corollary}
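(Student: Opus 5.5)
The plan is to specialize \Cref{lem:reftrace-density-propagation} to the one-dimensional subspace \(E=\mathrm{span}(v)\), so that \(q=1\), \(L_E=L_v\), \(G_E=G_v\), \(B_E=B_v\) and \(\alpha_E=\alpha_v\). The hypothesis \(\alpha_v>0\) is exactly the condition \(\alpha_E>0\) required by that lemma.

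The key step is to choose the Borel set correctly. Identify \(E\) with \(\R\) through the isometry \(sv\mapsto s\), so that \(P_E y=\langle v,y\rangle v\) and \(\vol_E\) corresponds to one-dimensional Lebesgue measure. Take
\[
A:=\{sv: |s-a|\le r\}\subset E .
\]
This is a closed segment of length \(2r\), so \(\vol_E(A)=2r\). Moreover, the event \(\{|\langle v,Y_t^{\rm ref}\rangle-a|\le r\}\) coincides with \(\{P_EY_t^{\rm ref}\in A\}\). Applying \eqref{eq:reftrace-density-propagation} with \(q=1\) then gives the bound \(B_v\alpha_v^{-1}\cdot 2r\), uniformly in \(0\le t\le h\). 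This is \eqref{eq:reftrace-slab-prob}.

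For the linear form \eqref{eq:reftrace-slab-prob-linear-form}, divide through by \(\|d_0\|\). Since \(|d_0^\top y-a|\le r\) holds exactly when \(|\langle v,y\rangle-a/\|d_0\||\le r/\|d_0\|\), apply the first bound with \(a\) replaced by \(a/\|d_0\|\) and \(r\) replaced by \(r/\|d_0\|\).

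No step poses a real obstacle. The only points needing care are that \(L_v,G_v\) are the directional constants for \(E=\mathrm{span}(v)\), so that \(B_E\) coincides with \eqref{eq:reftrace-Bv}, and that \(\vol_E\) under the chosen isometry is the standard Lebesgue measure.
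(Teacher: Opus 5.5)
Your proposal is correct and follows essentially the same route as the paper: you apply \Cref{lem:reftrace-density-propagation} with \(E=\mathrm{span}(v)\) and the segment \(A=\{sv:|s-a|\le r\}\) of length \(2r\). For \eqref{eq:reftrace-slab-prob-linear-form} you rescale \(a\) and \(r\) by \(\|d_0\|\).
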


\begin{proof}
Apply \Cref{lem:reftrace-density-propagation} with \(E=\mathrm{span}(v)\) and with \(A=\{sv: s\in[a-r,a+r]\}\).  For \eqref{eq:reftrace-slab-prob-linear-form}, rewrite the event as
\begin{equation}
\left|\left\langle v,Y_t^{\rm ref}\right\rangle-\frac{a}{\|d_0\|}\right|
\le \frac{r}{\|d_0\|}.
\end{equation}
\end{proof}

\begin{corollary}[Ball probability]
\label{cor:reftrace-ball}
Let \(E\subset\R^d\) have dimension \(q\), and assume \(\alpha_E>0\).  Let \(v_q\) be the volume of the unit ball in \(\R^q\).  Then, for every \(y\in E\), \(r>0\), and \(0\le t\le h\),
\begin{equation}
\label{eq:reftrace-ball-prob}
\Pp\{\|P_E Y_t^{\rm ref}-y\|\le r\}
\le \frac{B_E}{\alpha_E^q}v_qr^q.
\end{equation}
\end{corollary}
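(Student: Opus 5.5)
The plan is to apply \Cref{lem:reftrace-density-propagation} directly, with the closed ball $A:=\{u\in E:\|u-y\|\le r\}$, which is a Borel subset of $E$. The event in \eqref{eq:reftrace-ball-prob} is exactly $\{P_EY_t^{\rm ref}\in A\}$. Since $\alpha_E>0$ by hypothesis and $0\le t\le h$, the lemma gives
\[
\Pp\{\|P_E Y_t^{\rm ref}-y\|\le r\}\le \frac{B_E}{\alpha_E^q}\vol_E(A).
\]

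It then remains to compute $\vol_E(A)$. Choose an orthonormal basis of $E$; this gives an isometric identification of $E$ with $\R^q$ under which $\vol_E$ corresponds to $q$-dimensional Lebesgue measure, as already used in the proof of \Cref{lem:reftrace-density-propagation}. Under this identification, $A$ is a Euclidean ball of radius $r$ in $\R^q$. By translation invariance and the scaling of Lebesgue measure, $\vol_E(A)=v_qr^q$. Substituting this value yields \eqref{eq:reftrace-ball-prob}.

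There is no real obstacle; the argument mirrors the slab corollary. The only points to check are that $y\in E$, so that $A\subset E$ and the lemma applies, and that the bound holds uniformly for all $t\in[0,h]$, including $t=0$. The lemma already covers $t=0$.
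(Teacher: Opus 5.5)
Your proposal is correct and follows essentially the same route as the paper. The paper likewise applies \Cref{lem:reftrace-density-propagation} to the closed ball $A=\{u\in E:\|u-y\|\le r\}$ and uses $\vol_E(A)=v_qr^q$.
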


\begin{proof}
Apply \Cref{lem:reftrace-density-propagation} to the ball \(A=\{u\in E:\|u-y\|\le r\}\), whose \(E\)-volume is \(v_qr^q\).
\end{proof}

\begin{corollary}[Inverse-radius bound]
\label{cor:reftrace-inv-radius}
Let \(E\subset\R^d\) have dimension \(q\ge 2\), and assume \(\alpha_E>0\).  Then, for every \(y\in E\) and \(0\le t\le h\),
\begin{equation}
\label{eq:reftrace-inv-radius-bound}
\E\frac{1}{\|P_E Y_t^{\rm ref}-y\|}
\le C_q\left(\frac{B_E}{\alpha_E^q}\right)^{1/q},
\qquad
C_q:=\frac{q}{q-1}v_q^{1/q}.
\end{equation}
\end{corollary}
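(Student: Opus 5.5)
The plan is to prove the inverse-radius bound with the layer-cake formula, using the ball probability of \Cref{cor:reftrace-ball} as the only input on the distribution of \(P_EY_t^{\rm ref}\). Set \(R:=\|P_E Y_t^{\rm ref}-y\|\) and \(K:=B_E\alpha_E^{-q}v_q\). \Cref{cor:reftrace-ball} gives, for every \(r>0\) and every \(0\le t\le h\),
\[
\Pp\{R\le r\}\le K r^q .
\]
By the layer-cake representation of a nonnegative random variable,
\[
\E\frac1R=\int_0^\infty \Pp\{R^{-1}>s\}\,\dd s=\int_0^\infty \Pp\{R<1/s\}\,\dd s .
\]
The event \(\{R=0\}\) has probability zero, since the ball bound forces \(\Pp\{R\le r\}\to0\) as \(r\downarrow0\). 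So \(1/R\) is a.s. finite and the representation is legitimate.

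Next, split the \(s\)-integral at a threshold \(s_0>0\). For \(s\le s_0\), use the trivial bound \(\Pp\{\cdot\}\le1\), which contributes at most \(s_0\). For \(s>s_0\), use the ball bound \(\Pp\{R<1/s\}\le K s^{-q}\). Because \(q\ge2>1\), the tail integral converges:
\[
\int_{s_0}^\infty K s^{-q}\,\dd s=\frac{K s_0^{1-q}}{q-1}.
\]
Hence
\[
\E\frac1R\le s_0+\frac{K}{q-1}s_0^{1-q}.
\]

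To finish, optimize over \(s_0\). Setting the derivative to zero gives \(1=K s_0^{-q}\), that is, \(s_0=K^{1/q}\). Substituting yields
\[
K^{1/q}\Bigl(1+\frac1{q-1}\Bigr)=\frac{q}{q-1}\,v_q^{1/q}\Bigl(\frac{B_E}{\alpha_E^q}\Bigr)^{1/q},
\]
which is exactly the constant \(C_q\) in the statement. The bound holds uniformly for \(0\le t\le h\), because the ball estimate does.

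There is no real obstacle here. The only points needing care are that \(q\ge2\) is exactly what makes the tail integral finite (for \(q=1\) the integral diverges logarithmically), and that the a.s. finiteness of \(1/R\) justifies the layer-cake identity. No regularity of \(a_\lambda\) enters the argument; everything has already been absorbed into \Cref{lem:reftrace-density-propagation}.
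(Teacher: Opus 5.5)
Your proof is correct and follows the paper's argument: the paper bounds the tail $\Pp\{\|P_EY_t^{\rm ref}-y\|^{-1}>s\}$ by $\min\{1,\overline B_E v_q s^{-q}\}$ using the density bound of \Cref{lem:reftrace-density-propagation}, then integrates with the split at $s_0=(\overline B_E v_q)^{1/q}$. This is exactly your threshold $K^{1/q}$, and it gives the same constant $C_q$.
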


\begin{proof}
Let \(W=P_E Y_t^{\rm ref}\).  By \Cref{lem:reftrace-density-propagation}, \(W\) has density at most
\begin{equation}
\overline B_E:=\frac{B_E}{\alpha_E^q}.
\end{equation}
Therefore, for every \(s>0\),
\begin{equation}
\Pp\{\|W-y\|^{-1}>s\}
=\Pp\{\|W-y\|<s^{-1}\}
\le \min\{1,\overline B_Ev_qs^{-q}\}.
\end{equation}
Integrating the tail bound gives
\begin{align}
\E\frac{1}{\|W-y\|}
&=\int_0^\infty \Pp\{\|W-y\|^{-1}>s\}\,\dd s 
\notag\\
&\le \int_0^{(\overline B_Ev_q)^{1/q}}1\,\dd s
+\overline B_Ev_q\int_{(\overline B_Ev_q)^{1/q}}^\infty s^{-q}\,\dd s
\notag\\
&=\frac{q}{q-1}(\overline B_Ev_q)^{1/q}.
\end{align}
\end{proof}

\section{Examples and Verification Modules}\label{sec:examples-verification}

This section applies the reference active-trace framework to four classes of
nonsmooth penalties.  We retain the notation of the previous sections.  Let
\(X^{\rm ref}\sim\pi_\lambda\), let
\((W_t^{\rm ref})_{t\ge0}\) be a standard Brownian motion in
\(\mathbb R^d\), independent of \(X^{\rm ref}\), and define
\[
    Y_t^{\rm ref}
    :=
    T_h(X^{\rm ref})+\sqrt{2}\,W_t^{\rm ref},
    \qquad 0\le t\le h,
\]
where $T_h(x)=x-h\nabla U_\lambda(x), U_\lambda=f+g_\lambda$.
The corresponding reference active trace is
\[
    B_{\rm ref}
    :=
    \frac1h\int_0^h
    \mathbb E\,a_\lambda(Y_t^{\rm ref})\,\dd t.
\]

Each example has the same structure: we first identify the region on which
the Moreau curvature \(a_\lambda\) is active, then control the mass of that
region under the reference heat path using the slice-density estimates of
\Cref{sec:reftrace}, and finally substitute the resulting bound on \(B_{\rm ref}\)
into \Cref{thm:main-complete}.

We use the symmetric
error split \(\varepsilon_{\rm alg}=\varepsilon_{\rm bias}=\varepsilon/2\) and the universal
Moreau-bias choice
\begin{equation}\label{eq:examples-lambda-choice}
        \lambda=\frac{2\varepsilon}{G^2}.
\end{equation}

\paragraph{Automatic compatibility with the density-propagation estimates.}
For a subspace \(E\), \Cref{lem:reftrace-density-propagation} is applied with
\[
    \alpha_E
    :=
    1-h(L_E+\lambda^{-1}).
\]
In the estimates below, it is sufficient to have
\(\alpha_E\ge1/2\).  This is not an additional assumption. It is automatic under the
step-size requirement \eqref{eq:main-h-choice} in \Cref{thm:main-complete}. 

Indeed, the \(L_f^{-1}\) restriction gives
\[
    hL_f\le c,
\]
whereas the
\(\varepsilon_{\rm alg}^2/(\tau_f+G^2+B_{\rm ref})\) restriction gives,
after adjusting the universal constant \(c\),
\[
    h\le c\frac{\varepsilon^2}{G^2}.
\]
Since \(L_E\le L_f\), \(\lambda=2\varepsilon/G^2\), and
\(0<\varepsilon\le1\), it follows that
\[
\begin{aligned}
    h(L_E+\lambda^{-1})
    \le hL_f+\frac{hG^2}{2\varepsilon} 
    \le c+\frac{c\varepsilon}{2}
    \le \frac{3c}{2}.
\end{aligned}
\]
Taking the universal constant \(c\) sufficiently small, every step-size
choice below therefore satisfies, for each relevant subspace \(E\),
\begin{equation}\label{eq:examples-density-compatibility}
    h(L_E+\lambda^{-1})\le\frac12,
    \qquad
    \alpha_E\ge\frac12.
\end{equation}

\subsection{One-dimensional finite-kink piecewise-linear penalties}\label{subsec:oned-pl-example}

Let \(d=1\), and let \(g:\R\to\R\) be a convex piecewise-linear function with finitely many kink
points \(\theta_j\).  Denote the left and right slopes at \(\theta_j\) by \(s_j^-\) and \(s_j^+\), and
put
\begin{equation}\label{eq:oned-pl-jumps}
        \Delta_j=s_j^+-s_j^- >0,
        \qquad
        \Delta=\sum_j \Delta_j .
\end{equation}
Let \(G=\Lip(g)=\sup |s|\), where the supremum is over the slopes of \(g\), and define
\begin{equation}\label{eq:oned-pl-density-constant}
        B=\left[\int_{\R}\exp\left\{-\frac{L_f}{2}u^2-2G|u|\right\}\,\dd u\right]^{-1}.
\end{equation}

\begin{lemma}[Moreau curvature of a one-dimensional finite-kink PL function]
\label{lem:oned-pl-moreau-curvature}
There is a Borel set \(A_\lambda\subset\R\) such that
\begin{equation}\label{eq:oned-pl-curvature-set}
        g_\lambda''(x)\le \frac1\lambda \mathbf 1_{A_\lambda}(x)
        \quad\text{for a.e. }x,
        \qquad
        |A_\lambda|\le \lambda\Delta .
\end{equation}
Indeed, we may take
\begin{equation}\label{eq:oned-pl-active-intervals}
        A_\lambda=\bigcup_j [\theta_j+\lambda s_j^-,\,\theta_j+\lambda s_j^+].
\end{equation}
\end{lemma}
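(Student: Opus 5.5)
The plan is to compute the proximal map of \(g\) explicitly and then read off \(g_\lambda''\) from \Cref{lem:moreau-regularity}, namely \(g_\lambda''(x)=\lambda^{-1}\{1-p_\lambda'(x)\}\) a.e. Order the kinks \(\theta_1<\dots<\theta_J\). The relevant slopes are \(s_0\) on \((-\infty,\theta_1)\), \(s_j\) on \((\theta_j,\theta_{j+1})\), and \(s_J\) on \((\theta_J,\infty)\). In this notation \(s_j^-=s_{j-1}\) and \(s_j^+=s_j\), and by convexity the slopes are strictly increasing.

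The optimality condition \(x\in p+\lambda\partial g(p)\) then yields two cases. If \(p=\theta_j\), the condition reads \(x\in[\theta_j+\lambda s_j^-,\theta_j+\lambda s_j^+]\). If \(p\) lies in the open piece \((\theta_j,\theta_{j+1})\), it reads \(x=p+\lambda s_j\), so \(x\in(\theta_j+\lambda s_j^+,\theta_{j+1}+\lambda s_{j+1}^-)\). These intervals are consecutive and nonoverlapping, because \(\theta_j+\lambda s_j^+<\theta_{j+1}+\lambda s_{j+1}^-\) (using \(s_{j+1}^-=s_j^+\)). The unbounded end pieces behave in the same way, so the intervals tile \(\R\).

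Consequently \(p_\lambda\) is constant (equal to \(\theta_j\)) on each interval \(I_j:=[\theta_j+\lambda s_j^-,\theta_j+\lambda s_j^+]\), and equals \(x-\lambda s_j\) (slope one) on the complementary open intervals. This gives:
\begin{itemize}
\item \(p_\lambda'=1\), hence \(g_\lambda''=0\), on the interior of \(\R\setminus\bigcup_j I_j\);
\item \(p_\lambda'=0\), hence \(g_\lambda''=1/\lambda\), on the interior of each \(I_j\).
\end{itemize}
The finitely many endpoints form a null set. Taking \(A_\lambda=\bigcup_j I_j\), we obtain \(g_\lambda''\le\lambda^{-1}\mathbf 1_{A_\lambda}\) a.e., and in fact with equality.

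For the measure bound, \(|I_j|=\lambda(s_j^+-s_j^-)=\lambda\Delta_j\), and subadditivity gives \(|A_\lambda|\le\lambda\sum_j\Delta_j=\lambda\Delta\). The set \(A_\lambda\) is a finite union of closed intervals and hence Borel.

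The only mildly delicate step is the case analysis showing that the prox is locally either constant or a translate. Here I would rely on uniqueness of the prox (from \Cref{lem:moreau-regularity}) to conclude that the explicit candidate is indeed \(p_\lambda\). No other obstacle is expected.
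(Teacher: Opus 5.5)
Your proposal is correct and follows essentially the same route as the paper. Both arguments use the optimality condition \(x\in p+\lambda\partial g(p)\) to show two things: when \(p_\lambda(x)\) lies in an open affine piece, \(p_\lambda\) is locally the translate \(x\mapsto x-\lambda s\), so \(g_\lambda''=0\) there; and \(p_\lambda(x)=\theta_j\) forces \(x\in[\theta_j+\lambda s_j^-,\theta_j+\lambda s_j^+]\). The only difference is that you write out the global tiling of \(\R\) and obtain \(g_\lambda''=\lambda^{-1}\mathbf 1_{A_\lambda}\) a.e.\ with equality, whereas the paper argues locally through uniqueness of the prox and then invokes the universal bound \(0\le g_\lambda''\le\lambda^{-1}\).
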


\begin{proof}
Let \(p_\lambda=\prox_{\lambda g}\).  The optimality condition is
\begin{equation}\label{eq:oned-pl-kkt}
        \frac{x-p_\lambda(x)}{\lambda}\in \partial g(p_\lambda(x)).
\end{equation}
Consider an open affine piece \(I\) of \(g\), and let its slope be \(s_I\).  If
\(p_\lambda(x_0)\in I\), then \(x_0=p_\lambda(x_0)+\lambda s_I\).  For all \(x\) close enough to
\(x_0\), the point \(x-\lambda s_I\) still lies in \(I\).  It satisfies
\((x-(x-\lambda s_I))/\lambda=s_I\in\partial g(x-\lambda s_I)\), so by uniqueness of the proximal
point,
\[
        p_\lambda(x)=x-\lambda s_I
\]
for all such \(x\).  Hence \(Dp_\lambda(x)=1\) locally on this set.  By the Moreau identity
\(g_\lambda''=\lambda^{-1}(1-Dp_\lambda)\) at a.e. differentiability points of \(p_\lambda\), the
Moreau curvature is zero whenever the proximal point belongs to the interior of an affine piece.

Therefore nonzero curvature can occur only at points \(x\) for which \(p_\lambda(x)\) is a kink.  If
\(p_\lambda(x)=\theta_j\), then \eqref{eq:oned-pl-kkt} gives
\[
        x=\theta_j+\lambda s,
        \qquad s\in\partial g(\theta_j)=[s_j^-,s_j^+].
\]
Thus such \(x\)'s lie in the interval
\([\theta_j+\lambda s_j^-,\theta_j+\lambda s_j^+]\), whose length is \(\lambda\Delta_j\).  Taking
the union over all kinks gives \eqref{eq:oned-pl-active-intervals} and
\(|A_\lambda|\le\sum_j\lambda\Delta_j=\lambda\Delta\).  The pointwise bound
\(0\le g_\lambda''\le\lambda^{-1}\) gives \eqref{eq:oned-pl-curvature-set}.
\end{proof}

\begin{proposition}[One-dimensional finite-kink PL reference trace]
\label{prop:oned-pl-reference-trace}
Let
\begin{equation}\label{eq:oned-pl-A1d}
        A_{\rm 1d}:=2B\Delta .
\end{equation}
There exist universal constants \(c,C>0\) such that the following holds.  For
\(0<\varepsilon\le1\), assume \(G>0\), set
\begin{equation}\label{eq:oned-pl-lambda-choice}
        \lambda=\frac{2\varepsilon}{G^2},
\end{equation}
and choose the step size
\begin{equation}\label{eq:oned-pl-step-choice}
        h
        =c\min\left\{
              L_f^{-1},
              \frac{\varepsilon^2}{\tau_f+G^2+A_{\rm 1d}},
              \varepsilon\lambda
        \right\}.
\end{equation}
Then
\begin{equation}\label{eq:oned-pl-bref}
        B_{\rm ref}\le A_{\rm 1d},
        \qquad
        M_\lambda=\frac1\lambda .
\end{equation}
If
\begin{equation}\label{eq:oned-pl-N-choice}
        N\ge \frac{C}{mh}
        \log\left(1+\frac{m\Phi_0(\lambda,h)}{\varepsilon^2}\right),
\end{equation}
then \(\sqrt m W_2(\mu_N,\pi)\le\varepsilon\).  Equivalently, with the step size
\eqref{eq:oned-pl-step-choice}, it is sufficient to take
\begin{equation}\label{eq:oned-pl-complexity}
        N_{\rm 1d}(\varepsilon)
        \le
        \frac{C}{m}
        \left[
              L_f+
              \frac{\tau_f+G^2+A_{\rm 1d}}{\varepsilon^2}
        \right]
        \log\left(1+\frac{m\Phi_0(\lambda,h)}{\varepsilon^2}\right).
\end{equation}
\end{proposition}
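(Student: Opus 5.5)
We plan to verify the two hypotheses of \Cref{thm:main-complete}, namely a bound on $B_{\rm ref}$ and an admissible $M_\lambda$, and then read off the conclusion.

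\textbf{Step 1: the curvature bound $M_\lambda$.} Since $d=1$, the universal estimate gives $0\le a_\lambda=g_\lambda''\le 1/\lambda$ almost everywhere. Hence $M_\lambda=1/\lambda$ is admissible.

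\textbf{Step 2: the bound on $B_{\rm ref}$.} By \Cref{lem:oned-pl-moreau-curvature}, for a.e.\ $x$,
\[
a_\lambda(x)\le \lambda^{-1}\mathbf 1_{A_\lambda}(x),
\]
where $A_\lambda$ is a finite union of intervals $I_j$ of length $\lambda\Delta_j$. Each $I_j$ is a slab
\[
\{|y-a_j|\le r_j\},\qquad a_j=\theta_j+\tfrac{\lambda}{2}(s_j^-+s_j^+),\quad r_j=\tfrac{\lambda\Delta_j}{2}.
\]
For every $t>0$ the law of $Y_t^{\rm ref}$ is absolutely continuous, so the a.e.\ inequality may be integrated along the path. \Cref{cor:reftrace-slab} with $v=1$ then gives
\[
\Pp\{Y_t^{\rm ref}\in I_j\}\le \frac{2r_jB_v}{\alpha_v}=\frac{\lambda\Delta_jB_v}{\alpha_v}.
\]

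\textbf{Step 3: the constants.} In one dimension $L_v\le L_f$ and $G_v\le G$, so $B_v\le B$, since the integrand in \eqref{eq:oned-pl-density-constant} is pointwise smaller. The step choice \eqref{eq:oned-pl-step-choice} gives $h\le c\varepsilon\lambda$. Together with $hL_f\le c$, this yields
\[
h(L_v+\lambda^{-1})\le c+c\varepsilon\le 2c\le \tfrac12,
\]
so $\alpha_v\ge 1/2$, as in \eqref{eq:examples-density-compatibility}. Summing over $j$, multiplying by $1/\lambda$ and averaging over $t$ gives
\[
B_{\rm ref}\le \frac1\lambda\sum_j 2B\lambda\Delta_j=2B\Delta=A_{\rm 1d}.
\]

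\textbf{Step 4: the conclusion.} The chosen $h$ satisfies \eqref{eq:main-h-choice} with $\varepsilon_{\rm alg}=\varepsilon/2$, after adjusting $c$:
\begin{itemize}
\item $B_{\rm ref}\le A_{\rm 1d}$, so the second term is controlled;
\item $\varepsilon_{\rm alg}/M_\lambda=\varepsilon\lambda/2$, so the third term is controlled.
\end{itemize}
The bias condition $\lambda\le 4\varepsilon_{\rm bias}/G^2=2\varepsilon/G^2$ holds with equality. \Cref{thm:main-complete} with $N$ as in \eqref{eq:oned-pl-N-choice} therefore yields $\sqrt m\,W_2(\mu_N,\pi)\le\varepsilon$.

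For the complexity bound, $1/h$ is at most a constant times the sum of the three reciprocals. The last reciprocal is
\[
\frac{1}{\varepsilon\lambda}=\frac{G^2}{2\varepsilon^2},
\]
which is absorbed into $G^2/\varepsilon^2$, giving \eqref{eq:oned-pl-complexity}.

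\textbf{Main obstacle.} The only delicate step is checking that $\alpha_v\ge1/2$ follows from the specific step choice. This works because the $\varepsilon\lambda$ term in $h$ controls $h/\lambda$ directly. The remaining steps are bookkeeping.
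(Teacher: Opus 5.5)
Your proposal is correct and follows the paper's proof essentially step for step. The curvature lemma reduces $B_{\rm ref}$ to the occupation of $A_\lambda$, and the density-propagation estimate bounds that occupation by $2B\lambda\Delta$; you invoke it through the slab corollary interval by interval and check $B_v\le B$ explicitly, and you obtain $\alpha_v\ge 1/2$ from the $\varepsilon\lambda$ term of $h$ rather than from the $\varepsilon^2/G^2$ restriction, but these are cosmetic differences. The rest is \Cref{thm:main-complete} with the symmetric split, exactly as in the paper.
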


\begin{proof}
By \Cref{lem:oned-pl-moreau-curvature},
\[
        B_{\rm ref}
        \le
        \frac1{\lambda h}\int_0^h \Pp\{Y_t^{\rm ref}\in A_\lambda\}\,\dd t.
\]
The step size \eqref{eq:oned-pl-step-choice} implies \eqref{eq:examples-density-compatibility}, as explained
at the beginning of the section.  By \Cref{lem:reftrace-density-propagation}, 
\[
        \Pp\{Y_t^{\rm ref}\in A_\lambda\}
        \le 2B|A_\lambda|
        \le 2B\lambda\Delta .
\]
This proves \(B_{\rm ref}\le A_{\rm 1d}\).  The value of \(M_\lambda\) is the one-dimensional
global trace bound.  The choices \eqref{eq:oned-pl-lambda-choice} and
\eqref{eq:oned-pl-step-choice} are exactly \Cref{thm:main-complete}, with the symmetric error split and with
\(B_{\rm ref}\le A_{\rm 1d}\), up to universal constants.  This gives
\(\sqrt mW_2(\mu_N,\pi)\le\varepsilon\) under \eqref{eq:oned-pl-N-choice}.  Taking the reciprocal
of \eqref{eq:oned-pl-step-choice} and using \(1/(\varepsilon\lambda)=G^2/(2\varepsilon^2)\) gives
\eqref{eq:oned-pl-complexity}.
\end{proof}

\subsection{Separable finite-kink PL penalties and weighted lasso}\label{subsec:separable-pl-example}

Now let \(d\ge1\) and consider
\begin{equation}\label{eq:separable-pl-penalty}
        g(x)=\sum_{i=1}^d g_i(x_i),
\end{equation}
where each \(g_i:\R\to\R\) is convex, Lipschitz, and piecewise-linear with finitely many kinks.  Let
\(G_i=\Lip(g_i)\), and let \(\Delta_i\) be the total slope jump of \(g_i\), as in
\eqref{eq:oned-pl-jumps}.  A global Lipschitz constant of \(g\) is
\begin{equation}\label{eq:separable-pl-global-G}
        G=\left(\sum_{i=1}^d G_i^2\right)^{1/2}.
\end{equation}
For each coordinate set
\begin{equation}\label{eq:separable-pl-directional-constants}
        L_i=\sup_{x\in\R^d}\partial_{ii}^2 f(x),
        \qquad
        B_i=\left[\int_{\R}\exp\left\{-\frac{L_i}{2}u^2-2G_i|u|\right\}\,\dd u\right]^{-1}.
\end{equation}
The smooth part \(f\) is not assumed to be separable.  The constants \(L_i\) and \(B_i\) only
control one-dimensional conditional slices in coordinate directions.

\begin{proposition}[Separable finite-kink PL reference trace]
\label{prop:separable-pl-reference-trace}
Let
\begin{equation}\label{eq:separable-pl-Asep}
        A_{\rm sep}:=2\sum_{i=1}^d B_i\Delta_i .
\end{equation}
There exist universal constants \(c,C>0\) such that the following holds.  For
\(0<\varepsilon\le1\), assume \(G>0\), set
\begin{equation}\label{eq:separable-pl-lambda-choice}
        \lambda=\frac{2\varepsilon}{G^2},
\end{equation}
and choose
\begin{equation}\label{eq:separable-pl-step-choice}
        h
        =c\min\left\{
              L_f^{-1},
              \frac{\varepsilon^2}{\tau_f+G^2+A_{\rm sep}},
              \frac{\varepsilon\lambda}{d}
        \right\}.
\end{equation}
Then
\begin{equation}\label{eq:separable-pl-bref}
        B_{\rm ref}\le A_{\rm sep},
        \qquad
        M_\lambda= \frac d\lambda .
\end{equation}
If
\begin{equation}\label{eq:separable-pl-N-choice}
        N\ge \frac{C}{mh}
        \log\left(1+\frac{m\Phi_0(\lambda,h)}{\varepsilon^2}\right),
\end{equation}
then \(\sqrt m W_2(\mu_N,\pi)\le\varepsilon\).  Equivalently, with the step size
\eqref{eq:separable-pl-step-choice}, it is sufficient to take
\begin{equation}\label{eq:separable-pl-complexity}
        N_{\rm sep}(\varepsilon)
        \le
        \frac{C}{m}
        \left[
              L_f+
              \frac{\tau_f+A_{\rm sep}+dG^2}{\varepsilon^2}
        \right]
        \log\left(1+\frac{m\Phi_0(\lambda,h)}{\varepsilon^2}\right).
\end{equation}
\end{proposition}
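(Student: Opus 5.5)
The plan is to reduce everything to the one-dimensional analysis of \Cref{prop:oned-pl-reference-trace}, coordinate by coordinate, and then invoke \Cref{thm:main-complete}.

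\textbf{Step 1: separable curvature.} Since $g$ is separable, the Moreau envelope separates as $g_\lambda(x)=\sum_i (g_i)_\lambda(x_i)$, and the proximal map acts coordinatewise: $p_\lambda(x)_i=\prox_{\lambda g_i}(x_i)$. Hence $H_\lambda$ is diagonal a.e., with entries $(g_i)_\lambda''(x_i)$, and
\[
a_\lambda(x)=\sum_{i=1}^d (g_i)_\lambda''(x_i).
\]
One must check that the a.e. Hessian of a sum of functions of single coordinates is this diagonal matrix. This holds because each $(g_i)_\lambda'$ is Lipschitz and hence differentiable a.e. on $\R$, and the product of full-measure sets has full measure in $\R^d$. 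Applying \Cref{lem:oned-pl-moreau-curvature} to each $g_i$ gives
\[
a_\lambda(x)\le\lambda^{-1}\sum_i\mathbf 1_{A^{(i)}_\lambda}(x_i)
\]
a.e., where each $A^{(i)}_\lambda$ is a finite union of intervals with $|A^{(i)}_\lambda|\le\lambda\Delta_i$. The same inequality gives the admissible bound $M_\lambda=d/\lambda$, namely the trivial global bound.

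\textbf{Step 2: tube mass per coordinate.} For each $i$, apply \Cref{lem:reftrace-density-propagation} with $E=\mathrm{span}(e_i)$, so $q=1$. In this case $L_E=L_i$ directly. For the Lipschitz size, every subgradient of $g$ has $i$-th component in $\partial g_i(x_i)$, so $G_E\le G_i$ and therefore $B_E\le B_i$; since $B_E$ is decreasing in $G_E$ and $L_E$, the bound by $B_i$ remains valid. The step-size choice \eqref{eq:separable-pl-step-choice} is at least as restrictive as \eqref{eq:main-h-choice} with $B_{\rm ref}\le A_{\rm sep}$ and $M_\lambda=d/\lambda$, so \eqref{eq:examples-density-compatibility} applies and $\alpha_{e_i}\ge1/2$. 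This yields
\[
\Pp\{Y^{\rm ref}_{t,i}\in A^{(i)}_\lambda\}\le 2B_i\lambda\Delta_i.
\]
Summing over $i$, averaging in $t$, and using that $Y_t^{\rm ref}$ has a density for $t>0$ so that the a.e. bound may be integrated, we obtain $B_{\rm ref}\le 2\sum_iB_i\Delta_i=A_{\rm sep}$. There is one subtlety here. The proof of the slice lemma uses $\|P_E\nabla g_\lambda\|\le G_E$ via \eqref{eq:reftrace-moreau-subgradient}, which is exactly coordinatewise in this setting, so nothing extra is needed.

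\textbf{Step 3: complexity.} Plug $B_{\rm ref}\le A_{\rm sep}$ and $M_\lambda=d/\lambda$ into \Cref{thm:main-complete} with $\lambda=2\varepsilon/G^2$, which satisfies $\lambda\le 4\varepsilon_{\rm bias}/G^2$ for $\varepsilon_{\rm bias}=\varepsilon/2$. The third term in the step-size minimum becomes
\[
\frac{c\,\varepsilon_{\rm alg}}{M_\lambda}=\frac{c\,\varepsilon\lambda}{2d},
\]
which matches \eqref{eq:separable-pl-step-choice} up to constants. Taking reciprocals of the step size, this term contributes
\[
\frac{d}{\varepsilon\lambda}=\frac{dG^2}{2\varepsilon^2}.
\]
This absorbs the $G^2$ term and gives \eqref{eq:separable-pl-complexity}.

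The main obstacle is Step 1, specifically the rigorous identification of the a.e. Hessian and the verification that the active trace is a sum of one-dimensional terms. The tube-mass estimate in Step 2 is essentially routine once this is in place.
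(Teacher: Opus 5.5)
Your proposal is correct and follows the paper's proof essentially step for step: the separable decomposition $a_\lambda(x)\le\lambda^{-1}\sum_i\mathbf 1\{x_i\in A_{i,\lambda}\}$ from \Cref{lem:oned-pl-moreau-curvature}, the coordinatewise slab bound $2B_i\lambda\Delta_i$ from \Cref{lem:reftrace-density-propagation} with $\alpha_{e_i}\ge 1/2$, and substitution into \Cref{thm:main-complete} using $d/(\varepsilon\lambda)=dG^2/(2\varepsilon^2)$. There are two small slips to fix. First, $B_E$ is \emph{nondecreasing}, not decreasing, in $G_E$ and $L_E$, and that monotonicity is what gives $B_E\le B_i$ (here in fact $L_E=L_i$ and $G_E=G_i$). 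Second, $\alpha_{e_i}\ge 1/2$ should be read off directly from $h\le c/L_f$ and $h\le c\varepsilon\lambda/d$, not obtained via $B_{\rm ref}\le A_{\rm sep}$, which is the bound you are in the middle of proving.
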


\begin{proof}
The Moreau envelope of a sum of coordinate functions is the sum of their one-dimensional Moreau
envelopes.  \Cref{lem:oned-pl-moreau-curvature} gives sets
\(A_{i,\lambda}\subset\R\) with \(|A_{i,\lambda}|\le\lambda\Delta_i\) such that
\begin{equation}\label{eq:separable-pl-curvature-decomposition}
        a_\lambda(x)
        \le
        \frac1\lambda\sum_{i=1}^d \mathbf 1_{\{x_i\in A_{i,\lambda}\}}
        \quad\text{for a.e. }x.
\end{equation}
The step size \eqref{eq:separable-pl-step-choice} implies \eqref{eq:examples-density-compatibility}.  By \Cref{lem:reftrace-density-propagation} in the \(i\)-th coordinate direction, 
\[
        \Pp\{Y_{t,i}^{\rm ref}\in A_{i,\lambda}\}
        \le 2B_i|A_{i,\lambda}|
        \le 2B_i\lambda\Delta_i .
\]
Substitution in \eqref{eq:separable-pl-curvature-decomposition} and time averaging prove
\(B_{\rm ref}\le A_{\rm sep}\).  The trace bound \(M_\lambda= d/\lambda\) follows from
\(0\preceq H_\lambda\preceq \lambda^{-1}\Id\).  The choices
\eqref{eq:separable-pl-lambda-choice} and \eqref{eq:separable-pl-step-choice} are \Cref{thm:main-complete} with
\(B_{\rm ref}\le A_{\rm sep}\) and \(M_\lambda= d/\lambda\), up to universal constants.  Since
\(d/(\varepsilon\lambda)=dG^2/(2\varepsilon^2)\), the displayed complexity follows.
\end{proof}

\begin{corollary}[Weighted lasso]\label{cor:weighted-lasso-reference-trace}
Let
\begin{equation}\label{eq:weighted-lasso-penalty}
        g(x)=\sum_{i=1}^d \gamma_i |x_i|,
        \qquad \gamma_i\ge0.
\end{equation}
Then \(G^2=\sum_i\gamma_i^2\), \(\Delta_i=2\gamma_i\), and
\begin{equation}\label{eq:weighted-lasso-Awl}
        A_{\rm wl}:=4\sum_{i=1}^d \gamma_iB_i .
\end{equation}
There exist universal constants \(c,C>0\) such that, for \(0<\varepsilon\le1\) and \(G>0\), set
\begin{equation}\label{eq:weighted-lasso-lambda-choice}
        \lambda=\frac{2\varepsilon}{G^2},
\end{equation}
and choose
\begin{equation}\label{eq:weighted-lasso-step-choice}
        h
        =c\min\left\{
              L_f^{-1},
              \frac{\varepsilon^2}{\tau_f+G^2+A_{\rm wl}},
              \frac{\varepsilon\lambda}{d}
        \right\}.
\end{equation}
Then
\begin{equation}\label{eq:weighted-lasso-bref}
        B_{\rm ref}\le A_{\rm wl},
        \qquad
        M_\lambda= \frac d\lambda .
\end{equation}
If
\begin{equation}\label{eq:weighted-lasso-N-choice}
        N\ge \frac{C}{mh}
        \log\left(1+\frac{m\Phi_0(\lambda,h)}{\varepsilon^2}\right),
\end{equation}
then \(\sqrt m W_2(\mu_N,\pi)\le\varepsilon\).  In particular,
\begin{equation}\label{eq:weighted-lasso-complexity}
        N_{\rm wl}(\varepsilon)
        \le
        \frac{C}{m}
        \left[
              L_f+
              \frac{\tau_f+A_{\rm wl}+dG^2}{\varepsilon^2}
        \right]
        \log\left(1+\frac{m\Phi_0(\lambda,h)}{\varepsilon^2}\right).
\end{equation}
\end{corollary}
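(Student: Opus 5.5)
The plan is to obtain the weighted lasso bounds as a direct specialization of Proposition~\ref{prop:separable-pl-reference-trace}, taking $g_i(u)=\gamma_i|u|$. Each $g_i$ is convex and piecewise linear with at most one kink, at $\theta=0$. Its one-sided slopes there are $s^-=-\gamma_i$ and $s^+=\gamma_i$. This gives $\Delta_i=2\gamma_i$ and $G_i=\operatorname{Lip}(g_i)=\gamma_i$. Substituting into \eqref{eq:separable-pl-global-G} gives $G^2=\sum_i\gamma_i^2$. Substituting into \eqref{eq:separable-pl-Asep} gives $A_{\rm sep}=2\sum_i B_i\cdot 2\gamma_i=4\sum_i\gamma_iB_i=A_{\rm wl}$. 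The constants $B_i$ are those of \eqref{eq:separable-pl-directional-constants} with $G_i=\gamma_i$.

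One technical point needs care: coordinates with $\gamma_i=0$. For such $i$, $g_i\equiv0$ has no kink, so the hypothesis ``slope jump $\Delta_j>0$'' in \eqref{eq:oned-pl-jumps} is vacuous. I handle this directly rather than through Lemma~\ref{lem:oned-pl-moreau-curvature}. The Moreau envelope of the zero function is zero, so this coordinate contributes nothing to $a_\lambda$. We may therefore take $A_{i,\lambda}=\emptyset$, which is consistent with $|A_{i,\lambda}|\le\lambda\Delta_i=0$. The decomposition \eqref{eq:separable-pl-curvature-decomposition} then holds with these coordinates omitted, and their contribution $4\gamma_iB_i=0$ to $A_{\rm wl}$ is correct. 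Equivalently, I can apply the proposition to the subvector of penalized coordinates. The slab estimate for coordinate directions, via Lemma~\ref{lem:reftrace-density-propagation} with $E=\mathrm{span}(e_i)$, needs no change. That is because $f$ is not assumed separable, and only the directional constants $L_i\le L_f$ and $G_{e_i}\le\gamma_i$ enter.

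With these identifications, the choices \eqref{eq:weighted-lasso-lambda-choice}, \eqref{eq:weighted-lasso-step-choice} and \eqref{eq:weighted-lasso-N-choice} coincide verbatim with \eqref{eq:separable-pl-lambda-choice}, \eqref{eq:separable-pl-step-choice} and \eqref{eq:separable-pl-N-choice}. The step-size choice also guarantees the compatibility condition \eqref{eq:examples-density-compatibility}, as argued at the start of the section. Hence Proposition~\ref{prop:separable-pl-reference-trace} yields the rest of the statement: the bound $B_{\rm ref}\le A_{\rm wl}$, the value $M_\lambda=d/\lambda$ (from $0\preceq H_\lambda\preceq\lambda^{-1}I$), the accuracy guarantee $\sqrt m\,W_2(\mu_N,\pi)\le\varepsilon$, and the complexity \eqref{eq:weighted-lasso-complexity}, the latter via $d/(\varepsilon\lambda)=dG^2/(2\varepsilon^2)$. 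I expect the only step requiring attention to be the degenerate $\gamma_i=0$ coordinates (and the trivial case where $G>0$ still holds). Everything else is bookkeeping.
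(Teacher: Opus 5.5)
Your proposal is correct and is the intended argument. The paper gives no separate proof and reads the corollary as a direct specialization of Proposition~\ref{prop:separable-pl-reference-trace} with $g_i(u)=\gamma_i|u|$: a single kink at $0$ with slopes $\pm\gamma_i$, so $\Delta_i=2\gamma_i$, $G_i=\gamma_i$ and $A_{\rm sep}=4\sum_i\gamma_iB_i=A_{\rm wl}$, with the $\lambda$, $h$, $N$ choices carrying over verbatim. Your extra care with coordinates where $\gamma_i=0$ is harmless but not needed, since such $g_i$ are piecewise linear with zero kinks and Lemma~\ref{lem:oned-pl-moreau-curvature} already gives $A_{i,\lambda}=\emptyset$ for them.
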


\subsection{Group lasso}\label{subsec:group-lasso-example}

Let \(\{1,\ldots,d\}\) be partitioned into disjoint blocks \(b\).  The block variable is denoted by
\(x_b\in\R^{q_b}\), where \(q_b\) is the block size and \(\sum_b q_b=d\).  Consider
\begin{equation}\label{eq:group-lasso-penalty}
        g(x)=\sum_b \gamma_b\|x_b\|,
        \qquad \gamma_b>0.
\end{equation}
A global Lipschitz constant is
\begin{equation}\label{eq:group-lasso-global-G}
        G=\left(\sum_b \gamma_b^2\right)^{1/2}.
\end{equation}
Let \(P_b\) be the orthogonal projection onto block \(b\).  Define
\begin{equation}\label{eq:group-lasso-density-constants}
        L_b=\sup_{x\in\R^d}\|P_b\nabla^2 f(x)P_b\|_{\rm op},
        \qquad
        B_b=\left[\int_{\R^{q_b}}
             \exp\left\{-\frac{L_b}{2}\|u\|^2-2\gamma_b\|u\|\right\}\,\dd u\right]^{-1}.
\end{equation}
Let \(v_q\) be the volume of the unit ball in \(\R^q\).  For \(q\ge2\), set
\begin{equation}\label{eq:inverse-radius-constant}
        C_q=\frac{q}{q-1}v_q^{1/q}.
\end{equation}

\begin{lemma}[Block Moreau trace for \(\gamma\|z\|\)]\label{lem:group-lasso-block-trace}
Let \(q\ge1\), \(\gamma>0\), and \(\varphi(z)=\gamma\|z\|\) on \(\R^q\).  For
\(r=\|z\|\),
\begin{equation}\label{eq:group-lasso-block-trace}
        \tr\nabla^2\varphi_\lambda(z)
        =
        \frac q\lambda\mathbf 1_{\{r\le \lambda\gamma\}}
        +
        \frac{\gamma(q-1)}{r}\mathbf 1_{\{r>\lambda\gamma\}}
        \quad\text{for a.e. }z.
\end{equation}
\end{lemma}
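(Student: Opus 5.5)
The plan is to compute the proximal map of \(\varphi=\gamma\|\cdot\|\) explicitly, differentiate it away from the sphere \(\{r=\lambda\gamma\}\), and then use the identity \(\nabla^2\varphi_\lambda=\lambda^{-1}(I_q-Dp_\lambda)\) from \Cref{lem:moreau-regularity}. The first step is to recall block soft-thresholding:
\[
p_\lambda(z)=\Bigl(1-\frac{\lambda\gamma}{\|z\|}\Bigr)_+ z .
\]
This is checked directly from the optimality condition \((z-p)/\lambda\in\gamma\,\partial\|\cdot\|(p)\). If \(\|z\|\le\lambda\gamma\), then \(p=0\) works because \(z/\lambda\) lies in the ball of radius \(\gamma\), which is \(\partial\varphi(0)\). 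If \(\|z\|>\lambda\gamma\), the candidate \(p=(1-\lambda\gamma/r)z\) is nonzero and parallel to \(z\), and \((z-p)/\lambda=\gamma z/r=\gamma\,p/\|p\|\), as required. Uniqueness of the proximal point then identifies \(p_\lambda\).

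Next I differentiate on the two open regions, since the sphere \(\{r=\lambda\gamma\}\) is Lebesgue-null.

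\emph{Case \(r<\lambda\gamma\).} Here \(p_\lambda\equiv0\), so \(Dp_\lambda=0\). Hence \(\nabla^2\varphi_\lambda=\lambda^{-1}I_q\) and the trace is \(q/\lambda\).

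\emph{Case \(r>\lambda\gamma\).} Here \(p_\lambda(z)=z-\lambda\gamma z/\|z\|\), and \(D(z/\|z\|)=r^{-1}(I_q-zz^\top/r^2)\). This gives
\[
I_q-Dp_\lambda=\frac{\lambda\gamma}{r}\Bigl(I_q-\frac{zz^\top}{r^2}\Bigr),
\]
so \(\nabla^2\varphi_\lambda=(\gamma/r)(I_q-zz^\top/r^2)\). This matrix is the projection onto the tangent space scaled by \(\gamma/r\), and its trace is \(\gamma(q-1)/r\).

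Combining the two cases gives the displayed formula for a.e. \(z\). The boundary sphere can be placed in either indicator, since it has measure zero. For \(q=1\) the tangential term vanishes, which is consistent with \Cref{lem:oned-pl-moreau-curvature}, with \(\Delta=2\gamma\).

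Finally, I need to ensure the formula holds for the chosen measurable representative \(H_\lambda\). \Cref{lem:moreau-regularity} says the a.e. Hessian agrees with the classical derivative of \(\nabla\varphi_\lambda=(z-p_\lambda(z))/\lambda\) at points of differentiability. On both open regions the map \(p_\lambda\) is smooth, so these computations are classical there. No step is a real obstacle; the only point needing care is this null-set bookkeeping at \(r=\lambda\gamma\) and at \(z=0\), and that is handled by the a.e. qualifier in the statement.
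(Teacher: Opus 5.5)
Your proposal is correct and follows essentially the same route as the paper: block soft-thresholding, the region where the prox is constant giving \(q/\lambda\), and discarding the null sphere \(\{r=\lambda\gamma\}\). The only cosmetic difference is on \(\{r>\lambda\gamma\}\): you compute \(\lambda^{-1}(I_q-Dp_\lambda)\) directly, whereas the paper reads off the Hessian of the closed form \(\varphi_\lambda(z)=\gamma r-\lambda\gamma^2/2\). Both give \((\gamma/r)(I_q-zz^\top/r^2)\).
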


\begin{proof}
The proximal map of \(\lambda\gamma\|\cdot\|\) is block soft-thresholding:
\[
        \prox_{\lambda\varphi}(z)=\left(1-\frac{\lambda\gamma}{\|z\|}\right)_+z.
\]
If \(r<\lambda\gamma\), the proximal map is locally constant, so
\(\nabla^2\varphi_\lambda=\lambda^{-1}I_q\) and the trace is \(q/\lambda\).  If
\(r>\lambda\gamma\), then \(\varphi_\lambda(z)=\gamma r-\lambda\gamma^2/2\).  The Hessian of
\(r\) has eigenvalue \(0\) in the radial direction and eigenvalue \(1/r\) in the \(q-1\) tangential
directions.  The sphere \(r=\lambda\gamma\) is null and its value is irrelevant.
\end{proof}

\begin{proposition}[Group lasso reference trace]\label{prop:group-lasso-reference-trace}
Let
\begin{equation}\label{eq:group-lasso-Agrp-definition}
        A_{\rm grp}
        :=
        \sum_b
             q_b v_{q_b}2^{q_b}B_b\gamma_b^{q_b}
             +
        \sum_{b:q_b\ge2}
             2\gamma_b(q_b-1)C_{q_b}B_b^{1/q_b}.
\end{equation}
There exist universal constants \(c,C>0\) such that the following holds.  For
\(0<\varepsilon\le \min\{1,G^2/2\}\), assume \(G>0\), set
\begin{equation}\label{eq:group-lasso-lambda-choice}
        \lambda=\frac{2\varepsilon}{G^2},
\end{equation}
and choose
\begin{equation}\label{eq:group-lasso-step-choice}
        h
        =c\min\left\{
              L_f^{-1},
              \frac{\varepsilon^2}{\tau_f+G^2+A_{\rm grp}},
              \frac{\varepsilon\lambda}{d}
        \right\}.
\end{equation}
Then \(\lambda\le1\) and
\begin{equation}\label{eq:group-lasso-Agrp}
        B_{\rm ref}\le A_{\rm grp},
        \qquad
        M_\lambda= \frac d\lambda .
\end{equation}
If
\begin{equation}\label{eq:group-lasso-N-choice}
        N\ge \frac{C}{mh}
        \log\left(1+\frac{m\Phi_0(\lambda,h)}{\varepsilon^2}\right),
\end{equation}
then \(\sqrt m W_2(\mu_N,\pi)\le\varepsilon\).  Equivalently, with the step size
\eqref{eq:group-lasso-step-choice}, it is sufficient to take
\begin{equation}\label{eq:group-lasso-complexity}
        N_{\rm grp}(\varepsilon)
        \le
        \frac{C}{m}
        \left[
              L_f+
              \frac{\tau_f+A_{\rm grp}+dG^2}{\varepsilon^2}
        \right]
        \log\left(1+\frac{m\Phi_0(\lambda,h)}{\varepsilon^2}\right).
\end{equation}
\end{proposition}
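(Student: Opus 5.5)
The proof follows the template of \Cref{prop:separable-pl-reference-trace}, now with block structure. Since $g$ is a sum of functions of disjoint blocks, $g_\lambda(x)=\sum_b \varphi^{(b)}_\lambda(x_b)$ with $\varphi^{(b)}(z)=\gamma_b\|z\|$. The a.e. Hessian is therefore block diagonal, and $a_\lambda(x)=\sum_b \tr\nabla^2\varphi^{(b)}_\lambda(x_b)$ for a.e. $x$. By \Cref{lem:group-lasso-block-trace} this gives the decomposition
\[
a_\lambda(x)\le \sum_b \frac{q_b}{\lambda}\mathbf 1\{\|x_b\|\le\lambda\gamma_b\}
+\sum_{b:q_b\ge2}\frac{\gamma_b(q_b-1)}{\|x_b\|}.
\]
Note that for $q_b=1$ the tangential term vanishes. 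The first part plays the role of the tube terms in \Cref{ass:reftrace-curvature-decomposition}, and the second is the integrable part $b_\lambda$.

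First I check that $\lambda\le1$: this is immediate from $\varepsilon\le G^2/2$ and $\lambda=2\varepsilon/G^2$. Next, the step size \eqref{eq:group-lasso-step-choice} implies \eqref{eq:examples-density-compatibility}, so $\alpha_{E_b}\ge1/2$ for each block subspace $E_b$ with $P_{E_b}=P_b$.

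For the central term, I apply \Cref{cor:reftrace-ball} with $E=E_b$, $y=0$ and $r=\lambda\gamma_b$, with constants $L_{E_b}=L_b$ and $G_{E_b}\le\gamma_b$. The latter holds because subgradients of $g$ have block components of norm at most $\gamma_b$, so $B_{E_b}\le B_b$. This gives
\[
\Pp\{\|Y^{\rm ref}_{t,b}\|\le\lambda\gamma_b\}\le 2^{q_b}B_b v_{q_b}\lambda^{q_b}\gamma_b^{q_b}.
\]
Multiplying by $q_b/\lambda$ yields the bound $q_bv_{q_b}2^{q_b}B_b\gamma_b^{q_b}\lambda^{q_b-1}$. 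Since $\lambda\le1$ and $q_b\ge1$, this is at most the first sum in $A_{\rm grp}$; this is the only place $\lambda\le1$ is used. For the tangential term, \Cref{cor:reftrace-inv-radius} gives $\E\|Y^{\rm ref}_{t,b}\|^{-1}\le C_{q_b}(2^{q_b}B_b)^{1/q_b}=2C_{q_b}B_b^{1/q_b}$. Multiplying by $\gamma_b(q_b-1)$ reproduces the second sum.

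These bounds hold uniformly for $t\in(0,h]$, and the law of $Y^{\rm ref}_t$ has a density for $t>0$, so the a.e. decomposition may be integrated along the path. Time-averaging then gives $B_{\rm ref}\le A_{\rm grp}$. The claim $M_\lambda=d/\lambda$ is the universal bound. The remaining conclusions follow as in \Cref{prop:separable-pl-reference-trace}: substitute $B_{\rm ref}\le A_{\rm grp}$ and $M_\lambda=d/\lambda$ into \Cref{thm:main-complete} with the symmetric split. The third step-size entry $\varepsilon\lambda/d$ is $c\,\varepsilon/M_\lambda$, and $d/(\varepsilon\lambda)=dG^2/(2\varepsilon^2)$ gives \eqref{eq:group-lasso-complexity}.

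The main point needing care is the bookkeeping of constants, in particular checking $G_{E_b}\le\gamma_b$ and tracking the $\alpha^{-q_b}=2^{q_b}$ factors. Everything else is a direct invocation of the established corollaries.
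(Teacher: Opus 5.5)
Your proposal is correct and matches the paper's proof essentially step for step. You use the blockwise decomposition of $a_\lambda$ via \Cref{lem:group-lasso-block-trace}, the density bound $2^{q_b}B_b$ for $P_bY_t^{\rm ref}$ from \Cref{lem:reftrace-density-propagation} with $\alpha_{E_b}\ge 1/2$, the ball estimate for the central $q_b/\lambda$ term (with $\lambda\le1$ absorbing $\lambda^{q_b-1}$), \Cref{cor:reftrace-inv-radius} for the tangential term, and then substitution into \Cref{thm:main-complete}; your explicit check that $G_{E_b}\le\gamma_b$, hence $B_{E_b}\le B_b$, makes precise a step the paper uses without comment.
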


\begin{proof}
The penalty is separable across blocks, so the Moreau envelope and the active trace decompose
across blocks.  The step size \eqref{eq:group-lasso-step-choice} implies
\eqref{eq:examples-density-compatibility}.  Therefore \Cref{lem:reftrace-density-propagation} gives a density bound \(2^{q_b}B_b\) for
\(P_b Y_t^{\rm ref}\), uniformly in \(t\).

For the center term in \Cref{lem:group-lasso-block-trace},
\[
        \frac{q_b}{\lambda}
        \Pp\{\|P_b Y_t^{\rm ref}\|\le \lambda\gamma_b\}
        \le
        \frac{q_b}{\lambda}
        2^{q_b}B_b v_{q_b}(\lambda\gamma_b)^{q_b}
        =q_bv_{q_b}2^{q_b}B_b\gamma_b^{q_b}\lambda^{q_b-1}.
\]
Because \(\lambda\le1\), this is at most the first term in \(A_{\rm grp}\).  When \(q_b=1\), the
tangential term is zero.  When \(q_b\ge2\), \Cref{cor:reftrace-inv-radius} gives
\[
        \E\frac1{\|P_b Y_t^{\rm ref}\|}
        \le C_{q_b}(2^{q_b}B_b)^{1/q_b}
        =2C_{q_b}B_b^{1/q_b}.
\]
Multiplication by \(\gamma_b(q_b-1)\) gives the second term in \(A_{\rm grp}\).  These bounds are
uniform in \(t\), so averaging over \([0,h]\) proves the reference-trace bound.  The estimate
\(M_\lambda= d/\lambda\) is the global trace bound.  The choices
\eqref{eq:group-lasso-lambda-choice} and \eqref{eq:group-lasso-step-choice} are \Cref{thm:main-complete} with
\(B_{\rm ref}\le A_{\rm grp}\) and \(M_\lambda= d/\lambda\), up to universal constants.  Since
\(d/(\varepsilon\lambda)=dG^2/(2\varepsilon^2)\), the complexity display follows.
\end{proof}

\begin{remark}[Block-size dependence in the group-lasso bound]
\label{rem:group-lasso-block-size}
The preceding proposition is meant to make the \(\varepsilon\)-dependence transparent: it gives a \(\lambda\)-free upper bound on \(B_{\rm ref}\).  The displayed constant should not be read as an optimized estimate in the block dimension.  Indeed, for a block of size \(q_b\), let
\[
    \alpha_b := 1-h(L_b+\lambda^{-1}),
    \qquad
    \overline B_b:=B_b\alpha_b^{-q_b},
    \qquad
    \kappa_b:=(v_{q_b}\overline B_b)^{1/q_b}.
\]
Before replacing \(\lambda\) and \(\alpha_b\) by crude constants, the contribution of this block satisfies
\[
    B_{{\rm ref},b}
    \le
    q_b\gamma_b\kappa_b(\lambda\gamma_b\kappa_b)^{q_b-1}
    +\mathbf 1_{\{q_b\ge2\}}q_b\gamma_b\kappa_b .
\]
The first term is the contribution of the central ball \(\{\|z_b\|\le\lambda\gamma_b\}\), while the second term controls the tangential curvature \(\gamma_b(q_b-1)/\|z_b\|\).  Thus, for $q_b\ge 2$, whenever \(\lambda\gamma_b\kappa_b\le1\), the central ball term is no larger than the tangential term, and it is in fact damped by the factor \((\lambda\gamma_b\kappa_b)^{q_b-1}\).  The simplified \(\lambda\)-free constant used in the proposition deliberately discards this damping in order to emphasize the \(\varepsilon^{-2}\) complexity.

The quantity \(\kappa_b\) is an inverse effective block radius, not an exponentially large density constant.  From the definition of \(B_b\), a ball lower bound on the normalizing integral gives
\[
    \kappa_b
    \le
    C\alpha_b^{-1}\left(\sqrt{\frac{L_b}{q_b}}+\frac{\gamma_b}{q_b}\right),
\]
with a universal constant \(C\).  Consequently the tangential term is at most
\[
    C\alpha_b^{-1}\left(\gamma_b\sqrt{L_bq_b}+\gamma_b^2\right).
\]
Hence, for bounded \(L_b\) and standard group-lasso weights, the active-trace constant grows polynomially, and often linearly or sublinearly, in the block size; it is not exponential in \(q_b\).

For the central ball term one may also use a one-dimensional slab bound.  For any unit vector \(u\) in the block,
\[
    \{\|Y_{b,t}^{\rm ref}\|\le\lambda\gamma_b\}
    \subseteq
    \{|\langle u,Y_{b,t}^{\rm ref}\rangle|\le\lambda\gamma_b\},
\]
so the slab estimate gives
\[
    \frac{q_b}{\lambda}
    \mathbb P\{\|Y_{b,t}^{\rm ref}\|\le\lambda\gamma_b\}
    \le
    \frac{2q_b\gamma_bB_u}{\alpha_u}.
\]
Therefore the central contribution can be bounded by the minimum of the ball estimate and this slab estimate.  The slab estimate is sometimes a convenient \(\lambda\)-free way to remove the apparent \(\gamma_b^{q_b}\) factor.  It does not, however, control the tangential term: the inverse-radius estimate still needs the \(q_b\)-dimensional small-ball bound when \(q_b\ge2\).
\end{remark}

\subsection{Generalized lasso and anisotropic total variation}\label{subsec:generalized-lasso-example}

Let \(D\in\R^{J\times d}\) have nonzero rows \(d_j^\top\), and consider
\begin{equation}\label{eq:generalized-lasso-penalty}
        g(x)=\gamma\|Dx\|_1,
        \qquad \gamma>0.
\end{equation}
A global Lipschitz constant is
\begin{equation}\label{eq:generalized-lasso-global-G}
        G=\gamma\|D^\top\|_{\infty\to2}
        =\gamma\sup_{\|z\|_\infty\le1}\|D^\top z\|.
\end{equation}
For each row define
\begin{equation}\label{eq:generalized-lasso-row-unit}
        u_j=\frac{d_j}{\|d_j\|},
\end{equation}
and
\begin{equation}\label{eq:generalized-lasso-row-constants}
        L_j=\sup_{x\in\R^d}u_j^\top\nabla^2f(x)u_j,
        \qquad
        G_j=\gamma\|Du_j\|_1,
\end{equation}
\begin{equation}\label{eq:generalized-lasso-density-constant}
        B_j=\left[\int_{\R}\exp\left\{-\frac{L_j}{2}s^2-2G_j|s|\right\}\,\dd s\right]^{-1}.
\end{equation}
The number \(G_j\) is the Lipschitz size of \(g\) in direction \(u_j\).  Finally, set
\begin{equation}\label{eq:generalized-lasso-Rj}
        R_j=\sum_{\ell=1}^J |d_j^\top d_\ell|.
\end{equation}

\begin{lemma}[Prox-cell trace bound for generalized lasso]
\label{lem:generalized-lasso-prox-cell-trace}
Let \(p_\lambda(x)=\prox_{\lambda g}(x)\).  For a point \(p\in\R^d\), define the active row set
\begin{equation}\label{eq:generalized-lasso-active-set}
        A(p)=\{j:\ d_j^\top p=0\}.
\end{equation}
For \(A\subseteq\{1,\ldots,J\}\), let \(D_A\) be the submatrix of \(D\) with rows indexed by
\(A\), and use the convention \(P_{\ker D_A}=\Id\) when \(A=\emptyset\).

A prox-cell means an open polyhedral region of the \(x\)-space on which the active set
\(A(p_\lambda(x))\) is fixed and, for every inactive row \(j\notin A(p_\lambda(x))\), the sign of
\(d_j^\top p_\lambda(x)\) is fixed.  At every differentiability point \(x\) of \(p_\lambda\) lying in
such a cell, with \(A=A(p_\lambda(x))\),
\begin{equation}\label{eq:generalized-lasso-prox-jacobian}
        Dp_\lambda(x)=P_{\ker D_A},
        \qquad
        a_\lambda(x)=\frac{\rank(D_A)}{\lambda}.
\end{equation}
Consequently, for a.e. \(x\),
\begin{equation}\label{eq:generalized-lasso-curvature-slab}
        a_\lambda(x)
        \le
        \frac1\lambda\sum_{j=1}^J
        \mathbf 1_{\{|d_j^\top x|\le \lambda\gamma R_j\}}.
\end{equation}
In particular, we can choose
\begin{equation}\label{eq:generalized-lasso-Mlambda}
        M_\lambda= \frac{\rank(D)}{\lambda}.
\end{equation}
\end{lemma}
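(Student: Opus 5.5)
The plan is to work directly from the proximal optimality condition, read off an explicit affine formula for \(p_\lambda\) on each fixed sign pattern, and then deduce the Jacobian, the trace, and the slab localization from that formula. Since \(\partial\|\cdot\|_1\) is a product of intervals, the condition \(x-p_\lambda(x)\in\lambda\partial g(p_\lambda(x))\) says: there is \(z\in\R^J\) with
\[
x-p_\lambda(x)=\lambda\gamma D^\top z,\qquad z_j=\operatorname{sign}(d_j^\top p_\lambda(x))\ (j\notin A),\qquad |z_j|\le1\ (j\in A),
\]
where \(A=A(p_\lambda(x))\).

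\textbf{Step 1 (affine formula on a pattern set).} Fix an active set \(A\) and a sign vector \(\sigma\in\{\pm1\}^{A^c}\). Let \(S_{A,\sigma}\) be the set of \(x\) whose proximal point has active set \(A\) and inactive signs \(\sigma\); these sets are Borel and, over the finitely many patterns, partition \(\R^d\). For \(x\in S_{A,\sigma}\) put \(c_\sigma:=\lambda\gamma D_{A^c}^\top\sigma\). The optimality condition then gives
\[
x-c_\sigma-p_\lambda(x)=\lambda\gamma D_A^\top z_A\in\operatorname{range}(D_A^\top)=(\ker D_A)^\perp .
\]
We also have \(p_\lambda(x)\in\ker D_A\). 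These two facts force
\[
p_\lambda(x)=P_{\ker D_A}(x-c_\sigma)\qquad\text{for all }x\in S_{A,\sigma}.
\]
So \(p_\lambda\) coincides on the whole pattern set with an affine map whose linear part is \(P_{\ker D_A}\). On an open prox-cell this immediately yields \(Dp_\lambda=P_{\ker D_A}\).

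\textbf{Step 2 (a.e. statement and trace).} To pass from open cells to a.e. \(x\), I would use Lebesgue density points. Suppose \(x\) is a density point of \(S_{A,\sigma}\) at which \(p_\lambda\) is differentiable; such points exhaust \(S_{A,\sigma}\) up to a null set, by \Cref{lem:moreau-regularity} and the Lebesgue density theorem. Since \(p_\lambda\) agrees with the affine map on a set of density one at \(x\), the difference of the two linear parts must annihilate a set of directions of full measure near the unit sphere, hence all directions. Therefore \(Dp_\lambda(x)=P_{\ker D_A}\). Then \Cref{lem:moreau-regularity} gives \(H_\lambda(x)=\lambda^{-1}P_{\operatorname{range}(D_A^\top)}\), and so
\[
a_\lambda(x)=\frac{\rank(D_A)}{\lambda}.
\]
I expect this measure-theoretic step — handling pattern sets that need not be open — to be the only delicate point. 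Everything else is linear algebra.

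\textbf{Step 3 (slab localization and \(M_\lambda\)).} If \(j\in A\), then \(d_j^\top p_\lambda(x)=0\), so
\[
d_j^\top x=\lambda\gamma\sum_{\ell=1}^J(d_j^\top d_\ell)\,z_\ell .
\]
Using \(|z_\ell|\le1\), this gives \(|d_j^\top x|\le\lambda\gamma R_j\). Hence
\[
\rank(D_A)\le|A|\le\sum_{j=1}^J\mathbf 1_{\{|d_j^\top x|\le\lambda\gamma R_j\}},
\]
which yields \eqref{eq:generalized-lasso-curvature-slab} for a.e. \(x\). Finally, \(\rank(D_A)\le\rank(D)\) gives the admissible choice \(M_\lambda=\rank(D)/\lambda\).
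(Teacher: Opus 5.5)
Your proof is correct, and its skeleton matches the paper's. Both use an affine formula for \(p_\lambda\) on each fixed active-set/sign pattern, the trace of the projection \(\Id-P_{\ker D_A}\), the KKT bound \(|d_j^\top x|\le\lambda\gamma R_j\) for active rows, and \(\rank(D_A)\le|A|\) together with \(\rank(D_A)\le\rank(D)\). Two steps are argued differently.

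For the affine formula, the paper shows that \(p_\lambda(x)\) is a local minimizer on \(\ker D_A\) of the sign-linearized proximal objective. Strong convexity makes it the global minimizer, and the paper then completes the square. You instead read the formula off the orthogonal splitting \(x-c_\sigma-p_\lambda(x)\in\operatorname{range}(D_A^\top)=(\ker D_A)^\perp\) with \(p_\lambda(x)\in\ker D_A\). This is shorter and makes plain that the formula holds on the whole pattern set, open or not.

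The more substantive difference is the passage to almost every \(x\). The paper cites piecewise affinity of the proximal map of a polyhedral function and then asserts, briefly, that almost every point lies in an open prox-cell. Your Lebesgue-density argument needs only that the pattern sets \(S_{A,\sigma}\) are Borel, which holds because \(p_\lambda\) is continuous, together with a.e. differentiability of \(p_\lambda\). Your route is more self-contained and makes no claim about the geometry of the cells. The paper's route gives the extra picture of open polyhedral cells, which its definition of a prox-cell uses but which neither the slab bound nor the choice of \(M_\lambda\) needs.

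When you write out Step 2, make the phrase about directions of full measure precise. Set \(\ell(y):=P_{\ker D_A}(y-c_\sigma)\) and suppose \(M:=Dp_\lambda(x)-P_{\ker D_A}\neq0\). For small \(\delta>0\), the cone \(\{y:\|M(y-x)\|\ge\delta\|y-x\|\}\) has positive density at \(x\). It therefore meets \(S_{A,\sigma}\) at points \(y\) arbitrarily close to \(x\). At those points \(p_\lambda(y)=\ell(y)\), which contradicts \(p_\lambda(y)-\ell(y)=M(y-x)+o(\|y-x\|)\).
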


\begin{proof}
We first explain the cell formula. Fix a prox-cell and write its active set as \(A\).  On this cell,
the signs
\[
        \sigma_j=\operatorname{sign}(d_j^\top p_\lambda(x)),\qquad j\notin A,
\]
are fixed, while the active rows satisfy \(D_Ap_\lambda(x)=0\). For \(x\) in the cell, write \(p=p_\lambda(x)\). Since
\(d_j^\top p\neq 0\) for every \(j\notin A\), there is a relative
neighborhood of \(p\) in \(\ker D_A\) on which
\[
        \operatorname{sign}(d_j^\top y)=\sigma_j,
        \qquad j\notin A.
\]
On this neighborhood, the original proximal objective restricted to
\(\ker D_A\) agrees with
\[
        y\longmapsto
        \frac{1}{2\lambda}\|x-y\|^2
        +\gamma\sum_{j\notin A}\sigma_j d_j^\top y.
\]
Hence \(p\) is a local minimizer of this function on \(\ker D_A\).
Since the function is strongly convex on \(\ker D_A\), this local
minimizer is its unique global minimizer. Completing the square
therefore gives
\begin{equation}\label{eq:generalized-lasso-cell-problem}
\begin{aligned}
        p_\lambda(x)
        &=
        \arg\min_{y:\,D_Ay=0}
        \left\{
        \frac{1}{2\lambda}\|x-y\|^2
        +\gamma\sum_{j\notin A}\sigma_j d_j^\top y
        \right\}                                                     \\
        &=
        P_{\ker D_A}
        \left(
        x-\lambda\gamma\sum_{j\notin A}\sigma_j d_j
        \right).
\end{aligned}
\end{equation}
Since \(A\) and the signs \(\sigma_j\) are fixed on the cell, the
right-hand side is affine in \(x\). Hence
\[
        Dp_\lambda(x)=P_{\ker D_A}
\]
throughout the cell.
By \Cref{lem:moreau-regularity},
\[
        H_\lambda(x)=\lambda^{-1}\bigl(\Id-Dp_\lambda(x)\bigr)
\]
at every differentiability point of \(p_\lambda\).  Since \(P_{\ker D_A}\) is the orthogonal
projection onto \(\ker D_A\), the matrix \(\Id-P_{\ker D_A}\) is the orthogonal projection onto
\((\ker D_A)^\perp=\operatorname{range}(D_A^\top)\), whose trace is \(\rank(D_A)\).  Therefore
\[
        a_\lambda(x)=\operatorname{tr}H_\lambda(x)
        =
        \frac{1}{\lambda}\operatorname{tr}(\Id-P_{\ker D_A})
        =
        \frac{\rank(D_A)}{\lambda}.
\]

It remains to locate the points where this curvature can occur.  The optimality condition for the
original proximal problem gives
\begin{equation}\label{eq:generalized-lasso-kkt}
        x-p_\lambda(x)=\lambda\gamma D^\top z,
        \qquad
        z_j\in \partial |(Dp_\lambda(x))_j|,
        \qquad
        |z_j|\le 1 .
\end{equation}
If \(j\in A(p_\lambda(x))\), then \(d_j^\top p_\lambda(x)=0\).  Multiplying
\eqref{eq:generalized-lasso-kkt} by \(d_j^\top\) gives
\[
        |d_j^\top x|
        =
        \lambda\gamma |d_j^\top D^\top z|
        \le
        \lambda\gamma\sum_{\ell=1}^J |d_j^\top d_\ell|\,|z_\ell|
        \le
        \lambda\gamma R_j .
\]
Thus every active row \(j\) forces \(x\) to lie in the row slab
\[
        \{|d_j^\top x|\le \lambda\gamma R_j\}.
\]
At differentiability points in the above cells,
\[
        a_\lambda(x)
        =
        \frac{\rank(D_{A(p_\lambda(x))})}{\lambda}
        \le
        \frac{|A(p_\lambda(x))|}{\lambda}
        \le
        \frac1\lambda
        \sum_{j=1}^J
        \mathbf 1_{\{|d_j^\top x|\le \lambda\gamma R_j\}} .
\]
Since \(g(x)=\gamma\|Dx\|_1\) is polyhedral convex, its proximal map
\(p_\lambda\) is piecewise affine; see, e.g.,
\cite[Proposition~12.30]{rockafellar1998variational}. It follows that,
outside a Lebesgue-null set, the active set and all inactive signs
are locally constant. Hence almost every \(x\) lies in a prox-cell,
and the preceding inequality holds for Lebesgue-a.e. \(x\). This
proves \eqref{eq:generalized-lasso-curvature-slab}. Finally,
\[
        a_\lambda(x)
        =
        \frac{\rank(D_{A(p_\lambda(x))})}{\lambda}
        \le
        \frac{\rank(D)}{\lambda}
\]
for a.e. \(x\).  This gives \eqref{eq:generalized-lasso-Mlambda}.
\end{proof}

\begin{proposition}[Generalized lasso reference trace]
\label{prop:generalized-lasso-reference-trace}
Let
\begin{equation}\label{eq:generalized-lasso-AD}
        A_D:=4\gamma\sum_{j=1}^J\frac{R_jB_j}{\|d_j\|}.
\end{equation}
There exist universal constants \(c,C>0\) such that the following holds.  For
\(0<\varepsilon\le1\), assume \(G>0\), set
\begin{equation}\label{eq:generalized-lasso-lambda-choice}
        \lambda=\frac{2\varepsilon}{G^2},
\end{equation}
and choose
\begin{equation}\label{eq:generalized-lasso-step-choice}
        h
        =c\min\left\{
              L_f^{-1},
              \frac{\varepsilon^2}{\tau_f+G^2+A_D},
              \frac{\varepsilon\lambda}{\rank(D)}
        \right\}.
\end{equation}
Then
\begin{equation}\label{eq:generalized-lasso-AD-bound}
        B_{\rm ref}\le A_D,
        \qquad
        M_\lambda= \frac{\rank(D)}{\lambda}.
\end{equation}
If
\begin{equation}\label{eq:generalized-lasso-N-choice}
        N\ge \frac{C}{mh}
        \log\left(1+\frac{m\Phi_0(\lambda,h)}{\varepsilon^2}\right),
\end{equation}
then \(\sqrt m W_2(\mu_N,\pi)\le\varepsilon\).  Equivalently, with the step size
\eqref{eq:generalized-lasso-step-choice}, it is sufficient to take
\begin{equation}\label{eq:generalized-lasso-complexity}
        N_D(\varepsilon)
        \le
        \frac{C}{m}
        \left[
              L_f+
              \frac{\tau_f+A_D+\rank(D)G^2}{\varepsilon^2}
        \right]
        \log\left(1+\frac{m\Phi_0(\lambda,h)}{\varepsilon^2}\right).
\end{equation}
\end{proposition}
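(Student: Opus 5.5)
The argument will follow the template of \Cref{prop:separable-pl-reference-trace}: a curvature bound from \Cref{lem:generalized-lasso-prox-cell-trace}, slab probabilities from \Cref{cor:reftrace-slab}, then substitution into \Cref{thm:main-complete}. First I would integrate the a.e. slab bound \eqref{eq:generalized-lasso-curvature-slab} along the reference heat path. For every $t>0$ the law of $Y_t^{\rm ref}$ has a density, so the a.e. inequality holds under that law, and
\[
B_{\rm ref}\le \frac{1}{\lambda h}\sum_{j=1}^J\int_0^h \Pp\bigl\{|d_j^\top Y_t^{\rm ref}|\le \lambda\gamma R_j\bigr\}\,\dd t .
\]
This is \Cref{prop:reftrace-tube-control} with hyperplanes $\Sigma_j=\ker d_j^\top$, exponent $q_j=1$, and $b_\lambda=0$.

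Second, I would bound each slab probability using \eqref{eq:reftrace-slab-prob-linear-form} with $d_0=d_j$, $a=0$, $r=\lambda\gamma R_j$, and $v=u_j$. This requires the constants $L_{u_j}$, $G_{u_j}$, $B_{u_j}$ of \eqref{eq:reftrace-directional-L}--\eqref{eq:reftrace-BE}, and I must check they are dominated by $L_j$, $G_j$, $B_j$. The equality $L_{u_j}=L_j$ holds by definition, since $\nabla^2 f\succeq 0$. For $G_{u_j}$, a subgradient of $g$ has the form $s=\gamma D^\top z$ with $\|z\|_\infty\le1$. Then $|\langle u_j,s\rangle|=\gamma|\langle Du_j,z\rangle|\le\gamma\|Du_j\|_1=G_j$, so $G_{u_j}\le G_j$. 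Since $B_E$ is increasing in $L_E$ and $G_E$, this gives $B_{u_j}\le B_j$. The step size \eqref{eq:generalized-lasso-step-choice} implies \eqref{eq:examples-density-compatibility}, so $\alpha_{u_j}\ge 1/2$ as explained at the start of the section. Hence
\[
\Pp\bigl\{|d_j^\top Y_t^{\rm ref}|\le \lambda\gamma R_j\bigr\}\le \frac{4\lambda\gamma R_jB_j}{\|d_j\|}
\]
uniformly in $t$. Multiplying by $1/\lambda$, averaging in time, and summing over $j$ gives $B_{\rm ref}\le A_D$. The choice $M_\lambda=\rank(D)/\lambda$ is \eqref{eq:generalized-lasso-Mlambda}.

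Finally, I would invoke \Cref{thm:main-complete} with $\lambda=2\varepsilon/G^2$, the symmetric split, $B_{\rm ref}\le A_D$, and $M_\lambda=\rank(D)/\lambda$. The third entry of \eqref{eq:main-h-choice} becomes $\varepsilon\lambda/\rank(D)$ up to constants, which matches \eqref{eq:generalized-lasso-step-choice}. The theorem's requirement $h\le c\varepsilon^2/(\tau_f+G^2+B_{\rm ref})$ is implied by $h\le c\varepsilon^2/(\tau_f+G^2+A_D)$, because $B_{\rm ref}\le A_D$. Taking reciprocals and using $\rank(D)/(\varepsilon\lambda)=\rank(D)G^2/(2\varepsilon^2)$ yields \eqref{eq:generalized-lasso-complexity}.

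The main point to check carefully is the verification that $G_{u_j}\le G_j$ via the subdifferential representation of $\gamma\|D\cdot\|_1$. Using the global $G$ instead would still work but would not give the stated constant $B_j$. A second, milder point is ensuring the universal constant $c$ is small enough that the density-compatibility condition holds simultaneously for all rows; this follows because the argument for \eqref{eq:examples-density-compatibility} uses only $L_E\le L_f$ and does not depend on $j$.
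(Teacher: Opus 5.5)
Your proposal is correct and follows the paper's proof step for step. You integrate the a.e. slab bound of \Cref{lem:generalized-lasso-prox-cell-trace} along the reference path, apply \Cref{cor:reftrace-slab} with $\alpha_{u_j}\ge 1/2$ from \eqref{eq:examples-density-compatibility}, and substitute into \Cref{thm:main-complete}. Your explicit check that $L_{u_j}=L_j$ and $G_{u_j}\le G_j$, hence $B_{u_j}\le B_j$, fills in a detail the paper leaves implicit when it uses $B_j$ directly in the slab bound.
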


\begin{proof}
By \Cref{lem:generalized-lasso-prox-cell-trace},
\[
        B_{\rm ref}
        \le
        \frac1{\lambda h}\sum_{j=1}^J\int_0^h
        \Pp\{|d_j^\top Y_t^{\rm ref}|\le \lambda\gamma R_j\}\,\dd t.
\]
The step size \eqref{eq:generalized-lasso-step-choice} implies \eqref{eq:examples-density-compatibility}.
By \Cref{cor:reftrace-slab}, 
\[
        \Pp\{|d_j^\top Y_t^{\rm ref}|\le \lambda\gamma R_j\}
        \le
        \frac{4\lambda\gamma R_jB_j}{\|d_j\|},
\]
uniformly in \(t\).  Substitution proves \(B_{\rm ref}\le A_D\).  The bound on
\(M_\lambda\) is \eqref{eq:generalized-lasso-Mlambda}.  The choices
\eqref{eq:generalized-lasso-lambda-choice} and \eqref{eq:generalized-lasso-step-choice} are
\Cref{thm:main-complete} with \(B_{\rm ref}\le A_D\) and
\(M_\lambda= \rank(D)/\lambda\), up to universal constants.  Since
\(\rank(D)/(\varepsilon\lambda)=\rank(D)G^2/(2\varepsilon^2)\), the complexity display follows.
\end{proof}

\begin{corollary}[One-dimensional anisotropic total variation]
\label{cor:anisotropic-tv-reference-trace}
Let \(D\in\R^{(d-1)\times d}\) be the first-difference matrix,
\begin{equation}\label{eq:first-difference-D}
        d_j=e_j-e_{j+1},
        \qquad j=1,\ldots,d-1.
\end{equation}
Then \(\|d_j\|=\sqrt2\), \(\rank(D)=d-1\), and \(R_j\le4\) for every row.  If
\(B_j\le B_{\rm tv}\) for every \(j\), set
\begin{equation}\label{eq:anisotropic-tv-Atv}
        A_{\rm tv}:=8\sqrt2\,\gamma B_{\rm tv}(d-1).
\end{equation}
There exist universal constants \(c,C>0\) such that, for \(0<\varepsilon\le1\) and \(G>0\), set
\begin{equation}\label{eq:anisotropic-tv-lambda-choice}
        \lambda=\frac{2\varepsilon}{G^2},
\end{equation}
and choose
\begin{equation}\label{eq:anisotropic-tv-step-choice}
        h
        =c\min\left\{
              L_f^{-1},
              \frac{\varepsilon^2}{\tau_f+G^2+A_{\rm tv}},
              \frac{\varepsilon\lambda}{d-1}
        \right\}.
\end{equation}
Then
\begin{equation}\label{eq:anisotropic-tv-bref}
        B_{\rm ref}\le A_{\rm tv},
        \qquad
        M_\lambda= \frac{d-1}{\lambda}.
\end{equation}
If
\begin{equation}\label{eq:anisotropic-tv-N-choice}
        N\ge \frac{C}{mh}
        \log\left(1+\frac{m\Phi_0(\lambda,h)}{\varepsilon^2}\right),
\end{equation}
then \(\sqrt m W_2(\mu_N,\pi)\le\varepsilon\).  In particular,
\begin{equation}\label{eq:anisotropic-tv-complexity}
        N_{\rm tv}(\varepsilon)
        \le
        \frac{C}{m}
        \left[
              L_f+
              \frac{\tau_f+A_{\rm tv}+dG^2}{\varepsilon^2}
        \right]
        \log\left(1+\frac{m\Phi_0(\lambda,h)}{\varepsilon^2}\right).
\end{equation}
\end{corollary}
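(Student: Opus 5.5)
The plan is to specialize \Cref{prop:generalized-lasso-reference-trace} to the first-difference matrix and then verify the elementary geometric constants. First I check the stated facts about $D$. Each row $d_j=e_j-e_{j+1}$ has $\|d_j\|=\sqrt2$. The rows are linearly independent: for $\sum_j c_jd_j=0$, the $e_1$ coordinate forces $c_1=0$, and an induction along the chain gives all $c_j=0$. Hence $\rank(D)=d-1$. For the overlap constant I compute $d_j^\top d_\ell$ directly. It equals $2$ when $\ell=j$, equals $-1$ when $|\ell-j|=1$, and vanishes otherwise. Therefore
\[
R_j=\sum_{\ell}|d_j^\top d_\ell|\le 2+1+1=4 ,
\]
with fewer neighbours at the two ends of the chain.

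Next I substitute these values into the constant $A_D$ of \Cref{prop:generalized-lasso-reference-trace}. Using $R_j\le4$, $\|d_j\|=\sqrt2$ and the hypothesis $B_j\le B_{\rm tv}$, each summand satisfies
\[
\frac{4\gamma R_jB_j}{\|d_j\|}\le \frac{16\gamma B_{\rm tv}}{\sqrt2}=8\sqrt2\,\gamma B_{\rm tv}.
\]
Summing over the $J=d-1$ rows gives $A_D\le A_{\rm tv}$, and \Cref{lem:generalized-lasso-prox-cell-trace} gives $M_\lambda=\rank(D)/\lambda=(d-1)/\lambda$.

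One point needs care here. The step size in \eqref{eq:anisotropic-tv-step-choice} is written with $A_{\rm tv}$ in place of $A_D$. Since $A_D\le A_{\rm tv}$, this choice is no larger than the step size of \Cref{prop:generalized-lasso-reference-trace}. The density compatibility \eqref{eq:examples-density-compatibility} therefore still holds, and so do all the step-size restrictions of \Cref{thm:main-complete} with $B_{\rm ref}\le A_D\le A_{\rm tv}$. This is because \eqref{eq:main-h-choice} only requires $h$ to lie below a minimum.

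The conclusion $\sqrt m\,W_2(\mu_N,\pi)\le\varepsilon$ then follows from \Cref{thm:main-complete} with the symmetric split and $\lambda=2\varepsilon/G^2$. For the complexity I take reciprocals of the step size and use
\[
\frac{d-1}{\varepsilon\lambda}=\frac{(d-1)G^2}{2\varepsilon^2}\le \frac{dG^2}{\varepsilon^2}.
\]
The $G^2/\varepsilon^2$ term inside the second entry of the minimum is likewise absorbed into $dG^2/\varepsilon^2$.

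I expect no real obstacle, since everything is substitution. The only places requiring attention are the monotonicity argument that justifies replacing $A_D$ by the larger $A_{\rm tv}$ in the step size, and the boundary rows $j=1$ and $j=d-1$, where $R_j=3$. That value is still bounded by $4$.
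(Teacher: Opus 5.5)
Your proposal is correct and follows the paper's proof: check \(\|d_j\|=\sqrt2\), \(\rank(D)=d-1\) and \(R_j\le 2+1+1=4\), bound \(A_D\le A_{\rm tv}\) term by term, then specialize \Cref{prop:generalized-lasso-reference-trace} with \(M_\lambda=(d-1)/\lambda\). The paper leaves two points implicit that you make explicit, namely the chain argument for the rank and the observation that a step size built from \(A_{\rm tv}\ge A_D\) is smaller and so still satisfies the density-compatibility and step-size conditions of \Cref{thm:main-complete}.
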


\begin{proof}
For first-difference rows, \(\|e_j-e_{j+1}\|=\sqrt2\).  Also \(d_j^\top d_j=2\), and the only
nonzero cross-products with other rows are the neighboring ones, equal to \(-1\) when they exist.
Thus \(R_j\le2+1+1=4\).  Applying \eqref{eq:generalized-lasso-AD} and using
\(\sum_{j=1}^{d-1}B_j\le B_{\rm tv}(d-1)\) gives \(A_{\rm tv}\).  The step-size and iteration
statements are the specialization of \Cref{prop:generalized-lasso-reference-trace} with
\(\rank(D)=d-1\).
\end{proof}

\begin{remark}[Improved dependence on the target accuracy]
\label{rem:examples-global-trace-comparison}
The main gain in the examples above is that \(B_{\rm ref}\) is bounded
independently of \(\lambda\). If one instead used only the global bound
\(B_{\rm ref}\leq M_\lambda=O(\lambda^{-1})\), then the term
\(B_{\rm ref}/\epsilon^2\) in \Cref{thm:main-complete} would scale as
\(\epsilon^{-3}\) under the choice \(\lambda=2\epsilon/G^2\).
The structured active-trace estimates remove this cubic contribution.
The remaining term \(M_\lambda/\epsilon\) scales as \(\epsilon^{-2}\),
yielding the \(\widetilde O(\epsilon^{-2})\) dependence displayed above.
\end{remark}

\section{Conclusion}
\label{sec:conclusion}

We developed an active-trace analysis of the classical fixed-\(\lambda\) MYULA kernel for strongly log-concave composite targets. The main bound replaces global Moreau-curvature control by an occupation-weighted trace along a reference heat path. Combined with the Moreau-bias estimate, this gives end-to-end Wasserstein guarantees for the original nonsmooth target and, for the structured penalties considered here, \(\widetilde O(\varepsilon^{-2})\) accuracy dependence without changing the MYULA transition. The same occupation-weighted viewpoint may be useful for other smoothing-based Langevin methods.

\appendix

\section{Real-Analysis Primer for A.e. and Weak Hessians}
\label{app:ae-hessian}

This appendix fixes the real-analysis conventions underlying the
nonsmooth second-order notation used in the main text. We recall only the
standard facts needed to interpret the Hessian of a \(C^{1,1}\) function as
an a.e. derivative, equivalently as an \(L^\infty\) weak Hessian, and to use
the corresponding symmetry, positivity, and matrix bounds. Throughout the
paper, Hessians of Moreau envelopes are understood in this a.e./weak sense;
no everywhere classical \(C^2\) regularity is assumed. Standard references
for these facts include \cite{evans2025measure,leoni2017first}.

\subsection{Rademacher's theorem}
\label{app:rademacher}

A map $F:\R^{d_1}\to\R^{d_2}$ is locally Lipschitz continuous if for each compact $K\subset\R^d$ there is $L_K<\infty$ such that
\[
   \norm{F(x)-F(y)}\le L_K\norm{x-y},\qquad x,y\in K.
\]
Rademacher's theorem states that every locally Lipschitz continuous map between Euclidean spaces is
differentiable Lebesgue-a.e \cite[Theorem 3.2]{evans2025measure}.  In this paper it is used with
$F=\nabla V$, where $V\in C^{1,1}$.
Thus, although a Lipschitz gradient need not be differentiable everywhere, its
Jacobian exists outside a null set.

\subsection{A.e. Hessians, weak derivatives, and the $L^\infty$ weak Hessian}
\label{app:weak-derivatives}

Let $V\in C^{1,1}(\R^d)$.  By definition, $V\in C^1$ and $\nabla V$ is
Lipschitz.  At a point where the Lipschitz map $\nabla V$ is differentiable, we
define the classical a.e. Hessian by
\[
   \nabla^2V(x):=D(\nabla V)(x).
\]
Rademacher's theorem implies that this definition applies for a.e. $x$.

There is an equivalent weak-derivative interpretation.  For a locally integrable
function $u$, a function $w$ is the weak derivative $\partial_j u$ if
\[
   \int_{\R^d} u(x)\,\partial_j\varphi(x)\dd x
   =-\int_{\R^d} w(x)\varphi(x)\dd x
\]
for every smooth compactly supported test function $\varphi$.  When
$u=\partial_iV$, these weak derivatives form a matrix
$H_V=(H_{ij})$.  This matrix is called the weak Hessian of $V$.

Since $\nabla V$ is Lipschitz, its a.e.
Jacobian is bounded by $\Lip(\nabla V)$, and the a.e. Hessian belongs to
$L^\infty$. The weak Hessian and the a.e.
Jacobian $D(\nabla V)$ agree up to null sets, since the Lipschitz gradient is absolutely continuous on every line, allowing one-dimensional integration by parts.  Therefore, in the paper, the
phrases ``a.e. Hessian'' and ``$L^\infty$ weak Hessian'' refer to the same
matrix-valued object, viewed from two equivalent perspectives.

\subsection{Symmetry, convexity, and matrix bounds}
\label{app:hessian-matrix-bounds}

For a $C^2$ function, the Hessian is symmetric because mixed partial derivatives
commute.  For $V\in C^{1,1}$, the same statement holds in the weak sense:
\[
   H_{ij}=H_{ji}\qquad\text{a.e.}
\]
One way to see this is to mollify.  Let $\rho_\varepsilon$ be a smooth
mollifier and set $V_\varepsilon=\rho_\varepsilon*V$.  Then
$V_\varepsilon\in C^\infty$, so $\nabla^2V_\varepsilon$ is symmetric.  At every point where the Lipschitz map
\(\nabla V\) is differentiable, mollification recovers the derivative:
\[
\nabla^2V_\varepsilon(x)\to D(\nabla V)(x).
\]
Indeed, near such a point, \(\nabla V\) is well approximated by its linear
part, and the mollifier averages over a ball whose radius tends to zero.
Since \(\nabla V\) is differentiable a.e., the a.e. Hessian is the
pointwise a.e. limit of symmetric matrices. Therefore \(H_{ij}=H_{ji}\)
a.e.

If $V$ is convex, then the gradient is monotone:
\[
   \ip{\nabla V(x)-\nabla V(y)}{x-y}\ge0.
\]
At a point where $\nabla V$ is differentiable, take $y=x+tv$, divide by $t^2$,
and let $t\to0$.  This gives
\[
   v^\top\nabla^2V(x)v\ge0,
\]
so the a.e. Hessian is positive semidefinite.

If, in addition, $\Lip(\nabla V)\le L$, then at every differentiability point,
for every unit vector $v$,
\[
   \norm{D(\nabla V)(x)v}
   =\lim_{t\to0}\left\Vert\frac{\nabla V(x+tv)-\nabla V(x)}{t}\right\Vert
   \le L.
\]
Since the Hessian is symmetric and positive semidefinite, all its eigenvalues
lie in $[0,L]$.  Equivalently,
\[
   0\preceq \nabla^2V(x)\preceq L\Id_d\qquad\text{for a.e. }x.
\]

\section{Details for Moreau Weak Second-Order Regularity}
\label{app:moreau-second-order}

\begin{proof}[Justification of \Cref{lem:moreau-regularity}]

We first quote the standard first-order Moreau--Yosida facts. For a proper,
lower semicontinuous, convex function \(g\), the proximal map $p_\lambda$ is single-valued on \(\R^d\), and the Moreau envelope \(g_\lambda\) is
finite-valued, convex, and belongs to \(C^{1,1}(\R^d)\). Moreover,
\[
\nabla g_\lambda(x)=\frac{x-p_\lambda(x)}{\lambda},
\qquad
\Lip(\nabla g_\lambda)\le \lambda^{-1}.
\]
See, for example,
\cite[Chs.~12 and 23]{bauschke2020correction} and
\cite[Ch.~1.G]{rockafellar1998variational}.

It remains to interpret the second-order statements. Since \(g_\lambda\) is
convex and \(C^{1,1}\), the general facts reviewed in \Cref{app:ae-hessian} apply with
\(V=g_\lambda\) and \(L=\lambda^{-1}\). Hence there exists a set
\(E_\lambda\subset\R^d\), whose complement is Lebesgue-null, such that
\(\nabla g_\lambda\) is differentiable at every \(x\in E_\lambda\), and, with
\[
\nabla^2 g_\lambda(x):=D(\nabla g_\lambda)(x),
\]
the matrix \(\nabla^2 g_\lambda(x)\) is symmetric and satisfies
\[
0\preceq \nabla^2 g_\lambda(x)\preceq \lambda^{-1}I,
\qquad x\in E_\lambda.
\]

Finally, the gradient formula gives
\[
p_\lambda=\mathrm{Id}-\lambda\nabla g_\lambda.
\]
Therefore \(p_\lambda\) is differentiable at every \(x\in E_\lambda\), and
\[
Dp_\lambda(x)
=
I-\lambda\nabla^2 g_\lambda(x).
\]
Equivalently,
\[
\nabla^2 g_\lambda(x)
=
\lambda^{-1}\{I-Dp_\lambda(x)\}.
\]
\end{proof}

\begin{proof}[Proof of \Cref{lem:moreau-gradient-bound}]
Let $p=p_\lambda(x)$.  The proximal optimality condition gives
$\nabla g_\lambda(x)=(x-p)/\lambda\in\partial g(p)$.  It is therefore enough to show that every subgradient of \(g\) has norm at
most \(G\).
If $v\in\partial g(p)$ and
$u$ is a unit vector, the subgradient inequality and Lipschitzness imply, for
$t>0$,
\[
   g(p)+t\ip{v}{u}\le g(p+tu)\le g(p)+Gt .
\]
Hence $\ip{v}{u}\le G$. Taking
\(u=v/\|v\|\) when \(v\ne0\) yields \(\|v\|\le G\). Therefore
\(\|\nabla g_\lambda(x)\|\le G\). 
\end{proof}

\section{Proof of the Heat Identity for \texorpdfstring{$C^{1,1}$}{C1,1} Test Functions}
\label{app:heat-identity}

\begin{proof}[Proof of \Cref{lem:heat-energy}]
We use the Itô--Krylov formula, i.e. Itô's formula with generalized
second derivatives; see \cite[Ch.~2, Sec.~10, Thm.~1]{krylov1980controlled}.

Let \(L=\operatorname{Lip}(\nabla V)\). Since \(\nabla V\) is globally
\(L\)-Lipschitz, the weak Hessian satisfies
\[
   \|H_V(x)\|_{\op}\le L
   \qquad\text{for a.e. }x,
\]
and hence
\[
   |\Delta V(x)|\le dL
   \qquad\text{for a.e. }x.
\]
Moreover,
\[
   \|\nabla V(x)\|\le \|\nabla V(0)\|+L\|x\|,
\]
and the fundamental theorem of calculus along the segment from \(0\) to \(x\)
gives
\[
   |V(x)|\le C_V(1+\|x\|^2)
\]
for some constant \(C_V<\infty\). Thus \(V(Y_t)\) is integrable for
\(t\in[0,h]\), since \(Y_0\in L^2\).

For \(R>0\), let
\[
   \tau_R:=\inf\{t\ge 0:\|Y_t\|\ge R\}.
\]
On the ball \(B_R\), the function \(V\) satisfies the hypotheses of the
Itô--Krylov formula: its first derivatives are continuous and its generalized
second derivatives are locally square-integrable. Applying the formula to the
stopped process \(Y_{t\wedge\tau_R}\) gives
\[
  V(Y_{h\wedge\tau_R})-V(Y_0)
  =
  \sqrt{2}\int_0^{h\wedge\tau_R}
       \langle \nabla V(Y_t),dW_t\rangle
  +
  \int_0^{h\wedge\tau_R}\Delta V(Y_t)\,dt .
\]
The stochastic integral has mean zero, since the integrand is bounded on
\([0,\tau_R]\). Therefore
\[
  \E V(Y_{h\wedge\tau_R})-\E V(Y_0)
  =
  \E\int_0^{h\wedge\tau_R}\Delta V(Y_t)\,dt .
\]

Letting \(R\to\infty\), we have \(\tau_R\to\infty\) a.s. The left-hand side
converges to \(\E V(Y_h)-\E V(Y_0)\) by dominated convergence, using the
quadratic growth of \(V\) and
\[
   \E\sup_{0\le t\le h}\|Y_t\|^2<\infty .
\]
The right-hand side converges to
\[
   \int_0^h \E\,\Delta V(Y_t)\,dt
\]
by dominated convergence, since \(|\Delta V|\le dL\) a.e. This proves the
identity.

\end{proof}

\bibliographystyle{plain}
\bibliography{reference}

\begin{thebibliography}{10}

\bibitem{bauschke2020correction}
Heinz~H Bauschke and Patrick~L Combettes.
\newblock Correction to: convex analysis and monotone operator theory in hilbert spaces.
\newblock In {\em Convex analysis and monotone operator theory in Hilbert spaces}, pages C1--C4. Springer, 2020.

\bibitem{brosse2017sampling}
Nicolas Brosse, Alain Durmus, {\'E}ric Moulines, and Marcelo Pereyra.
\newblock Sampling from a log-concave distribution with compact support with proximal langevin monte carlo.
\newblock In {\em Conference on learning theory}, pages 319--342. PMLR, 2017.

\bibitem{dalalyan2026improved}
Arnak~S Dalalyan and Avetik Karagulyan.
\newblock Improved guarantees for langevin monte carlo with average smoothness.
\newblock {\em arXiv preprint arXiv:2605.31413}, 2026.

\bibitem{durmus2019analysis}
Alain Durmus, Szymon Majewski, and B{\l}a{\.z}ej Miasojedow.
\newblock Analysis of langevin monte carlo via convex optimization.
\newblock {\em Journal of Machine Learning Research}, 20(73):1--46, 2019.

\bibitem{durmus2017nonasymptotic}
Alain Durmus and Eric Moulines.
\newblock Nonasymptotic convergence analysis for the unadjusted langevin algorithm.
\newblock 2017.

\bibitem{durmus2018efficient}
Alain Durmus, Eric Moulines, and Marcelo Pereyra.
\newblock Efficient bayesian computation by proximal markov chain monte carlo: when langevin meets moreau.
\newblock {\em SIAM Journal on Imaging Sciences}, 11(1):473--506, 2018.

\bibitem{eftekhari2023forward}
Armin Eftekhari, Luis Vargas, and Konstantinos~C Zygalakis.
\newblock The forward--backward envelope for sampling with the overdamped langevin algorithm.
\newblock {\em Statistics and Computing}, 33(4):85, 2023.

\bibitem{ehrhardt2024proximal}
Matthias~J Ehrhardt, Lorenz Kuger, and Carola-Bibiane Sch{\"o}nlieb.
\newblock Proximal langevin sampling with inexact proximal mapping.
\newblock {\em SIAM Journal on Imaging Sciences}, 17(3):1729--1760, 2024.

\bibitem{evans2025measure}
Lawrence~C Evans.
\newblock {\em Measure theory and fine properties of functions}.
\newblock Chapman and Hall/CRC, 2025.

\bibitem{ghaderi2024smoothing}
Susan Ghaderi, Masoud Ahookhosh, Adam Arany, Alexander Skupin, Panagiotis Patrinos, and Yves Moreau.
\newblock Smoothing unadjusted langevin algorithms for nonsmooth composite potential functions.
\newblock {\em Applied Mathematics and Computation}, 464:128377, 2024.

\bibitem{habring2026diffusion}
Andreas Habring, Alexander Falk, Martin Zach, and Thomas Pock.
\newblock Diffusion at absolute zero: Langevin sampling using successive moreau envelopes.
\newblock {\em SIAM Journal on Imaging Sciences}, 19(1):35--77, 2026.

\bibitem{habring2024subgradient}
Andreas Habring, Martin Holler, and Thomas Pock.
\newblock Subgradient langevin methods for sampling from nonsmooth potentials.
\newblock {\em SIAM Journal on Mathematics of Data Science}, 6(4):897--925, 2024.

\bibitem{klatzer2024accelerated}
Teresa Klatzer, Paul Dobson, Yoann Altmann, Marcelo Pereyra, Jesus~Maria Sanz-Serna, and Konstantinos~C Zygalakis.
\newblock Accelerated bayesian imaging by relaxed proximal-point langevin sampling.
\newblock {\em SIAM Journal on Imaging Sciences}, 17(2):1078--1117, 2024.

\bibitem{krylov1980controlled}
Nicolai~V Krylov.
\newblock {\em Controlled diffusion processes}.
\newblock Springer, 1980.

\bibitem{lau2022bregman}
Tim Tsz-Kit Lau and Han Liu.
\newblock Bregman proximal langevin monte carlo via bregman-moreau envelopes.
\newblock In {\em International Conference on Machine Learning}, pages 12049--12077. PMLR, 2022.

\bibitem{leoni2017first}
Giovanni Leoni.
\newblock {\em A first course in Sobolev spaces}.
\newblock American Mathematical Soc., 2017.

\bibitem{lewis2002active}
Adrian~S Lewis.
\newblock Active sets, nonsmoothness, and sensitivity.
\newblock {\em SIAM Journal on Optimization}, 13(3):702--725, 2002.

\bibitem{liang2017activity}
Jingwei Liang, Jalal Fadili, and Gabriel Peyr{\'e}.
\newblock Activity identification and local linear convergence of forward--backward-type methods.
\newblock {\em SIAM Journal on Optimization}, 27(1):408--437, 2017.

\bibitem{liu2026proximal}
Linghai Liu and Sinho Chewi.
\newblock A proximal gradient algorithm for composite log-concave sampling.
\newblock {\em arXiv preprint arXiv:2605.12461}, 2026.

\bibitem{mou2022efficient}
Wenlong Mou, Nicolas Flammarion, Martin~J Wainwright, and Peter~L Bartlett.
\newblock An efficient sampling algorithm for non-smooth composite potentials.
\newblock {\em Journal of Machine Learning Research}, 23(233):1--50, 2022.

\bibitem{otto2000generalization}
Felix Otto and C{\'e}dric Villani.
\newblock Generalization of an inequality by talagrand and links with the logarithmic sobolev inequality.
\newblock {\em Journal of Functional Analysis}, 173(2):361--400, 2000.

\bibitem{pereyra2016proximal}
Marcelo Pereyra.
\newblock Proximal markov chain monte carlo algorithms.
\newblock {\em Statistics and Computing}, 26(4):745--760, 2016.

\bibitem{pereyra2020accelerating}
Marcelo Pereyra, Luis~Vargas Mieles, and Konstantinos~C Zygalakis.
\newblock Accelerating proximal markov chain monte carlo by using an explicit stabilized method.
\newblock {\em SIAM Journal on Imaging Sciences}, 13(2):905--935, 2020.

\bibitem{rockafellar1998variational}
R~Tyrrell Rockafellar and Roger~JB Wets.
\newblock {\em Variational analysis}.
\newblock Springer, 1998.

\bibitem{salim2019stochastic}
Adil Salim, Dmitry Kovalev, and Peter Richt{\'a}rik.
\newblock Stochastic proximal langevin algorithm: Potential splitting and nonasymptotic rates.
\newblock {\em Advances in Neural Information Processing Systems}, 32, 2019.

\bibitem{salim2020primal}
Adil Salim and Peter Richtarik.
\newblock Primal dual interpretation of the proximal stochastic gradient langevin algorithm.
\newblock {\em Advances in Neural Information Processing Systems}, 33:3786--3796, 2020.

\bibitem{tibshirani2012degrees}
Ryan~J Tibshirani and Jonathan Taylor.
\newblock Degrees of freedom in lasso problems.
\newblock 2012.

\bibitem{vaiter2017degrees}
Samuel Vaiter, Charles Deledalle, Jalal Fadili, Gabriel Peyr{\'e}, and Charles Dossal.
\newblock The degrees of freedom of partly smooth regularizers.
\newblock {\em Annals of the Institute of Statistical Mathematics}, 69(4):791--832, 2017.

\bibitem{villani2009optimal}
C{\'e}dric Villani et~al.
\newblock {\em Optimal transport: old and new}, volume 338.
\newblock Springer, 2009.

\end{thebibliography}

\end{document}